\documentclass{article}

\usepackage{microtype}
\usepackage{graphicx}
\usepackage{subcaption}
\usepackage{booktabs} % for professional tables

\usepackage{hyperref}

\usepackage[accepted]{icml2026}

\usepackage[utf8]{inputenc} % allow utf-8 input
\usepackage[T1]{fontenc}    % use 8-bit T1 fonts
\usepackage{hyperref}       % hyperlinks
\usepackage{url}            % simple URL typesetting
\usepackage{booktabs}       % professional-quality tables
\usepackage{amsfonts}       % blackboard math symbols
\usepackage{nicefrac}       % compact symbols for 1/2, etc.
\usepackage{microtype}      % microtypography
\usepackage{xcolor}         % colors

\usepackage{relsize}
\usepackage{amsmath}
\usepackage{amssymb}
\usepackage{mathtools}
\usepackage{amsthm}
\usepackage{microtype}
\usepackage{graphicx}
\usepackage{subcaption}
\hypersetup{
    colorlinks=true,                
    linkcolor= red,                
    citecolor=[rgb]{0,0.5,0.73},
}

\usepackage{xspace}
\usepackage{float}
\usepackage{mathtools}
\usepackage{multirow}
\usepackage{makecell}
\usepackage{bm}
\usepackage{wrapfig}
\usepackage{bbding}
\usepackage{tablefootnote}
\usepackage{enumitem}
\usepackage{apxproof}
\usepackage{colortbl}
\usepackage{caption}
\usepackage[flushleft]{threeparttable}
\usepackage{mdframed}

\usepackage[capitalize,noabbrev]{cleveref}
\usepackage{etoc}

\theoremstyle{plain}
\newtheorem{theorem}{Theorem}[section]
\newtheorem{proposition}[theorem]{Proposition}
\newtheorem{lemma}[theorem]{Lemma}
\newtheorem{corollary}[theorem]{Corollary}
\theoremstyle{definition}
\newtheorem{definition}[theorem]{Definition}

\theoremstyle{remark}
\newtheorem{remark}[theorem]{Remark}

\makeatletter
\DeclareRobustCommand\onedot{\futurelet\@let@token\@onedot}
\def\@onedot{\ifx\@let@token.\else.\null\fi\xspace}

\makeatother
\usepackage[textsize=tiny]{todonotes}

\icmltitlerunning{Full Spectrum Graph Neural Network: Expressive and Scalable}

\usepackage{adjustbox}
\usepackage{makecell}
\usepackage{diagbox}
\usepackage{multirow}
\usepackage{hhline}
\usepackage[table]{xcolor}
\definecolor{softgreen}{RGB}{155,220,155}
\usepackage{hyperref}
\usepackage{nameref}

\begin{document}
\twocolumn[
  \icmltitle{Full-Spectrum Graph Neural Networks: Expressive and Scalable}

  % It is OKAY to include author information, even for blind submissions: the
  % style file will automatically remove it for you unless you've provided
  % the [accepted] option to the icml2026 package.

  % List of affiliations: The first argument should be a (short) identifier you
  % will use later to specify author affiliations Academic affiliations
  % should list Department, University, City, Region, Country Industry
  % affiliations should list Company, City, Region, Country

  % You can specify symbols, otherwise they are numbered in order. Ideally, you
  % should not use this facility. Affiliations will be numbered in order of
  % appearance and this is the preferred way.
  \icmlsetsymbol{equal}{*}

  \begin{icmlauthorlist}
    \icmlauthor{Xiaohan Wang}{equal,ntu}
    \icmlauthor{Deyu Bo}{equal,ntu}
    \icmlauthor{Longlong Li}{ntu}
    \icmlauthor{Kelin Xia}{ntu}
  \end{icmlauthorlist}

  \icmlaffiliation{ntu}{Division of Mathematical Sciences, School of Physical and Mathematical Sciences, Nanyang Technological University, Singapore 637371, Singapore}

  \icmlcorrespondingauthor{Kelin Xia}{xiakelin@ntu.edu.sg}

  % You may provide any keywords that you find helpful for describing your
  % paper; these are used to populate the "keywords" metadata in the PDF but
  % will not be shown in the document
  \icmlkeywords{Machine Learning, ICML}

  \vskip 0.3in
]

% this must go after the closing bracket ] following \twocolumn[ ...

% This command actually creates the footnote in the first column listing the
% affiliations and the copyright notice. The command takes one argument, which
% is text to display at the start of the footnote. The \icmlEqualContribution
% command is standard text for equal contribution. Remove it (just {}) if you
% do not need this facility.

% Use ONE of the following lines. DO NOT remove the command.
% If you have no special notice, KEEP empty braces:
% \printAffiliationsAndNotice{}  % no special notice (required even if empty)
% Or, if applicable, use the standard equal contribution text:
\printAffiliationsAndNotice{\icmlEqualContribution}

\begin{abstract}
It is well established that spectral graph neural networks (GNNs) can universally approximate node signals; however, their expressive power remains bounded by the 1-dimensional Weisfeiler--Lehman test, which is mirrored in their lack of universality for higher-order signals.
To go beyond this bound, we propose the Full-Spectrum GNNs (\textsc{FSpecGNN}s), a second-order generalization of classical spectral GNNs. \textsc{FSpecGNN} advances spectral filtering from two perspectives: (1) it lifts signals from the node domain to the node-pair domain; and (2) it extends the univariate spectral filter over eigenvalues to a bivariate filter over eigenvalue pairs.
We show that classical spectral GNNs arise as a diagonal special case of \textsc{FSpecGNN}s, and prove that \textsc{FSpecGNN}s can be at most as expressive as Local 2-GNN while universally approximating node-pair signals, the latter being particularly beneficial for heterophilic graph learning.
Moreover, \textsc{FSpecGNN} admits scalable implementations that avoid explicit node-pair-level computations; combined with a low-rank approximation that reduces full-spectrum convolution to a combination of polynomial spectral filters, it enables learning on large graphs.
Empirically, \textsc{FSpecGNN} validates the predicted expressivity and delivers strong performance on heterophilic benchmarks. Our code is available at \url{https://github.com/xwangxshi/FSpecGNN}.

\end{abstract}

\begin{figure}[ht]
    \centering
    \includegraphics[width=1\linewidth]{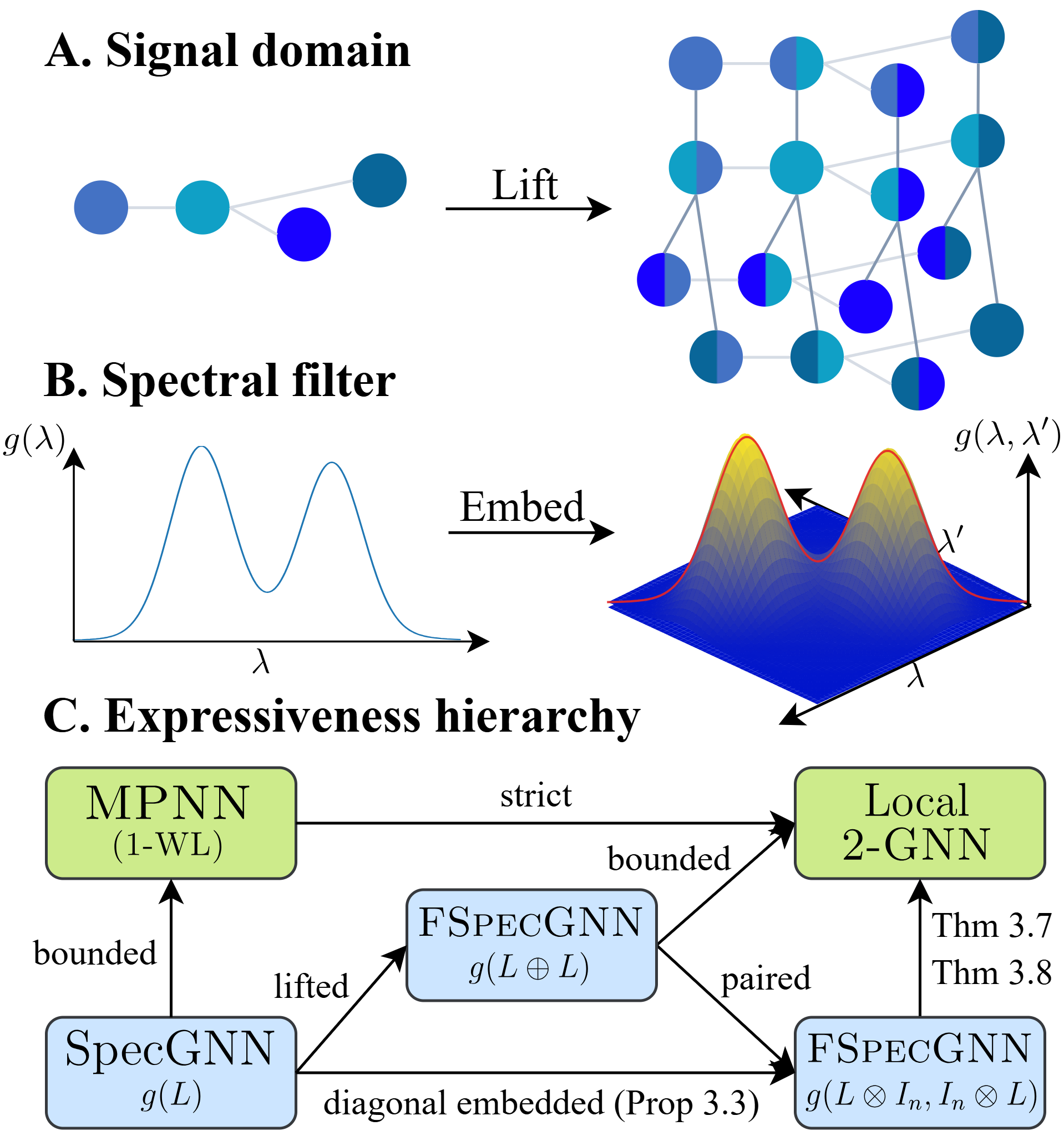}
    \caption{\textsc{FSpecGNN} generalizes spectral filtering by lifting signals to the node-pair domain and extending univariate eigenvalue filters to bivariate filters over eigenvalue pairs.
    Proposition~\ref{prop:diag-embed} embeds spectral GNN (SpecGNN) to the diagonal restriction of \textsc{FSpecGNN}, and Theorem~\ref{thm:order2bound} and Theorem~\ref{thm:refine-2lwl} establishes that \textsc{FSpecGNN} can be at most as expressive as Local 2-GNN.}
    \label{fig:main}
\end{figure}

\section{Introduction}
Graph Neural Networks (GNNs) are a standard paradigm for learning from data with underlying graph structure, combining feature transformations with graph propagations.
A central line of research studies the expressivity of GNNs, which provides a principled lens for comparing the representational power of different architectures in learning from graph-structured data.
A mainstream framework for such analyses characterizes expressivity by a model’s ability to distinguish non-isomorphic graphs, ranging from the 1-dimensional Weisfeiler--Lehman (1-WL) test~\cite{xupowerful} to more recent homomorphism-based expressivity~\cite{zhangbeyond}, yielding a well-established expressiveness hierarchy for GNNs.
Within this hierarchy, a standard way to surpass 1-WL is to lift message passing from the node domain to higher-order domains, such as node pairs $V\times V$ and, more generally, $k$-tuples~\cite{morris2019higherorder,morris2020sparse}.
In contrast, spectral GNNs, which parameterize propagation as Laplacian filtering, have a comparatively incomplete expressivity picture: while they are universal for node-signal approximation~\cite{wang2022powerful,roth2025MIMO}, their ability to distinguish non-isomorphic graphs remains bounded by 1-WL~\cite{wang2022powerful}.
This raises a natural question: \emph{what is the spectral counterpart of lifting that enables spectral GNNs to systematically go beyond 1-WL?}

\textbf{Our contributions.} 
We propose \emph{full-spectrum GNNs} (\textsc{FSpecGNN}s) as a second-order generalization of spectral GNNs; the same construction also extends directly to higher orders.
Concretely, instead of filtering node signals $x\in\mathbb{R}^V$, full-spectrum filtering operates on node-pair signals $\varepsilon\in\mathbb{R}^{V\times V}$ and generalizes classical spectral filtering
\begin{gather}
    U\big(g_\lambda\odot(U^\top x)\big) = \sum_i g(\lambda_i)\,u_i u_i^\top x \label{eq:1} \\
    \Downarrow \nonumber \\
    U\big(G_\lambda\odot (U^\top \varepsilon U)\big)U^\top = \sum_{i,j} g(\lambda_i,\lambda_j)\,u_i u_i^\top \varepsilon\, u_j u_j^\top \label{eq:2}
\end{gather}
where $L=U\Lambda U^\top$ is the eigendecomposition of the graph Laplacian with eigenvectors $U=[u_1,\ldots,u_n]$ and eigenvalues $\Lambda=\mathrm{diag}(\lambda_1,\ldots,\lambda_n)$.
\textsc{FSpecGNN} naturally extends the classic univariate spectral filter $g_\lambda = (g(\lambda_i))_{1\le i\le |V|}$ to a bivariate matrix $G_\lambda = (g(\lambda_i,\lambda_j))_{1\le i,j\le |V|}$.
Proposition~\ref{prop:diag-embed} further shows that Eq.~\eqref{eq:1} is recovered within Eq.~\eqref{eq:2} via a diagonal embedding.

\textsc{FSpecGNN} is \textbf{expressive}: it can surpass the 1-WL upper bound. In particular, under sufficient conditions, Theorem~\ref{thm:refine-2lwl} shows that \textsc{FSpecGNN} matches the distinguishing power of Local 2-GNN~\citep{zhangbeyond}. Complementarily, Theorem~\ref{thm:univ-approx} establishes a universal approximation result for $1$-dimensional node-pair signals.
\textsc{FSpecGNN} is also \textbf{scalable}:
Section~\ref{subsec:scale} discusses implementations of \textsc{FSpecGNN} to avoid explicit computations in the node-pair domain. We further propose a low-rank approximation in Eq.~\eqref{eq:lowrank}, which approximates a full-spectrum convolution using only a small number of
polynomial spectral filters (e.g., \textsc{BernConv}~\citep{he2021bernnet}).
This leads to implementations Eq.~\eqref{eq:rank1-impl} whose computational complexity is comparable to
that of existing spectral GNNs.

We further demonstrate the utility of \textsc{FSpecGNN} on heterophilic graph learning tasks, where adjacent nodes often carry different labels and thus naive neighborhood aggregation can be detrimental~\cite{luan2023graph}.
To understand heterophily from a spectral perspective, we study the optimal convolution that maximally separates representations across different labels. Theorem~\ref{thm:HDasym} shows that the resulting convolution suppresses (and, as the dimension grows to infinity, assigns zero weight to) entries corresponding to inter-class connections. We then prove a fundamental limitation: Theorem~\ref{thm:limitedex} establishes that such an optimal convolution cannot be realized within the class of classical spectral convolutions. In contrast, \textsc{FSpecGNN} can realize this optimal operator via a slight modification of our implementation.
Together, these results identify heterophily as an inherently second-order phenomenon and highlight the importance of scalable second-order architectures such as \textsc{FSpecGNN}.

Finally, we evaluate \textsc{FSpecGNN} on substructure-counting benchmarks~\citep{zhangbeyond}, where the empirical results align with our expressivity predictions. Moreover, \textsc{FSpecGNN} is substantially more efficient in practice, achieving roughly $5\times$ lower runtime than the corresponding spatial second-order baselines and the lowest peak GPU memory among all compared methods.
We also assess \textsc{FSpecGNN} on standard heterophilic node-classification benchmarks and demonstrate consistent gains over a range of representative baselines, while maintaining runtime and GPU consumption comparable to those of baseline models.

\section{Preliminaries}
\textbf{Notations.}
Let \(G=(V,E)\) be an undirected graph with \(|V|=n,\ |E|=m,\ m\ll n^2\).
The adjacency matrix is \(A \in \mathbb{R}^{n\times n}\), the degree matrix is
\(D \in \mathbb{R}^{n\times n}\), and node features are
represented by a matrix \(X \in \mathbb{R}^{n\times d}\). We further fix a graph Laplacian $L\in\mathbb{R}^{n\times n}$ by $D-A$ or $I-D^{-1/2}AD^{-1/2}$.
When all choices are admissible, we refer to any of them simply as $L$.

\textbf{Spectral GNNs.}
The Laplacian admits the eigendecomposition
\(
L = U \Lambda U^\top,
\)
where \(U=[u_1,\dots,u_n]\) is an orthonormal eigenvector matrix and
\(\Lambda=\mathrm{diag}(\lambda_1,\dots,\lambda_n)\) contains the eigenvalues. 
For a graph signal \(x \in \mathbb{R}^n\), the graph Fourier transform (GFT) and its inverse are
\(
\hat{x} = U^\top x,\ 
x = U \hat{x}.
\)
\begin{definition}
\label{def:diag-spec-filter}
Given a continuous function $g:\mathbb{R}\to\mathbb{R}$, the corresponding
\emph{spectral filter} is the vector
\[
g_\lambda\coloneqq(g(\lambda_1),g(\lambda_2),\cdots,g(\lambda_n)).
\]
Then, the \emph{spectral convolution} is defined as
\begin{align*}
    g_\lambda\ast_G x\coloneqq
    & U\big( g_\lambda\odot(U^\top\! x)\big)=Ug(\Lambda)U^\top x\\=
    &\sum_i g(\lambda_i)\,u_i u_i^\top x,
\end{align*}
where $g(\Lambda)\coloneqq\mathrm{diag}\bigl(g_\lambda\bigr)\in\mathbb{R}^{n\times n}$.
When graph convolution is of the form $g_\lambda\ast_G x$,
the resulting architecture is commonly referred to as a spectral GNN.
\end{definition}

A common choice of the function \(g\) is a
polynomial
\(
p(t) = \sum_{k=0}^{K} \theta_k t^k
\)
with learnable coefficients \(\{\theta_k\}_{k=0}^K\). Define the corresponding polynomial of Laplacian by $p(L) \coloneqq \sum_{k=0}^{K} \theta_k L^k$. With $L=U\Lambda U^\top$, we obtain
\[
p(L)
= U \big( \sum_{k=0}^{K} \theta_k \Lambda^k \big) U^\top\!
= U p(\Lambda) U^\top
\]
equivalently, we have
\(p(\boldsymbol{\lambda})\ast_G x=p(L)x,\)
which avoid computing the eigendecomposition explicitly. For simplicity, given an arbitrary function $g$, we denote the spectral convolutional matrix by $g(L)\coloneqq Ug(\Lambda)U^\top$, then similarly $g_\lambda\ast_G x=g(L)x$. 

\textbf{Tensor product of matrices.} 
To describe the second-order spectral framework, we introduce
\begin{definition}
\label{def:kron-vec}
Let $A\in\mathbb R^{m\times n}$ and $B\in\mathbb R^{p\times q}$.
Their \emph{tensor product}, a.k.a Kronecker product, is the block matrix
\[
A\otimes B
:=
\begin{bmatrix}
a_{11}B & \cdots & a_{1n}B\\
\vdots  & \ddots & \vdots\\
a_{m1}B & \cdots & a_{mn}B
\end{bmatrix}
\in\mathbb R^{(mp)\times(nq)}.
\]
For a matrix $X\in\mathbb R^{n\times n}$, the \emph{vectorization} map
is defined by column-stacking:
\[
\mathrm{vec}(X)
:=
\bigl[X_{:,1}^\top,\; X_{:,2}^\top,\; \dots,\; X_{:,n}^\top\bigr]^\top\in\mathbb{R}^{n^2},
\]
where $X_{:,i}\in\mathbb R^{n}$ denotes the $i$-th column of $X$.
\end{definition}

\begin{lemma}[\citet{van2000ubiquitous}] The Kronecker product and the vectorization map satisfy the following
\label{lem:vec-kron}
\mbox{}\par
\begin{enumerate}[topsep=0pt]
\item \(\mathrm{vec}:\mathbb{R}^{n\times n}\rightarrow\mathbb{R}^{n^2}\) is a linear space isomorphism.
\item $A,B,X\in\mathbb R^{n\times n}$,
\(
(B^\top\otimes A)\,\mathrm{vec}(X)
=
\mathrm{vec}(A X B).
\)
\item For $u,v\in\mathbb R^n$, $v\otimes u=\mathrm{vec}(u v^\top)$.
\end{enumerate}
\end{lemma}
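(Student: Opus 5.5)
The three claims follow from direct computation on matrix entries and basis elements. For the first, $\mathrm{vec}$ is linear, since stacking the columns of $\alpha X+\beta Y$ gives $\alpha\,\mathrm{vec}(X)+\beta\,\mathrm{vec}(Y)$. It is also a bijection between $n^2$-dimensional spaces: entry $X_{ij}$ is sent to coordinate $(j-1)n+i$, which is a bijection of index sets, so $\mathrm{vec}$ maps the standard basis $\{e_ie_j^\top\}$ onto the standard basis of $\mathbb{R}^{n^2}$. This gives the isomorphism.

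For the third claim, compare coordinates. The $j$-th column of $uv^\top$ is $v_j u$, so $\mathrm{vec}(uv^\top)$ is the stacked vector $[v_1u^\top,\dots,v_nu^\top]^\top$. By Definition~\ref{def:kron-vec} applied to $v\in\mathbb{R}^{n\times 1}$ and $u\in\mathbb{R}^{n\times 1}$, $v\otimes u$ is exactly the block column with blocks $v_ju$. The two vectors agree.

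The second claim is the only one needing a little bookkeeping. I would first prove it on rank-one matrices $X=xy^\top$ and then extend by linearity, since such matrices span $\mathbb{R}^{n\times n}$ and both sides are linear in $X$ (the left by linearity of $\mathrm{vec}$ and of matrix multiplication). For $X=xy^\top$,
\[
\mathrm{vec}(AXB)=\mathrm{vec}\big((Ax)(B^\top y)^\top\big)=(B^\top y)\otimes(Ax)
\]
by claim 3. The mixed-product property $(P\otimes Q)(r\otimes s)=(Pr)\otimes(Qs)$ then gives $(B^\top y)\otimes(Ax)=(B^\top\otimes A)(y\otimes x)$, and $y\otimes x=\mathrm{vec}(xy^\top)$ by claim 3 again.

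The mixed-product identity is the one ingredient not stated earlier, so I would verify it blockwise. The $i$-th block of $(P\otimes Q)(r\otimes s)$ is $\sum_j p_{ij}Q\,(r_js)=(Pr)_i\,Qs$, which is the $i$-th block of $(Pr)\otimes(Qs)$. This index check is the main step where care is needed; everything else is routine.
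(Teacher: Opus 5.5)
Your proof is correct and complete. The paper gives no proof of this lemma; it only quotes it from \citet{van2000ubiquitous}, so your argument fills in a verification the paper takes for granted.

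The usual textbook proof of part~2 is a direct column computation rather than your reduction to rank-one matrices. The $(k,j)$ block of $B^\top\otimes A$ is $B_{jk}A$, so the $k$-th block of $(B^\top\otimes A)\,\mathrm{vec}(X)$ is $\sum_j B_{jk}\,A X_{:,j}$. This is exactly the $k$-th column $AX B_{:,k}$ of $AXB$. That route needs neither part~3 nor the mixed-product identity.

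Your route, via part~3, linearity, and the blockwise check of $(P\otimes Q)(r\otimes s)=(Pr)\otimes(Qs)$, is a little longer but has a side benefit. Applying that vector identity to the columns $C_{:,j}\otimes D_{:,l}$ of $C\otimes D$ gives the matrix mixed-product rule $(A\otimes B)(C\otimes D)=(AC)\otimes(BD)$. The paper later uses this rule without proof in Proposition~\ref{appprop:joint_diag_q}.
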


\section{Full-Spectrum GNNs}
\label{sec:fspecgnn}
\subsection{Spectral Filtering on Node-Pair Domain}
\label{subsec:2ndfilter}
Let $L = U \Lambda U^\top$ be the eigendecomposition of the Laplacian of a graph $G$.
The Kronecker product $U \otimes U$ has columns
$\{u_i \otimes u_j\}_{i,j=1}^n$, which form an orthonormal basis of $\mathbb{R}^{n^2}$.
For a node-pair signal $\varepsilon\in\mathbb{R}^{n\times n}$, we will, by a slight abuse of notation, identify $\varepsilon$ with $\mathrm{vec}(\varepsilon)\in\mathbb{R}^{n^2}$ via the isomorphism $\mathrm{vec}$ \emph{throughout the paper}.
Then GFT and its inverse are defined as
\[
\hat{\varepsilon} = (U \otimes U)^\top \varepsilon,\ 
\varepsilon = (U \otimes U) \hat{\varepsilon}.
\]
Parallel to Definition~\ref{def:diag-spec-filter}, we have
\begin{definition}
\label{def:full-filter}

Given a continuous function $g:\mathbb{R}^2\to\mathbb{R}$, the corresponding
\emph{full-spectrum filter} is the matrix
\[
G_\lambda\coloneqq\bigl(g(\lambda_i,\lambda_j)\bigr)_{1\le i, j\le n}
\]
Then, the \emph{full-spectrum convolution} is defined as
\begin{align*}
    G_\lambda\ast_G \varepsilon\coloneqq& (U \otimes U)\big(G_\lambda\odot (U \otimes U)^\top\!\varepsilon\,\big)\\
    =&(U \otimes U)\,G(\Lambda)\,
(U \otimes U)^\top\!\varepsilon\\
    =&\sum_{i,j} g(\lambda_i,\lambda_j)\,u_i u_i^\top \varepsilon\, u_j u_j^\top,
\end{align*}
where $G(\Lambda)\coloneqq \mathrm{diag}\big(\mathrm{vec}(G_\lambda)\big)\in \mathbb{R}^{n^2\times n^2}$.
\end{definition}
Let $g$ be a bivariate polynomial
$q(s,t)=\sum_{i,j}\alpha_{ij}s^i t^j$, then define the polynomial of Laplacians
$q(L\otimes I_n,\; I_n\otimes L)
\;\coloneqq\;
\sum_{i,j}\alpha_{ij}(L\otimes I_n)^i (I_n\otimes L)^j=\sum_{i,j}\alpha_{ij}\,L^i\otimes L^j$, where $I_n\in \mathbb{R}^{n\times n}$ is the identity matrix. We have
\begin{proposition}
\label{prop:joint_diag_q}
Define $Q_\lambda=\bigl(q(\lambda_i,\lambda_j)\bigr)_{1\le i, j\le n}$, then
\[
q(L\otimes I_n,\ I_n\otimes L)
=
(U \otimes U)\,Q(\Lambda)\,
(U \otimes U)^\top,
\]
where $Q(\Lambda)=\mathrm{diag}\big(\mathrm{vec}(Q_\lambda)\big)$. Equivalently,
\[
 Q_\lambda\ast_G \varepsilon=\,q(L\otimes I_n,\ I_n\otimes L)\varepsilon.
\]
\end{proposition}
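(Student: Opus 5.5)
By linearity in $q$, it suffices to treat a single monomial $q(s,t)=s^a t^b$. The general statement then follows by summing with coefficients $\alpha_{ab}$, since both $q(L\otimes I_n,\,I_n\otimes L)$ and $(U\otimes U)\,Q(\Lambda)\,(U\otimes U)^\top$ are linear in the coefficients.

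For the monomial, I use the mixed-product property of the Kronecker product, $(A\otimes B)(C\otimes D)=AC\otimes BD$. This gives $(L\otimes I_n)^a(I_n\otimes L)^b=L^a\otimes L^b$. Substituting $L^a=U\Lambda^aU^\top$ and $L^b=U\Lambda^bU^\top$ and applying mixed-product twice more yields
\[
L^a\otimes L^b=(U\otimes U)(\Lambda^a\otimes\Lambda^b)(U\otimes U)^\top,
\]
where I have also used $(U\otimes U)^\top=U^\top\otimes U^\top$.

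The only point needing care is matching the diagonal matrix $\Lambda^a\otimes\Lambda^b$ with $\mathrm{diag}(\mathrm{vec}(Q_\lambda))$ under the paper's column-stacking convention. The matrix $\Lambda^a\otimes\Lambda^b$ is diagonal, and its entry at position $(j-1)n+i$ is $\lambda_j^a\lambda_i^b$. The entry of $\mathrm{vec}(Q_\lambda)$ at that position is $Q_{ij}=\lambda_i^a\lambda_j^b$. These agree as functions of the index pair only after an $i\leftrightarrow j$ swap, which amounts to a consistent choice of how the columns $u_i\otimes u_j$ of $U\otimes U$ are labeled.

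I resolve this most cleanly by proving the equivalent form, the second claim of the proposition, with the third item of Lemma~\ref{lem:vec-kron} and the second item of Lemma~\ref{lem:vec-kron}. Write $\varepsilon=\sum_{i,j}c_{ij}\,u_iu_j^\top$. Then
\[
(L^a\otimes L^b)\,\mathrm{vec}(u_iu_j^\top)=\mathrm{vec}\big(L^b u_iu_j^\top L^a\big)=\lambda_i^b\lambda_j^a\,\mathrm{vec}(u_iu_j^\top),
\]
so the eigenpair labelled $(i,j)$ receives the weight $q$ evaluated at the swapped pair $(\lambda_j,\lambda_i)$. This index convention is the main obstacle. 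I would reconcile it with Definition~\ref{def:full-filter} in one of two ways: either by reading $L\otimes I_n$ as acting on the second tensor factor (the paper's ordering of $u_i\otimes u_j$), or by noting that the assignment of $s$ to one factor and $t$ to the other is a labeling choice, so exchanging the roles of $s$ and $t$ gives exactly $Q_\lambda$. Once the ordering is fixed consistently, the identity $\sum_{i,j}q(\lambda_i,\lambda_j)\,u_iu_i^\top\varepsilon\,u_ju_j^\top=q(L\otimes I_n,\,I_n\otimes L)\varepsilon$ follows by linearity in $\varepsilon$, and hence so does the matrix identity.
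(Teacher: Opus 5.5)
Your computation follows the paper's proof step for step. The mixed-product property gives $(L\otimes I_n)^a(I_n\otimes L)^b=L^a\otimes L^b=(U\otimes U)(\Lambda^a\otimes\Lambda^b)(U\otimes U)^\top$, and the rest is reading off the diagonal. The paper carries the sum $\sum\alpha_{ab}$ along instead of reducing to monomials, which makes no difference.

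Your index check is correct, and it catches something the paper's proof skips. The paper only says the diagonal entries $\lambda_i^a\lambda_j^b$ are ``indexed by pairs $(i,j)$''. That is true in the ordering of the columns $u_i\otimes u_j$ of $U\otimes U$, which is what the appendix restatement asserts: $u_i\otimes u_j$ is an eigenvector with eigenvalue $q(\lambda_i,\lambda_j)$. It is not true in the column-stacking order of $\mathrm{vec}(Q_\lambda)$.

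Where your proposal goes wrong is in treating this mismatch as a labeling choice. Once $\mathrm{vec}$, $\otimes$ and Lemma~\ref{lem:vec-kron} are fixed, there is no freedom left. The columns of $U\otimes U$ are determined, and $(L\otimes I_n)\varepsilon=\varepsilon L$, which the paper states explicitly in the appendix. So ``reading $L\otimes I_n$ as acting on the second factor'' is not an available interpretation. What your argument actually proves is
\[
q(L\otimes I_n,\ I_n\otimes L)=(U\otimes U)\,\mathrm{diag}\big(\mathrm{vec}(Q_\lambda^\top)\big)\,(U\otimes U)^\top,
\qquad
q(L\otimes I_n,\ I_n\otimes L)\,\varepsilon=Q_\lambda^\top\ast_G\varepsilon .
\]
Since $U\otimes U$ is orthogonal, the identity as stated holds exactly when $q(\lambda_i,\lambda_j)=q(\lambda_j,\lambda_i)$ for all $i,j$.

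A concrete check: for $q(s,t)=s$ one gets $\mathrm{diag}(\mathrm{vec}(Q_\lambda))=I_n\otimes\Lambda$. The right-hand side is then $I_n\otimes L$, while the left-hand side is $L\otimes I_n$. These differ for any non-scalar $L$, already for a graph with a single edge.

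So your last sentence should not claim the stated identity ``follows once the ordering is fixed''. It should say the statement must be corrected to use $Q_\lambda^\top$, or equivalently swap $s$ and $t$, or use the eigenbasis formulation of the appendix. With that correction your proof is complete. The correction changes nothing substantive elsewhere in the paper, since $q(s,t)\mapsto q(t,s)$ is a bijection on bivariate polynomials. It is still a correction of the statement, not a relabeling.
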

The proof is provided in Appendix~\ref{appsec:filterprop} (Proposition~\ref{appprop:joint_diag_q}). Motivated by Proposition~\ref{prop:joint_diag_q}, given an continuous function $g$, we denote the full spectrum convolutional matrix by \[g(L\otimes I_n,I_n\otimes L )\coloneqq (U \otimes U)G(\Lambda)(U \otimes U)^\top\!,\]
then for arbitrary such $g$,
\[G_\lambda\ast_G \varepsilon=g(L\otimes I_n,I_n\otimes L )\varepsilon.\]

Full-spectrum convolution admits a simple view as propagation on the Cartesian product graph $G\square G$ over node set $V\times V$, where $(u,v)$ is adjacent to $(u',v')$ iff $u=u'$ and $(v,v')\in E$, or $v=v'$ and $(u,u')\in E$. Concretely, $L\otimes I_n$ (resp.\ $I_n\otimes L$) propagates along the second (resp.\ first) coordinate while keeping the other fixed.
As a special case, if we enforce simultaneous propagation along both coordinates by restricting the bivariate function to $g(s,t)=h(s+t)$ for some univariate $h$, the induced operator reduces to $g(L\oplus L)$, where
\(
L\oplus L \;:=\; L\otimes I_n + I_n\otimes L
\), 
is the \emph{Kronecker sum} of Laplacians. Moreover, $L\oplus L$ is exactly the Laplacian of the Cartesian product graph $G\square G$~\cite{brouwer2011spectra}, so $g(L\oplus L)$ implements simultaneous propagation on both coordinates. In this sense, $g(L\oplus L)$ is a degraded second-order convolution: the signal is lifted to node pairs, but the filter is constrained to be univariate.

\subsection{Classical Spectral Filtering as a Subfamily}
Classical spectral filters can be characterized as a subfamily of full-spectrum filters via diagonal embedding.
\begin{proposition}
\label{prop:diag-embed}
Given node signal $x\in \mathbb{R}^n$, define the diagonal embedding $\mathcal E(x) := \sum_{i=1}^n \hat x_i\, u_i u_i^\top$ with $\hat x_i := u_i^\top x$, and the projection $\mathcal P(H) \;:=\; \sum_{i,j=1}^n (u_i^\top H u_j)\,u_i$ for $H\in\mathbb{R}^{n\times n}$. Then
\[
\mathcal P\!\left(g(L\otimes I_n,\ I_n\otimes L)\,\mathcal E(x)\right)
\;=\;
U\,\tilde g(\Lambda)\,U^\top x ,
\]
where $\tilde g(\Lambda)\coloneqq\mathrm{diag}(g(\lambda_i,\lambda_i))_{1\leq i\leq n}$, recovering the classical spectral filtering.
\end{proposition}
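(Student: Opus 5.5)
Since $\mathcal E(x)$ is already written in the eigenbasis, the plan is to compute directly in that basis, using the matrix form of the full-spectrum convolution from Definition~\ref{def:full-filter}. By that definition, for any $H\in\mathbb{R}^{n\times n}$,
\[
g(L\otimes I_n,\ I_n\otimes L)\,H=\sum_{i,j} g(\lambda_i,\lambda_j)\,u_i u_i^\top H\, u_j u_j^\top .
\]
Here $H$ is identified with $\mathrm{vec}(H)$. If one prefers to start from the Kronecker form $(U\otimes U)G(\Lambda)(U\otimes U)^\top$, the matrix form follows from items 2 and 3 of Lemma~\ref{lem:vec-kron}: $(u_j\otimes u_i)^\top \mathrm{vec}(H)=u_i^\top H u_j$ and $u_j\otimes u_i=\mathrm{vec}(u_iu_j^\top)$. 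One has to check that the column ordering of $U\otimes U$ matches the ordering used in $\mathrm{vec}(G_\lambda)$, so that the $(i,j)$ coefficient is $g(\lambda_i,\lambda_j)$. This is routine bookkeeping.

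First, substitute $H=\mathcal E(x)=\sum_k \hat x_k u_k u_k^\top$. Orthonormality gives $u_i^\top u_k u_k^\top u_j=\delta_{ik}\delta_{kj}$, so $u_i^\top \mathcal E(x) u_j=\hat x_i\delta_{ij}$. Only diagonal terms survive, and
\[
g(L\otimes I_n,\ I_n\otimes L)\,\mathcal E(x)=\sum_i g(\lambda_i,\lambda_i)\,\hat x_i\,u_iu_i^\top .
\]
So the embedded signal stays diagonal in the eigenbasis, and the filter acts only through its diagonal values $g(\lambda_i,\lambda_i)$.

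Next, apply $\mathcal P$ to this matrix, call it $M$. The same orthonormality computation gives $u_i^\top M u_j=g(\lambda_i,\lambda_i)\hat x_i\delta_{ij}$. Hence
\[
\mathcal P(M)=\sum_i g(\lambda_i,\lambda_i)\,\hat x_i\, u_i=U\tilde g(\Lambda)U^\top x ,
\]
which is the claim and coincides with $\tilde g_\lambda\ast_G x$ in Definition~\ref{def:diag-spec-filter}.

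The calculation has no real obstacle. The only step requiring care is the index convention linking $\mathrm{vec}$, $U\otimes U$ and $G_\lambda$, which determines whether the coefficient is $g(\lambda_i,\lambda_j)$ or $g(\lambda_j,\lambda_i)$. The result does not depend on it, because only the diagonal $i=j$ contributes. I would also note that with repeated eigenvalues the choice of $U$ is not unique, but the identity holds for whichever orthonormal $U$ is fixed in defining $\mathcal E$ and $\mathcal P$.
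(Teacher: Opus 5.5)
Your proof is correct and follows essentially the same route as the paper: both show that the full-spectrum operator acts on each diagonal eigengraph $u_iu_i^\top$ by the scalar $g(\lambda_i,\lambda_i)$, then use orthonormality to evaluate $\mathcal P$. The only difference is how that action is obtained. The paper derives it from $L u_iu_i^\top = u_iu_i^\top L = \lambda_i u_iu_i^\top$, while you read it off the double-sum form in Definition~\ref{def:full-filter}; your remark that the $g(\lambda_i,\lambda_j)$ versus $g(\lambda_j,\lambda_i)$ ordering convention is irrelevant here, because only the $i=j$ terms survive, is well taken.
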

The proof is provided in Appendix~\ref{appsec:filterprop} (Proposition~\ref{appprop:diag-embed}). From this perspective, classical spectral filters can be viewed as \emph{diagonal-spectrum} filters, obtained by restricting the full-spectrum filter to the diagonal of the paired spectrum.

\subsection{Full-Spectrum GNN and Its Expressivity}
\label{subsec:express}.
Based on the full-spectrum filtering, we propose the full-spectrum GNNs, i.e., \textsc{FSpecGNN}s. Given node features $X\in \mathbb{R}^{n\times d_1}$ and node-pair features $E\in\mathbb{R}^{n\times n\times d_2}$, we first lift them to representations in node-pair domain via
\(
H_{uv}=\phi(X_u,X_v,E_{uv})\in\mathbb{R}^{d},
\)
where $\phi$ can be either a hand-crafted encoder or a neural network. Stacking all pairwise representations yields a tensor $H\in\mathbb R^{n\times n\times d}$, which we reshape into a matrix
$H\in\mathbb R^{n^2\times d}$.
A full-spectrum convolution layer is then defined as
\[
H' \;=\; \sigma\!\bigl( g(L\otimes I_n,\, I_n\otimes L)\, H\, W \bigr),
\]
where $g$ is a learnable bivariate function. 
Depending on the choice of readout, the filtered representations $H'$ can be used to produce edge/node/graph-level predictions. 

Now We consider two complementary notions of expressivity for spectral GNNs.
The first is specific to spectral methods and concerns their ability to universally approximate graph signals.
The second is a model’s ability to distinguish non-isomorphic graphs.

Parallel to \citet{wang2022powerful}, who showed the linear spectral GNNs can universally approximate one-dimensional signals on $V$, we prove that the linear \textsc{FSpecGNN}s can universally approximate one-dimensional signals on $V\times V$, with proof provided in Appendix~\ref{appsec:exFSpec} (Theorem~\ref{appthm:univ-approx}).
\begin{theorem}
\label{thm:univ-approx}
    Linear \textsc{FSpecGNN}s, obtained by removing the nonlinear activation, can produce any one-dimensional node-pair signal prediction if the normalized Laplacian $\tilde{L}$ has no multiple eigenvalues
    and the input representation $H$ contain all paired frequency components, namely, $(U\otimes U)^\top H$ is non-zero for every index $(u,v)\in V\times V$.
\end{theorem}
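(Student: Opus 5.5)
The plan is to reduce the statement to an interpolation problem in the paired spectral domain, following the scheme of \citet{wang2022powerful} for the node-level case. A linear \textsc{FSpecGNN} outputs $z = g(L\otimes I_n,\, I_n\otimes L)\,H\,w$, where $w\in\mathbb{R}^{d}$ is the (single-column) weight, since the target is one-dimensional. By Definition~\ref{def:full-filter}, its Fourier coefficients are
\[
(U\otimes U)^\top z \;=\; \mathrm{vec}(G_\lambda)\odot\big((U\otimes U)^\top H w\big).
\]
Fix an arbitrary target $z^\star\in\mathbb{R}^{n^2}$ and write $\hat z^\star=(U\otimes U)^\top z^\star$. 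Because $U\otimes U$ is orthogonal, it suffices to find $w$ and $g$ with $g(\lambda_i,\lambda_j)\,\big((U\otimes U)^\top Hw\big)_{(i,j)} = \hat z^\star_{(i,j)}$ for every index pair $(i,j)$.

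\textbf{Step 1: choice of $w$.} Let $\hat H=(U\otimes U)^\top H\in\mathbb{R}^{n^2\times d}$. By hypothesis every row $\hat h_{(i,j)}$ of $\hat H$ is nonzero. Hence each set $\{w:\hat h_{(i,j)}^\top w=0\}$ is a proper hyperplane of $\mathbb{R}^d$. A finite union of $n^2$ proper hyperplanes has Lebesgue measure zero, so a generic $w$ (for instance, one drawn at random) satisfies $(\hat H w)_{(i,j)}\neq 0$ for all $(i,j)$. Fix such a $w$.

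\textbf{Step 2: choice of $g$.} Set target values $c_{ij}:=\hat z^\star_{(i,j)}/(\hat Hw)_{(i,j)}$; these are well defined by Step 1. Since $\tilde L$ has simple eigenvalues $\lambda_1,\dots,\lambda_n$, the ordered pairs $(\lambda_i,\lambda_j)$ are $n^2$ distinct points of a tensor grid in $\mathbb{R}^2$. Note that $(\lambda_i,\lambda_j)\neq(\lambda_j,\lambda_i)$ for $i\neq j$, which is exactly where the bivariate filter gains over a symmetric or univariate one. Let $\ell_i$ be the univariate Lagrange basis polynomials on $\{\lambda_k\}$, and define the bivariate polynomial
\[
q(s,t)=\sum_{i,j} c_{ij}\,\ell_i(s)\,\ell_j(t).
\]
It is continuous and satisfies $q(\lambda_i,\lambda_j)=c_{ij}$. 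By Proposition~\ref{prop:joint_diag_q}, $q(L\otimes I_n, I_n\otimes L)$ equals the full-spectrum convolution with filter $Q_\lambda=(c_{ij})$, and it can be realized as a polynomial in the Laplacians. Plugging in then gives $(U\otimes U)^\top z=\hat z^\star$, hence $z=z^\star$.

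\textbf{Main obstacle.} The only genuinely delicate points are (i) the existence of one $w$ that works simultaneously for all $n^2$ frequencies, handled by the measure-zero hyperplane argument above, and (ii) the distinctness of the interpolation nodes. For (ii), note that distinctness of the $\lambda_i$ is all that is needed: no condition such as distinct pairwise sums is required, because the grid structure admits tensor-product Lagrange interpolation. If the eigenvalues were repeated, both the eigenbasis and the interpolation would fail, which is why that hypothesis is imposed.
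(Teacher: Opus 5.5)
Your proposal is correct and follows essentially the same route as the paper's proof. The paper likewise picks $W$ outside the finite union of proper hyperplanes $\{W:\bar C_{ij}=0\}$, then divides the target's spectral coefficients entrywise by $\bar C_{ij}$ and interpolates the quotients on the grid $\{(\lambda_i,\lambda_j)\}$ with the tensor-product Lagrange polynomial $q(s,t)=\sum_{i,j}M_{ij}\,\ell_i(s)\,\ell_j(t)$.
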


We next study the ability of \textsc{FSpecGNN}s to distinguish non-isomorphic graphs.
Let $G=(V,E)$ be a finite graph with an initial node labeling $\ell^{(0)}:V\to\Sigma$, where $\Sigma$ is a countable label set.
We adopt the standard definitions of 1-WL and Local 2-GNN from \citet{zhangbeyond}.

\begin{definition}
The \emph{1-dimensional Weisfeiler--Lehman test} refines node labels by
\[
\ell^{(t)}(v)
=
\mathrm{HASH}\Bigl(\ell^{(t-1)}(v),\{\!\!\{\ell^{(t-1)}(u):u\in N(v)\}\!\!\}\Bigr),
\]
where $N(v)$ denotes the neighborhood of $v$ and $\mathrm{HASH}$ is an perfect hash function. The process iterates until stabilization. Two graphs are
\emph{distinguished by 1-WL} if the multisets of node labels differ at some
iteration.
\end{definition}

\begin{definition}
The \emph{local 2-GNN} operates on ordered
node pairs $(u,v)\in V\times V$. It initializes
\(\ell^{(0)}(u,v)=\big(\ell^{(0)}(u),\ell^{(0)}(v),\mathbf{atp}(u,v)\big)\), where the
\emph{atomic type} is $\mathbf{atp}(u,v)=(\mathbf 1_{u=v},\mathbf 1_{(u,v)\in E})$.
For $t\ge 0$, define multisets
$\mathcal{M}^{(t)}_{u\leftarrow v}:=\{\!\!\{\ell^{(t)}(w,v):w\in N(u)\}\!\!\}$ and
$\mathcal{M}^{(t)}_{v\leftarrow u}:=\{\!\!\{\ell^{(t)}(u,w):w\in N(v)\}\!\!\}$, and
update
\[
\ell^{(t+1)}(u,v)=\mathrm{HASH}\bigl(\ell^{(t)}(u,v),
\mathcal{M}^{(t)}_{u\leftarrow v},\mathcal{M}^{(t)}_{v\leftarrow u}\bigr).
\]
The process iterates until stabilization.
\end{definition}
First, consider \textsc{FSpecGNN}s that realizes
$g(L\otimes I_n,\, I_n\otimes L)$ via bivariate polynomial $p$.
By stacking the pairwise label $\ell^{(0)}(u,v)$ into a matrix $\mathcal{L}\in \mathbb{R}^{n^2\times d}$ for some $d$, We show that the resulting models have expressivity upper-bounded by Local $2$-GNNs.

\begin{theorem}
\label{thm:order2bound}
Let $p$ be any polynomial of total degree at most $K$, and define the filtered linear representation
$\mathcal{L}' = p(L\otimes I,I\otimes L)\mathcal{L} W$.
Let $\chi^{(k)}_{(u,v)}$ denote the color of index $(u,v)$ after $k$ iterations
of the Local $2$-GNN refinement.
Then for any
$(u_1,v_1),(u_2,v_2)\in V\times V$ and any matrix $W$,
\[
\chi^{(K)}_{(u_1,v_1)}=\chi^{(K)}_{(u_2,v_2)}
\Longrightarrow
\mathcal{L}'(u_1,v_1)=\mathcal{L}'(u_2,v_2).
\]
\end{theorem}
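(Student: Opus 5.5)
The plan is to expand $p$ into monomials, translate each monomial into a two-sided matrix product acting on the pair signal, and then prove by induction on the number of propagation steps that every resulting entry is a function of the Local $2$-GNN color. Write $p(s,t)=\sum_{i+j\le K}\alpha_{ij}s^it^j$. By the definition preceding Proposition~\ref{prop:joint_diag_q}, $p(L\otimes I,I\otimes L)=\sum_{i+j\le K}\alpha_{ij}L^i\otimes L^j$.

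Apply Lemma~\ref{lem:vec-kron}(2) channelwise, using that $L$ is symmetric. For each column $X=\mathcal{L}_{:,c}$, reshaped to an $n\times n$ matrix,
\[
\bigl(L^i\otimes L^j\bigr)\mathrm{vec}(X)=\mathrm{vec}\bigl(L^jXL^i\bigr).
\]
Multiplication by $W$ only mixes channels and acts identically at every index. Hence it suffices to prove the following claim.

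\textbf{Claim.} For every $a+b\le k$ and every channel $X$ of $\mathcal{L}$, the entry $(L^aXL^b)(u,v)$ is a function of $\chi^{(k)}_{(u,v)}$.

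Since $\chi^{(k)}$ refines $\chi^{(k-1)}$ (the update keeps $\ell^{(t)}(u,v)$ as an argument of $\mathrm{HASH}$), taking $k=K$ and summing over the finitely many monomials gives the theorem.

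The induction is on $k$. The base case $k=0$ holds because $\mathcal{L}(u,v)$ is built from $\ell^{(0)}(u,v)$. I will also record that for $k\ge1$ the color $\chi^{(k)}_{(u,v)}$ determines $d_u$ and $d_v$, since these are the cardinalities of $\mathcal{M}_{u\leftarrow v}$ and $\mathcal{M}_{v\leftarrow u}$. For the inductive step, one new factor of $L$ is added on the left or on the right; the two cases are symmetric.

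For $L=D-A$,
\[
(LY)(u,v)=d_uY(u,v)-\sum_{w\in N(u)}Y(w,v).
\]
The first term uses $\chi^{(k-1)}_{(u,v)}$ and $d_u$. The sum is a function of the multiset $\mathcal{M}^{(k-1)}_{u\leftarrow v}$, because each $Y(w,v)$ is a function of $\chi^{(k-1)}_{(w,v)}$. Both ingredients are part of $\chi^{(k)}_{(u,v)}$. Right multiplication by $L$ uses $\mathcal{M}_{v\leftarrow u}$ in the same way.

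The main obstacle is the normalized Laplacian $L=I-D^{-1/2}AD^{-1/2}$. Here $(LY)(u,v)$ contains the weights $1/\sqrt{d_ud_w}$ for neighbors $w\in N(u)$. At the first step, $\chi^{(0)}_{(w,v)}$ does not reveal $d_w$, so a naive induction fails.

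To fix this, I will expand $L^a$ into words in $D^{-1/2}AD^{-1/2}$ and regroup each word as
\[
(D^{-1/2}AD^{-1/2})^c=D^{-1/2}(AD^{-1})^{c-1}AD^{-1/2}.
\]
The outer factors $D^{-1/2}$ contribute only $d_u^{-1/2}$ and $d_v^{-1/2}$, which are known once $k\ge1$. Every inner $D^{-1}$ is applied at a node $w$ immediately after an aggregation by $A$ at $w$. Such an aggregation already encodes $d_w$, as the size of the aggregated multiset at level $\ge1$.

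So I will run the induction on the quantities
\[
Z=(AD^{-1})^{c_1}A\,X\,A(D^{-1}A)^{c_2},
\]
together with the lower-order identity terms, showing that each such $Z(u,v)$ is a function of $\chi^{(c_1+c_2+2)}_{(u,v)}$. The total degree still matches $a+b\le K$. Checking this bookkeeping for mixed words from the binomial expansion of $(I-\cdot)^a$ is the step I expect to need the most care.
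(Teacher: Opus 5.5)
Your argument is correct for $L=D-A$, and there it is essentially the paper's induction. The paper phrases equality of colors through isomorphic local unrolling trees (Lemma~\ref{lem:go-sparse-local-unr}), which carries the same information as your nested multisets. Your formulation, inducting on $k$ with the claim quantified over \emph{all} indices, is actually the right way to set it up. The paper inducts on $a+b$ for the fixed pairs $(u_1,v_1),(u_2,v_2)$, but then applies the hypothesis to neighbouring pairs $(u',v_1),(\phi(u'),v_2)$, which are only depth-$(K-1)$ equivalent. The gap is the normalized Laplacian, which is the case the appendix actually treats. There your repair does not work, and no repair can.

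\textbf{The regrouping is misapplied.} In $\widetilde A^{c}=D^{-1/2}(AD^{-1})^{c-1}AD^{-1/2}$, the right-hand $D^{-1/2}$ acts on the endpoint of the walk, not on $v$. A two-sided word is $\widetilde A^{i}X\widetilde A^{j}=D^{-1/2}(AD^{-1})^{i-1}A\,(D^{-1/2}XD^{-1/2})\,A(D^{-1}A)^{j-1}D^{-1/2}$. Your $Z$ therefore drops the core factor $D^{-1/2}XD^{-1/2}$, and one-sided words $\widetilde A^{c}X$ are not of your form at all. The core entries $X(w,w')/\sqrt{d_wd_{w'}}$ need $\chi^{(1)}_{(w,w')}$, so the honest count is $a+b+1$.

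\textbf{The count $a+b+1$ is sharp.} With $\chi^{(K)}$ the statement is false for $\widetilde L$. Take $K=1$, $p(s,t)=s$, uniform node labels, and the connected graph consisting of the path $a-b-c-y$ plus two leaves $x,z$ attached to $y$. The pairs $(a,a)$ and $(x,x)$ have the same depth-$1$ Local 2-GNN color: they share the atomic type, and both neighbour multisets equal $\{\!\!\{(\ell,\ell,0,1)\}\!\!\}$. Yet $\mathcal L'(a,a)=X(a,a)-X(a,b)/\sqrt2$ while $\mathcal L'(x,x)=X(x,x)-X(x,y)/\sqrt3$. These differ whenever $W$ does not annihilate the edge-indicator coordinate. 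The paper's proof conceals exactly this in the step $\widetilde L_{u_1u'}=\widetilde L_{u_2\phi(u')}$. That step needs $d_{u'}=d_{\phi(u')}$, which isomorphic depth-$0$ subtrees do not give.

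\textbf{What your idea does prove.} Once the core factor is kept, you get the bound with $\chi^{(K+1)}$ for $\widetilde L$, matching the $K+1$ iterations in Proposition~\ref{prop:WLspec}.
Write $\widetilde A^{i}X\widetilde A^{j}=D^{1/2}(D^{-1}A)^{i}\,(D^{-1/2}XD^{-1/2})\,(AD^{-1})^{j}D^{1/2}$. Then:
\begin{itemize}
\item the core is a function of $\chi^{(1)}$;
\item each $D^{-1}A$ (resp.\ $AD^{-1}$) step uses only the degree of the current node plus one multiset, so it costs one iteration;
\item the outer factors $d_u^{1/2}d_v^{1/2}$ are known at depth $\ge 1$.
\end{itemize}
Summing over the binomial expansion of $(I-\widetilde A)^aX(I-\widetilde A)^b$ gives determination by $\chi^{(a+b+1)}$. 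So either restrict the depth-$K$ claim to $L=D-A$, or prove the normalized case with $K+1$ iterations.
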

In contrast, by allowing bivariate polynomials, there exists a polynomial $q(s,t)$ such that an \textsc{FSpecGNN} instantiated with $q$ is as expressive as Local 2-GNNs.
\begin{theorem}
\label{thm:refine-2lwl}
Let $\chi^{(\infty)}_{(u,v)}$ denote the stable Local $2$-GNN color on index $(u,v)$.
If $L$ has a simple spectrum and
$U^\top \mathcal{L} U$ is index-wise nonzero, then there exists a bivariate polynomial $q$ and a matrix $W$, such that the filtered representation
$\mathcal{L}' = q(L\otimes I_n,\ I_n\otimes L)\mathcal{L} W$ satisfies
\[
\mathcal{L}'(u_1,v_1)=\mathcal{L}'(u_2,v_2)
\Longrightarrow
\chi^{(\infty)}_{(u_1,v_1)}=
\chi^{(\infty)}_{(u_2,v_2)}.
\]
\end{theorem}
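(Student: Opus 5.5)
The plan is to prove the statement by exact interpolation, along the lines of Theorem~\ref{thm:univ-approx}. We do not simulate the Local $2$-GNN refinement step by step. Instead we show that, under the two hypotheses, a single full-spectrum convolution with a suitable bivariate polynomial can output \emph{any} one-dimensional node-pair signal. We then choose that signal to be an injective encoding of the stable colors $\chi^{(\infty)}$, which gives the implication in the statement directly.

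\emph{Step 1 (reduce to one channel).} Take $W=w\in\mathbb{R}^{d}$ to be a single column and set $z:=\mathcal{L}w\in\mathbb{R}^{n^2}$. Its spectral coefficients are $\hat z_{ij}=u_i^\top\bigl(\sum_c w_c\mathcal{L}_{:,:,c}\bigr)u_j=\sum_c w_c\,(U^\top\mathcal{L}_cU)_{ij}$. I read the hypothesis that $U^\top\mathcal{L}U$ is index-wise nonzero as saying that, for each $(i,j)$, the vector $\bigl((U^\top\mathcal{L}_cU)_{ij}\bigr)_c$ is nonzero. Then each condition $\hat z_{ij}=0$ cuts out a hyperplane in $w$-space. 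Choosing $w$ outside this finite union of hyperplanes makes $\hat z_{ij}\neq0$ for all $i,j$. This genericity argument is the step that needs the most care. In particular, one must fix the precise interpretation of the hypothesis, and it is harmless in the scalar case $d=1$.

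\emph{Step 2 (target signal and required filter).} Let $c:\{\text{stable colors}\}\to\mathbb{R}$ be any injective map, and define $y(u,v):=c\bigl(\chi^{(\infty)}_{(u,v)}\bigr)$. Let $\hat y=(U\otimes U)^\top y$, and define the required filter values $G_{ij}:=\hat y_{ij}/\hat z_{ij}$, which are well-defined by Step~1. By Definition~\ref{def:full-filter}, any bivariate $g$ with $g(\lambda_i,\lambda_j)=G_{ij}$ for all $i,j$ gives
\[
(U\otimes U)\bigl(G_\lambda\odot\hat z\bigr)=(U\otimes U)\hat y=y.
\]

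\emph{Step 3 (polynomial realization).} Because $L$ has a simple spectrum, the $\lambda_i$ are distinct. Let $\ell_i(s)=\prod_{k\neq i}(s-\lambda_k)/(\lambda_i-\lambda_k)$ be the univariate Lagrange basis, and set
\[
q(s,t):=\sum_{i,j}G_{ij}\,\ell_i(s)\,\ell_j(t).
\]
Then $q(\lambda_i,\lambda_j)=G_{ij}$, so $Q_\lambda=G_\lambda$. By Proposition~\ref{prop:joint_diag_q}, $q(L\otimes I_n,I_n\otimes L)\mathcal{L}w=Q_\lambda\ast_G z=y$.

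\emph{Step 4 (conclusion).} Hence $\mathcal{L}'(u_1,v_1)=\mathcal{L}'(u_2,v_2)$ means $c(\chi^{(\infty)}_{(u_1,v_1)})=c(\chi^{(\infty)}_{(u_2,v_2)})$. Injectivity of $c$ then gives equality of the stable colors.

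The argument is fully constructive. The genuine obstacle is only the genericity of $w$ in Step~1. The simple-spectrum assumption is used exactly once, in Step~3, to make the interpolation nodes $(\lambda_i,\lambda_j)$ admit a tensor-product Lagrange interpolant.
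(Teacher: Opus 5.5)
Your proposal is correct and follows essentially the same route as the paper. The paper also proves universality of $q(L\otimes I_n, I_n\otimes L)$ on node-pair signals by tensor-product Lagrange interpolation of the ratios $d_{ij}/c_{ij}$, invoking Theorem~\ref{appthm:univ-approx}, whose proof contains exactly your generic choice of $W$; it then realizes a target signal that separates the stable Local $2$-GNN classes. The only difference is that the paper's target takes pairwise distinct values in disjoint intervals per class (to also get strict refinement), whereas your injective color encoding is constant on classes, which already suffices for the stated implication.
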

Proofs are provided in Appendix~\ref{appsec:exFSpec} (Theorem~\ref{appthm:order2bound}, \ref{appthm:refine-2lwl}).
Note that Theorem~\ref{thm:refine-2lwl} is an existence result: in practice, the learned filter may or may not realize the specific polynomial $q$ guaranteed by the theorem. Consequently, the empirical expressivity of \textsc{FSpecGNN} can vary across tasks and may be slightly weaker or stronger than that of Local $2$-GNNs, depending on the parameterization and optimization.

\subsection{Scalability of \textsc{FSpecGNN}}
\label{subsec:scale}
A main challenge for second-order methods is scalability. Concretely, filtering in the node-pair domain is intractable, as it requires multiplying an $n^2\times n^2$ matrix by an $n^2$-dimensional vector. In this section, we discuss implementations of \textsc{FSpecGNN}s, particularly scalable variants that avoid costly operations such as explicit eigendecomposition and node-pair-level computations. These implementations mainly involve parameterizations of bivariate function $g$.

\textbf{Route I: direct learning of $g(\lambda_i,\lambda_j)$.}
When the graph is small, we can explicitly compute the eigendecomposition and learn the bivariate filter $g_\theta:\mathbb{R}^2\to\mathbb{R}$ directly, e.g., using an MLP. 
In computing a full-spectrum convolution $g(L\otimes I_n,\, I_n\otimes L)H$, we follow Definition~\ref{def:full-filter}, constructing $G_\lambda$ by
\(
(G_\lambda)_{ij} \;=\; g_\theta(\lambda_i,\lambda_j),
\)
and apply 
$G_\lambda \odot \big((U\otimes U^\top) \varepsilon\big)$ followed by inverse GFT transform, where $\varepsilon$ is one channel of $H$ and we apply the same convolution to all channels.
This route is impractical for large graphs, as eigendecomposition requires $O(n^3)$ and GFT requires $O(n^2)$ time.

\textbf{Route II: polynomial parameterization of $g(\lambda_i,\lambda_j)$.}
We parameterize the filter by a polynomial
\(
q(L\otimes I_n,I_n\otimes L)
=
\sum_{p=0}^{K}\sum_{q=0}^{K} \alpha_{pq} (L^p \otimes L^q).
\)
Learning $g$ is then reduced to learning the coefficient matrix $(\alpha_{pq})$.
A key computational benefit is that we never explicitly construct $L^p \otimes L^q\in\mathbb{R}^{n^2\times n^2}$, instead, we exploit by Lemma~\ref{lem:vec-kron} to have
\(
(L^p\otimes L^q)\mathrm{vec}(\varepsilon)
=
\mathrm{vec}(L^q\varepsilon L^p),
\)
reducing computations to $n\times n$ matrix multiplications.
Although we avoid computations in the node-pair domain, we still have $(K+1)^2$ polynomial terms; even with a recursive implementation, applying the filter still costs $O(K^2)$ left/right multiplications, which becomes expensive for high-degree polynomial filters.

\begin{figure*}[t]
    \centering
    \includegraphics[width=1\linewidth]{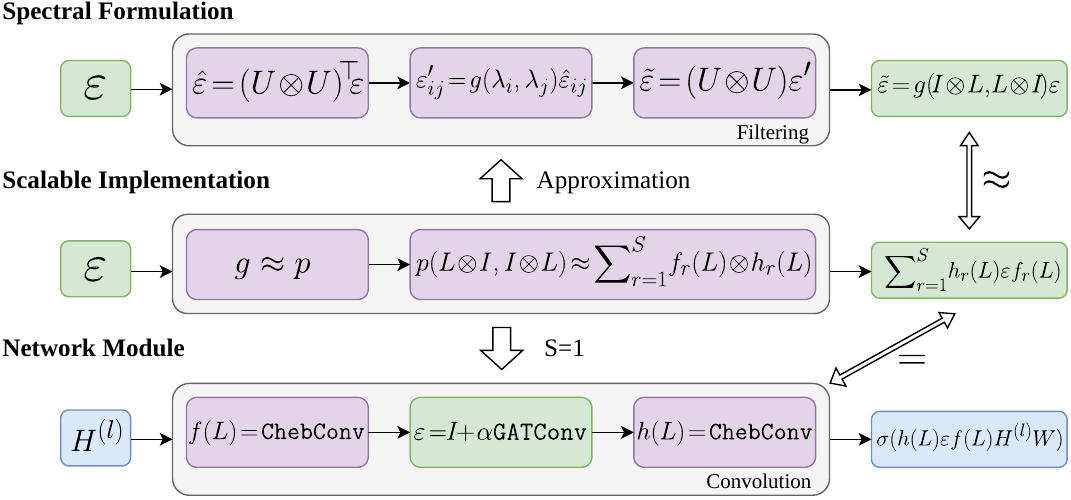}
    \caption{A high-level summary bridging the spectral formulation and its scalable implementation. \textcolor{softgreen}{Green shaded boxes} indicate node-pair signals before or after spectral filtering; \textcolor{violet!60}{purple shaded boxes} indicate filters; and \textcolor{cyan!40}{blue shaded boxes} indicate feature representations before or after convolution. Here, $g \approx p$ means that the bivariate spectral response $g$ is first parameterized by a bivariate polynomial $p$.}
    \label{fig:flow}
\end{figure*}

\textbf{Further scalability via tensor decomposition.}
To mitigate the at least quadratic dependence on $K$, we exploit the tensor-product structure of bivariate polynomial filters. Proofs are provided in Appendix~\ref{appsec:filterprop} (Proposition~\ref{appprop:tensor-decomposition}).
\begin{proposition}
\label{prop:tensor-decomposition}
Let
\(
P(s,t)=\sum_{i+j\le K} a_{ij}\, s^i t^j
\)
be a bivariate polynomial, and denote by $A=(a_{ij})$ its coefficient matrix.
Then the induced operator
\(
P(L\otimes I_n,\; I_n\otimes L)
\)
admits a decomposition of the form
\[
P(L\otimes I_n,\;I_n\otimes L)
=
\sum_{r=1}^{R} f_r(L)\otimes h_r(L)
\]
for univariate polynomials $\{f_r,h_r\}$ satisfying $\mathrm{deg}(f_r)\le K$ and $\mathrm{deg}(h_r)\le K$ for all $r$, if and only if
$R \ge \mathrm{rank}(A)$.
\end{proposition}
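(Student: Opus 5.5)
The plan is to reduce the statement to the familiar fact that a matrix is a sum of $R$ rank-one matrices iff its rank is at most $R$. Write $e(s):=(1,s,\dots,s^K)^\top$ and view $A=(a_{ij})_{0\le i,j\le K}$ as a $(K+1)\times(K+1)$ matrix, with $a_{ij}=0$ whenever $i+j>K$. Then $P(s,t)=e(s)^\top A\,e(t)$.

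On the operator side, the mixed-product property of the Kronecker product gives $(L\otimes I_n)^i(I_n\otimes L)^j=L^i\otimes L^j$. Hence
\[
P(L\otimes I_n,I_n\otimes L)=\sum_{i,j}a_{ij}\,L^i\otimes L^j ,
\]
and the map $A\mapsto\sum_{i,j}a_{ij}L^i\otimes L^j$ is linear in $A$. Bilinearity of $\otimes$ is the only algebraic tool needed in what follows.

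\emph{Sufficiency ($R\ge\mathrm{rank}(A)$).} Take a rank factorization $A=\sum_{r=1}^{\mathrm{rank}(A)} b_r c_r^\top$ with $b_r,c_r\in\mathbb R^{K+1}$. Set $f_r(s):=e(s)^\top b_r$ and $h_r(t):=e(t)^\top c_r$; both have degree at most $K$. Expanding by bilinearity,
\[
\sum_r f_r(L)\otimes h_r(L)=\sum_{i,j}\Big(\sum_r (b_r)_i(c_r)_j\Big)L^i\otimes L^j=\sum_{i,j}a_{ij}L^i\otimes L^j .
\]
If $R>\mathrm{rank}(A)$, I pad the sum with the terms $f_r=h_r=0$.

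\emph{Necessity.} Suppose $\sum_{r=1}^R f_r(L)\otimes h_r(L)$ equals the operator, where $f_r(s)=e(s)^\top b_r$ and $h_r(t)=e(t)^\top c_r$. The same expansion shows the operator is induced by $B:=\sum_r b_rc_r^\top$, and $\mathrm{rank}(B)\le R$. Therefore $\sum_{i,j}(A-B)_{ij}L^i\otimes L^j=0$.

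It remains to conclude $A=B$, which is the main obstacle. This needs the family $\{L^i\otimes L^j\}_{0\le i,j\le K}$ to be linearly independent. I would verify it by conjugating with $U\otimes U$. Using Proposition~\ref{prop:joint_diag_q}, the vanishing combination becomes the statement that the polynomial $\sum_{i,j}(A-B)_{ij}s^it^j$ vanishes on the grid $\{(\lambda_a,\lambda_b)\}$. In matrix form this reads $V^\top(A-B)V=0$, where $V$ is the Vandermonde matrix $(\lambda_a^i)_{i\le K,\,a\le n}$. If $L$ has at least $K+1$ distinct eigenvalues, then $V$ has full row rank $K+1$, so $A-B=0$ and $\mathrm{rank}(A)=\mathrm{rank}(B)\le R$.

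The condition of at least $K+1$ distinct eigenvalues is genuinely needed. Without it, the minimal polynomial of $L$ lets distinct coefficient matrices induce the same operator, and the "only if" can fail. I would therefore state this condition explicitly (it holds, e.g., under the simple-spectrum assumption of Theorem~\ref{thm:refine-2lwl} when $n>K$). Alternatively, I would read the decomposition at the level of formal polynomials, $P(s,t)=\sum_r f_r(s)h_r(t)$, in which case the monomials $s^it^j$ are independent and the argument goes through unconditionally.
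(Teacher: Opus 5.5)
Your proof is correct and follows the paper's route. You expand $\sum_r f_r(L)\otimes h_r(L)$ by bilinearity into $\sum_{i,j}B_{ij}\,L^i\otimes L^j$ with $B=\sum_r b_r c_r^\top$, and then use the rank-one characterization of rank.

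The real difference is in the necessity direction, and there you are more careful than the paper. The paper simply asserts that the operator identity holds if and only if $A=B$. That tacitly assumes $\{L^i\otimes L^j\}_{0\le i,j\le K}$ is linearly independent. Your Vandermonde argument shows this holds exactly when $L$ has at least $K+1$ distinct eigenvalues.

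Your caveat is genuinely needed, not just cautious. Take the complete graph on $n\ge 2$ vertices, whose combinatorial Laplacian is $L=nI_n-J$ with $J$ the all-ones matrix. It satisfies $L^2=nL$. With $K=2$, the polynomial $P(s,t)=s^2-ns+st$ has a coefficient matrix of rank $2$. Yet $P(L\otimes I_n,\,I_n\otimes L)=L\otimes L$ is a single Kronecker term.

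So the ``if'' direction holds unconditionally, exactly as you prove it. The ``only if'' direction of the stated proposition is false without your added spectral hypothesis, or without the formal-polynomial reading $P(s,t)=\sum_r f_r(s)h_r(t)$.
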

In particular, a low-rank approximation of the coefficient matrix $A$
yields a truncated tensor decomposition with only a small number of terms:
\begin{equation}
\label{eq:lowrank}
\mathcal{T}_L^{S}
\;\coloneqq\;
\sum_{r=1}^{S} f_r(L)\otimes h_r(L),
\qquad S \ll \mathrm{rank}(A).
\end{equation}
Thus we only need to perform $O(SK)$ matrix multiplications with degree-$K$ univariate polynomials $f_r$ and $h_r$.

To bridge the theoretical formulation and practical implementations, Figure~\ref{fig:flow} summarizes the pipeline from full-spectrum filtering to its scalable realizations described in this section, together with the concrete neural network architectures introduced in Section~\ref{subsec:model-offd} and Section~\ref{subsec:hetnode}.

\section{Application to Heterophilic Graph Learning}
\label{sec:het}
\subsection{Necessity of Off-diagonal Components}
Classical spectral GNN restricts to
diagonal spectral convolutions
\(
g(L)=\sum_i g(\lambda_i)\,u_i u_i^\top.
\)
Equivalently, for any such convolution matrix $C=g(L)$, $U^\top C U$ is diagonal.
In contrast, spatial message passing GNNs typically learn convolution operators whose spectral
expansions over the basis $\{u_i u_j^\top\}$ contain non-zero off-diagonal
components, i.e., $U^\top C U$ generally non-diagonal.
We answer the following question in this section:
are such off-diagonal components merely incidental artifacts of spatial
message passing, or do they play a principled and necessary role in learning?

We study this question in heterophilic graph learning, where adjacent nodes often carry different labels and classical models such as GCNs can degrade substantially. 
To understand heterophily from a spectral perspective, we firstly characterize an
optimal convolution operator that maximally separates representations across
different labels, equivalently, maximally contracts representations within the
same label, under suitable idealized assumptions.

\textbf{Model Assumption.}
Let $G=(V,E)$ be a graph with $|V|=n$, whose nodes are partitioned into
$k$ classes $\Pi=\{V_1,\dots,V_k\}$ with $|V_a|=n_a$ and $\sum_{a=1}^k n_a = n$.
For each class $a\in\{1,\dots,k\}$, we randomly draw a class mean vector
$m_a \in \mathbb{R}^d$ independently from the unit sphere
$\mathbb{S}^{d-1}$.

Conditioned on these class means, node features are generated independently
across nodes according to their class labels.
Specifically, for each node $i \in V_{a_i}$, the feature vector
\(
x_i \sim \mathcal{D}_{a_i},
\)
where each class-conditional distribution $\mathcal{D}_a$ on $\mathbb{R}^d$
has mean $m_a$ and covariance matrix $\Sigma_a \succeq 0$.
We denote by $X \in \mathbb{R}^{n \times d}$ the resulting feature matrix,
whose $i$-th row is $x_i^\top$.
For convenience, we write $\tau_a := \mathrm{tr}(\Sigma_a)$ for the total feature
variance of class $a$, and assume $\tau_a > 0$ for all $a$.

Under this model, we now specify a measurement to quantify \emph{class separability} after graph propagation.
Rather than adopting a Bayesian formulation
\cite{luan2023graph,wang2024understanding}, we consider a deterministic classwise mean squared error
\begin{equation}
\label{eq:main-loss}
\mathcal{L}(C)
:= \sum_{a=1}^k \frac{1}{n_a} \sum_{p \in V_a}
\mathbb{E}\,\bigl\| Y_p - m_a \bigr\|_2^2 ,
\end{equation}
where $Y := CX \in \mathbb{R}^{n \times d}$ denotes the propagated feature matrix,
$Y_p$ is its $p$-th row, and $m_a$ is the class-wise mean.

Eq.~\eqref{eq:main-loss} is well suited to our setting:
$\mathcal{L}(C)$ is differentiable with
respect to $C$, and the optimal solution
\(
C^\ast := \arg\min_C \mathcal{L}(C)
\)
admits a closed-form characterization.
\begin{theorem}
\label{thm:HDasym}
Let $C^*$ be a minimizer of $\mathcal{L}(C)$. Assume $\tau_a n_a \ge \tau_0 > 0$ for each class $a\in\{1,\dots,k\}$, where $\tau_0$
is a fixed positive constant lower bound. Then, as the feature dimension $d\to\infty$, $C^*$ becomes asymptotically block-diagonal with respect to the class partition
$\Pi$, with
\[
(C^*)_{V_a \times V_b}
\;\xrightarrow{\ \mathbb{P}\ }\;
\begin{cases}
s_a^*\, J_{V_a}, & a = b, \\[4pt]
0, & a \neq b,
\end{cases}
\qquad
s_a^* \xrightarrow{\ \mathbb{P}\ } \dfrac{1}{n_a + \tau_a}.
\]
\end{theorem}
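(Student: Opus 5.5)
The plan is to write $\mathcal{L}(C)$ explicitly as a quadratic in $C$, conditioned on the random class means, solve the normal equations in closed form, and then let $d\to\infty$. The expectation in \eqref{eq:main-loss} is over the node features; the class means $m_1,\dots,m_k$ stay fixed during this step. Write $M\in\mathbb{R}^{n\times d}$ for the matrix whose $q$-th row is $m_{a_q}^\top$, so that $X=M+Z$ with independent, zero-mean noise rows $z_q$ of covariance $\Sigma_{a_q}$. For row $p\in V_a$ we have $Y_p-m_a=\sum_q C_{pq}m_{a_q}-m_a+\sum_q C_{pq}z_q$. The cross term vanishes in expectation, so
\[
\mathbb{E}\|Y_p-m_a\|^2=\Bigl\|\sum_q C_{pq}m_{a_q}-m_a\Bigr\|^2+\sum_q C_{pq}^2\tau_{a_q}.
\]

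The loss therefore decouples across rows, and each row $c_p^\top$ of $C$ solves a ridge-type least-squares problem
\[
\min_{c}\ \|M^\top c-m_a\|^2+c^\top T c,\qquad T=\mathrm{diag}(\tau_{a_q}).
\]
Its unique minimizer is $c_p=(MM^\top+T)^{-1}Mm_a$, which is well defined because $T\succ 0$ since every $\tau_a>0$. Rows belonging to the same class coincide. Now $MM^\top=B\,\Gamma\,B^\top$, where $B\in\{0,1\}^{n\times k}$ is the class indicator matrix and $\Gamma_{ab}=m_a^\top m_b$ is the Gram matrix of the means. Likewise $Mm_a=B\,\Gamma_{:,a}$. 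So $C^\ast$ is an explicit continuous function of $\Gamma$ and of the fixed quantities $n_a,\tau_a$.

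Next I pass to the limit. For independent uniform vectors on $\mathbb{S}^{d-1}$, $m_a^\top m_b\to 0$ in probability for $a\neq b$ (its variance is $1/d$), while $\Gamma_{aa}=1$ exactly. Hence $\Gamma\xrightarrow{\mathbb P}I_k$. Since $n$, $k$, $\tau_a$ do not depend on $d$, and $(BB^\top+T)^{-1}$ exists because $T\succ0$, the map $\Gamma\mapsto (B\Gamma B^\top+T)^{-1}B\Gamma$ is continuous in a neighborhood of $I_k$. The continuous mapping theorem then gives $C^\ast\xrightarrow{\mathbb P}C_0$, where
\[
C_0=(BB^\top+T)^{-1}BB^\top .
\]

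It remains to compute $C_0$. Here $BB^\top=\bigoplus_a J_{V_a}$ and $T=\bigoplus_a \tau_a I_{V_a}$ are both block diagonal, so $C_0$ is block diagonal with blocks $(J_{V_a}+\tau_a I)^{-1}J_{V_a}$. Using $J^2=n_aJ$ and Sherman--Morrison, $(\tau_a I+J)^{-1}=\tau_a^{-1}\bigl(I-\tfrac{1}{\tau_a+n_a}J\bigr)$, and therefore
\[
(\tau_a I+J)^{-1}J=\frac{1}{n_a+\tau_a}J .
\]
This yields $s_a^\ast=1/(n_a+\tau_a)$ and zero off-diagonal blocks, as claimed.

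The main obstacle is justifying the continuity/limit step uniformly. The hypothesis $\tau_a n_a\ge\tau_0$ suggests the authors may allow the $n_a$ or $\tau_a$ to vary, in which case a bare continuous mapping argument is not enough. To handle that, I would reduce everything to a $k\times k$ problem. Write $c_p=B\beta_a$, which is legitimate because $c_p$ lies in the column space of $B$: indeed $(B\Gamma B^\top+T)^{-1}B=B(\Gamma N+D_\tau)^{-1}$ with $N=\mathrm{diag}(n_a)$ and $D_\tau=\mathrm{diag}(\tau_a)$. Then $C^\ast=B(\Gamma N+D_\tau)^{-1}\Gamma B^\top$. A perturbation bound on $(\Gamma N+D_\tau)^{-1}$ around $(N+D_\tau)^{-1}$, with the smallest scale controlled by $\tau_0$, then gives convergence in probability with explicit $O_{\mathbb P}(d^{-1/2})$ rates.
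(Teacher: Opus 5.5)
Your proof is correct, and it takes a genuinely different route from the paper's.

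\textbf{The paper's route.} It works in the following order:
\begin{enumerate}
\item Convexity of $\mathcal{L}$ and classwise exchangeability show that Reynolds averaging over $\prod_a\mathfrak{S}_{n_a}$ cannot increase the loss.
\item It characterizes the commutant: diagonal blocks $\alpha_aI+\beta_aJ$ and off-diagonal blocks $\gamma_{ab}J$.
\item It minimizes a per-class quadratic $\phi_a(\alpha_a,\beta_a,\gamma_{a\bullet})$. Eliminating $\gamma_{a\bullet}$ gives $\alpha_a^\ast=0$ and $\beta_a^\ast=A_a/(n_aA_a+\tau_a)$, where $A_a=\|m_a\|^2-h_a^\top(G_{-a}+D_{-a})^{-1}h_a$.
\item It proves uniqueness by a Schur-complement check, with a separate case for $n_a=1$.
\item Only then does it let $d\to\infty$. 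This uses $|\langle m_a,m_b\rangle|\le\sqrt{C\log d/d}$ with high probability, together with $\|(G_{-a}+D_{-a})^{-1}\|_{\mathrm{op}}\le\tau_0^{-1}$.
\end{enumerate}

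\textbf{Your route.} You observe that $\mathcal{L}$ decouples row by row into ridge regressions with penalty $T\succ0$. That gives existence, uniqueness and a closed form at once, with no group averaging or case split. The block structure, including $\alpha_a^\ast=0$, follows from $TB=BD_\tau$ rather than from symmetry. The limit is then a continuous-mapping argument at $\Gamma=I_k$.

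\textbf{What each buys.} Your argument is shorter and more elementary. It also shows that the hypothesis $\tau_an_a\ge\tau_0$ plays no essential role. The paper's proof explicitly holds $k$, the $n_b$ and the $\tau_a$ fixed as $d\to\infty$, so your worry about varying parameters does not arise and the bare continuous mapping theorem suffices. In the paper, $\tau_0$ only serves to bound $(G_{-a}+D_{-a})^{-1}$ crudely. The paper's approach gives, in exchange, explicit finite-$d$ formulas for $\beta_a^\ast$ and $\gamma_{ab}^\ast$. These have an interpretation as the residual of $m_a$ after ridge regression on the other means. It also gives explicit $O_{\mathbb P}$ rates, which you would recover only through the perturbation bound you sketch at the end. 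That bound would in fact give $O_{\mathbb P}(d^{-1/2})$ for fixed $k$.

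\textbf{Notational slip.} The matrix whose columns are the $c_p$ is $C^{\ast\top}$. Hence $C^\ast=MM^\top(MM^\top+T)^{-1}=B\Gamma(N\Gamma+D_\tau)^{-1}B^\top$, not $B(\Gamma N+D_\tau)^{-1}\Gamma B^\top$. This is harmless, because $BB^\top$ and $T$ commute and so the limit $C_0$ is symmetric.
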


The proof is provided in Appendix~\ref{appsec:classsep} (Theorem~\ref{appthm:HDasym}).
It shows that the inter-class blocks of optimal convolution $C^\star$ vanish asymptotically, while intra-class blocks converge to class-dependent constants.
We now prove that such convolutions \textbf{cannot} be realized by classical spectral filters.
\begin{theorem}
\label{thm:limitedex}
Given a connected graph $G$, there exists a $G$-dependent integer $K \ge 1$ such that for
any $K < k \le n$, there exists a $k$-partition
$V = V_1 \cup \cdots \cup V_k$ with label assignment $y_i = a \iff v_i \in V_a$ such that
the following holds:
if a diagonal-spectral operator $C = g(L)$ satisfies
\[
C_{ij} = 0 \quad \text{whenever } y_i \neq y_j,
\]
then necessarily $C = \alpha I_n$ for some $\alpha \in \mathbb{R}$.
\end{theorem}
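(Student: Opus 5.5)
The plan is to use only one structural fact about diagonal-spectral operators: $C=g(L)=Ug(\Lambda)U^\top$ commutes with $L$, since $LC = U\Lambda g(\Lambda)U^\top = U g(\Lambda)\Lambda U^\top = CL$. The vanishing condition $C_{ij}=0$ for $y_i\neq y_j$ says that $C$ is block-diagonal with respect to the partition. We then look for a partition under which the only block-diagonal matrices commuting with $L$ are scalar multiples of $I_n$. For every admissible Laplacian ($D-A$ or $I-D^{-1/2}AD^{-1/2}$), and for $i\neq j$, we have $L_{ij}\neq 0$ if and only if $(i,j)\in E$; this is the only property of $L$ the argument needs.

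The baseline choice is $K=n-1$ (admissible when $n\ge 2$; for $n=1$ every $C$ is already scalar). The only $k$ with $K<k\le n$ is then $k=n$, and we take the partition into singletons $V_a=\{v_a\}$. The hypothesis forces $C$ to be diagonal, $C=\mathrm{diag}(c_1,\dots,c_n)$. Comparing entries of $CL=LC$ gives $(CL)_{ij}=c_iL_{ij}$ and $(LC)_{ij}=L_{ij}c_j$, so $(c_i-c_j)L_{ij}=0$. For every edge $(i,j)\in E$ we have $L_{ij}\neq0$, hence $c_i=c_j$. Since $G$ is connected, propagating along paths gives $c_1=\dots=c_n=\alpha$, so $C=\alpha I_n$.

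If a smaller, less trivial $K$ is desired, I would keep the same commutation argument with coarser partitions. The natural candidate is a partition consisting of singletons together with one class $S$ whose induced subgraph structure is controlled, for instance an independent set, or vertices chosen by peeling leaves of a spanning tree.
\begin{itemize}
\item Take $i\in S$ and $j\notin S$ (a singleton). The $(i,j)$ entry of $CL=LC$ reads $\sum_{l\in S}C_{il}L_{lj}=L_{ij}C_{jj}$.
\item Choosing $S$ so that each vertex of $S$ has a private outside neighbour $j$, one whose only neighbour in $S$ is $i$, reduces this to $C_{ii}L_{ij}=L_{ij}C_{jj}$. It also kills the off-diagonal entries $C_{il}$ with $l\in S$, $l\neq i$, by using entries $(l,j)$.
\end{itemize}
After that, the diagonal-case argument above applies verbatim. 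The $G$-dependent integer $K$ would be $n$ minus the size of the largest such set $S$, and one passes from the maximal-$S$ partition to any partition with more classes by splitting $S$ further, which preserves the private-neighbour property.

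The main obstacle is this refinement step: guaranteeing the private-neighbour structure, so that the sums $\sum_{l\in S}C_{il}L_{lj}$ collapse to single terms, is exactly where the choice of partition matters. Since the statement only asserts existence of some $G$-dependent $K$, the singleton argument with $K=n-1$ already suffices, and I would present that as the proof and the refinement as a remark.
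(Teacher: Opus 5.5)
Your proof is correct for the statement as written, but it takes a different and much more elementary route than the paper, and it lands on the weakest admissible $K$. You use only that $g(L)$ commutes with $L$. With $K=n-1$ the sole admissible partition is into singletons, so $C$ is diagonal, and $(c_i-c_j)L_{ij}=0$ on edges together with connectivity gives $C=\alpha I_n$; this needs no simple spectrum and no choice of eigenbasis. The paper instead writes the off-diagonal part of row $i$ as $C(c)_{i,-i}=M_ic$ with $M_i=U_{-i}\,\mathrm{diag}(u_1(i),\dots,u_n(i))$. It shows $\mathrm{rank}(M_i)=|S_i|-1$, with kernel $\mathbf 1$ on the support $S_i=\{\ell:u_\ell(i)\neq 0\}$, and stacks blocks along a minimal column-activating list $(i_1,\dots,i_K)$ to get $\mathrm{rank}(M_I)=n-1$; the full-support zero-eigenvalue eigenvector guarantees the overlaps. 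Its partition is the $K$ listed vertices as singletons plus one large class. That $K$ never exceeds $n-1$, so you are proving the worst case, but it is frequently $K=1$ (Remark~\ref{apprmk:K=1}). A non-singleton class is exactly what Corollary~\ref{cor:dist-positive} needs to conclude that $C^\star$ is unreachable, and your all-singletons partition never supplies one. Be aware that no commutation-only argument, including your private-neighbour refinement, can recover a small $K$ in general: for a degenerate spectrum the commutant of $L$ is strictly larger than $\mathrm{span}\{E_\lambda\}$. Take $L=D-A$ on the star $K_{1,m}$ with $m\ge 3$, leaves $j\neq l$ and $\beta\neq 0$. The matrix $I+\beta(e_j-e_l)(e_j-e_l)^\top$ commutes with $L$, is not of the form $g(L)$, and has no cross-class entries whenever $j$ and $l$ share a class, so commutation alone forces every pair of leaves apart. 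Yet for $C=c_0E_0+c_1E_1+c_{m+1}E_{m+1}$, the bipartition isolating a single leaf already forces $c_0=c_{m+1}$ from the leaf--centre entry $(c_0-c_{m+1})/(m+1)$, and then $c_0=c_1$ from the leaf--leaf entry $(c_0-c_1)/m$, so $K=1$ there. Getting the meaningful $K$ requires using that $C$ lies in the span of the spectral projectors, as the paper's rank argument on $M_I$ does.
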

See Theorem~\ref{appthm:limitedex} for the proof. It shows that any spectral filter of the form $C=g(L)$ that enforces
$C_{uv}=0$ whenever $\ell(u)\neq \ell(v)$ must degenerate to $C=\alpha I$, and
thus cannot approximate the optimal convolution in Theorem~\ref{thm:HDasym}.
Therefore, achieving the optimal class-separability convolution in general
requires off-diagonal components, highlighting their importance for
heterophilic graph learning.

The argument is supported by analysis of optimal convolutions across datasets with varying degrees of heterophily. 
Let
\(
W_{ij} := (U^\top\! CU)_{ij}^2 .
\)
denote the spectral energy associated with the eigengraph component $u_iu_j^\top$.
For a bandwidth parameter $\delta \in [0,2]$, we define the \emph{near-diagonal energy ratio}
\(
E_C(\delta)
:= \sum_{|\lambda_i - \lambda_j| \le \delta} W_{ij}/
        \sum_{i,j} W_{ij} .
\)
Empirically, $E_{C^*}(\delta)$ of optimal convolution $C^*$ exhibits a clear trend: as the level of heterophily increases, spectral energy shifts away from the diagonal, see Figure~\ref{fig:offd+hetero}, indicating a growing reliance on off-diagonal components.
\begin{figure}[t]

  \centering

  \begin{subfigure}[t]{\linewidth}
    \centering
      \begin{subfigure}[t]{0.025\linewidth}
        \centering
      \end{subfigure}\hfill
      \begin{subfigure}[t]{0.96\linewidth}
        \centering
        \includegraphics[width=\linewidth]{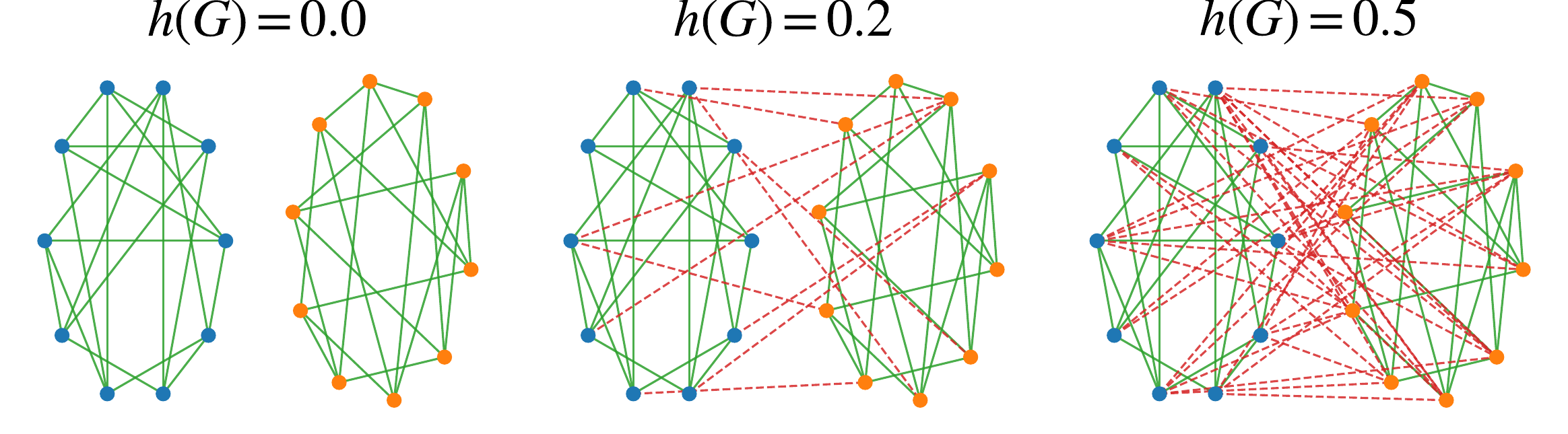}
      \end{subfigure}
    \vspace{0.8ex}
    \begin{subfigure}[t]{0.495\linewidth}
      \centering
      \includegraphics[width=\linewidth]{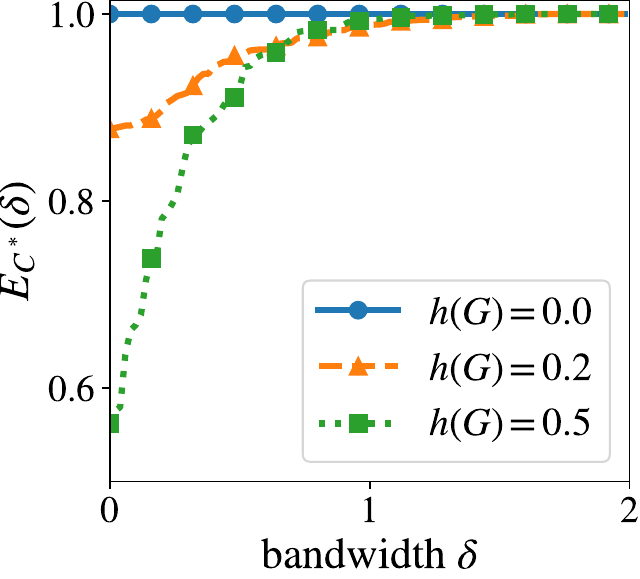}
    \end{subfigure}\hfill
    \begin{subfigure}[t]{0.495\linewidth}
      \centering
      \includegraphics[width=\linewidth]{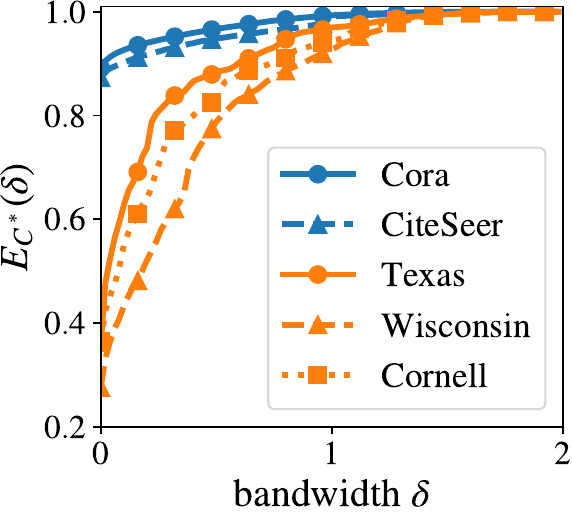}
    \end{subfigure}
    \end{subfigure}
\caption{Off-diagonal components grow with heterophily.
\textbf{Top:} Synthetic graphs with increasing heterophily \(
\mathrm{h}(G)
=
\frac{1}{|E|}
\sum_{(u,v)\in E}
\mathbf{1}\!\left[\ell(u)\neq \ell(v)\right]
\).
\textbf{Bottom-left:} For each graph, we compute the optimal convolution and
plot the near-diagonal energy ratio $E_{C^*}(\delta)$ as a function of the
bandwidth $\delta$.
Curves that are close to $1$ already at $\delta=0$ indicate that most spectral
energy concentrates on the diagonal, whereas curves that start substantially
below $1$ and only approach $1$ for larger $\delta$ indicate stronger
off-diagonal energy.
\textbf{Bottom-right:} The same trend holds on real-world datasets: Cora and
Citeseer are homophilic and exhibit higher near-diagonal energy, while
heterophilic datasets exhibit stronger off-diagonal energy.}
\label{fig:offd+hetero}
\vspace{-5pt}
\end{figure}

\subsection{How \textsc{FSpecGNN} Models Off-diagonal}
\label{subsec:model-offd}
We provide a scalable implementation of \textsc{FSpecGNN} that models off-diagonal components without explicitly constructing node pair-domain operators.

We exploit the low-rank approximation $\mathcal T_L^S$ in Eq.~\eqref{eq:lowrank}, and take a small rank $S$, e.g., $S=1$. Let $\mathcal{T}_L^{1}=f(L)\otimes h(L)$, where $f$ and $h$ are univariate polynomial spectral filters. Given an initial convolution $\varepsilon\in \mathbb{R}^{n\times n}$, we have $f(L)\otimes h(L))\varepsilon=h(L)\,\varepsilon\,f(L)$ by Lemma~\ref{lem:vec-kron}. Thus a full-spectrum convolution layer can be implemented as 
\begin{equation}
\label{eq:rank1-impl}
H'=\sigma\big(h(L)\,\varepsilon\,f(L)\,H\,W),
\end{equation}
where $H\in\mathbb{R}^{n\times d}$ is a matrix of node representations. In practice, we can set both $f$ and $h$ to be classical filters such as ChebConv~\cite{defferrard2016convolutional}, ChebIIConv~\cite{he2022convolutional} or BernConv~\cite{he2021bernnet}. The initial convolution can be either fixed or learnable, and we choose $\varepsilon\;=\;I + \alpha \mathrm{GAT}$,  where GAT is a single-layer GAT~\cite{velickovic2018graph} and $\alpha$ is a learnable scalar. 

Let $L=U\Lambda U^\top$.
Expanding the linear convolution in Eq.~\eqref{eq:rank1-impl} in the eigengraph basis gives
\[
h(L)\,\varepsilon\,f(L)
=
\sum_{i,j}
h(\lambda_i)\,f(\lambda_j)\,
\bigl(u_i^\top \varepsilon u_j\bigr)\,
u_i u_j^\top .
\]
Therefore, an off-diagonal $u_i u_j^\top$ ($i\neq j$) appears
whenever the coefficient $u_i^\top \varepsilon u_j$ is nonzero, see Corollary~\ref{appcor:poly-expand} for more explanations.
GAT-based initialization fits this case, since the
resulting $\varepsilon$ is feature-dependent and generally does not commute
with the Laplacian $L$.
In this view, \textsc{GAT} induces feature-dependent off-diagonal mixing, while $\mathcal{T}_L^{S}$ spectrally reweights both diagonal and off-diagonal components, tuning their balance via the frequency-pair response.

\textbf{Complexity.}
Let $K_f$ and $K_h$ denote the polynomial degrees of $f$ and $h$.
The linear convolution in Eq.~\eqref{eq:rank1-impl} can be computed as
$H_1=f(L)H$, $H_2=\varepsilon H_1$, and $H'=h(L)H_2$.
Using sparse multiplications, the two polynomial spectral filters incur a cost of $O((K_f+K_h)md)$, while the
single-layer \textsc{GAT} propagation costs $O(md)$ up to head-dependent
constants.
Consequently, the overall complexity is $O(Kmd)$ with $K\approx K_f+K_h$,
matching the order of classical polynomial spectral layers such as BernConv.
For a general rank $S$ in Eq.~\eqref{eq:lowrank}, the cost increases linearly in $S$.

\section{Experiments}
\begin{table*}[t]
\caption{Node classification tasks on heterophilic datasets. High accuracy indicates good performance.}
\centering
\label{tab:hetG}
\resizebox{\linewidth}{!}{
\begin{tabular}{lcccccccc}
\toprule
\textbf{Model}
& \textbf{Texas}
& \textbf{Wisconsin}
& \textbf{Chameleon}
& \textbf{Squirrel}
& \textbf{Roman}
& \textbf{Minesweeper}
& \textbf{Tolokers}
& \textbf{Questions} \\
\midrule

ChebNet
& 36.76{\scriptsize$\pm 6.76$}
& 33.17{\scriptsize$\pm 7.83$}
& 25.65{\scriptsize$\pm 3.56$}
& 22.58{\scriptsize$\pm 2.56$}
& 45.32{\scriptsize$\pm 0.65$}
& 86.34{\scriptsize$\pm 0.19$}
& 69.88{\scriptsize$\pm 0.26$}
& 48.55{\scriptsize$\pm 1.04$}
\\

ChebNetII
& 48.44{\scriptsize$\pm 10.87$}
& 41.33{\scriptsize$\pm 9.25$}
& 33.48{\scriptsize$\pm 4.59$}
& 30.80{\scriptsize$\pm 2.65$}
& 55.06{\scriptsize$\pm 0.32$}
& 74.49{\scriptsize$\pm 3.76$}
& 69.37{\scriptsize$\pm 0.60$}
& 63.99{\scriptsize$\pm 0.32$}
\\

GPRGNN
& 48.55{\scriptsize$\pm 7.00$}
& 40.79{\scriptsize$\pm 3.62$}
& 30.44{\scriptsize$\pm 4.29$}
& 24.33{\scriptsize$\pm 2.68$}
& 55.48{\scriptsize$\pm 1.30$}
& 86.68{\scriptsize$\pm 0.32$}
& 67.05{\scriptsize$\pm 0.97$}
& 53.76{\scriptsize$\pm 0.41$}
\\

JacobiConv
& 50.17{\scriptsize$\pm 7.87$}
& 46.08{\scriptsize$\pm 9.08$}
& 26.57{\scriptsize$\pm 3.60$}
& 23.15{\scriptsize$\pm 4.47$}
& 52.92{\scriptsize$\pm 0.70$}
& \underline{87.40{\scriptsize$\pm 0.13$}}
& 70.24{\scriptsize$\pm 0.18$}
& 55.68{\scriptsize$\pm 0.54$}
\\

BernNet
& 52.14{\scriptsize$\pm 8.09$}
& 49.33{\scriptsize$\pm 8.21$}
& 29.45{\scriptsize$\pm 3.22$}
& 25.94{\scriptsize$\pm 3.35$}
& 55.30{\scriptsize$\pm 0.41$}
& 76.64{\scriptsize$\pm 0.29$}
& 69.31{\scriptsize$\pm 0.69$}
& 65.41{\scriptsize$\pm 0.33$}
\\

\midrule
\textbf{\textsc{FSpecGNN}(Cheb)}
& 55.32{\scriptsize$\pm 4.57$}
& 49.87{\scriptsize$\pm 8.29$}
& 33.09{\scriptsize$\pm 0.92$}
& \textbf{39.57{\scriptsize$\pm 0.67$}}
& 54.35{\scriptsize$\pm 0.73$}
& \textbf{88.30}{\scriptsize$\pm 0.82$}
& \textbf{76.89}{\scriptsize$\pm 0.91$}
& 75.87{\scriptsize$\pm 0.37$}
\\

\textbf{\textsc{FSpecGNN}(ChebII)}
& \textbf{57.05{\scriptsize$\pm 2.20$}}
& \underline{50.00{\scriptsize$\pm 5.42$}}
& \textbf{39.60{\scriptsize$\pm 2.77$}}
& \underline{37.70{\scriptsize$\pm 0.63$}}
& \textbf{56.26}{\scriptsize$\pm 1.08$}
& 84.25{\scriptsize$\pm 0.65$}
& \underline{76.37{\scriptsize$\pm 0.67$}}
& \underline{77.00{\scriptsize$\pm 0.37$}}
\\

\textbf{\textsc{FSpecGNN}(Bern)}
& \underline{56.13{\scriptsize$\pm 0.46$}}
& \textbf{54.58{\scriptsize$\pm 9.47$}}
& \underline{37.91{\scriptsize$\pm 3.94$}}
& 37.59{\scriptsize$\pm 1.32$}
& \underline{56.16{\scriptsize$\pm 1.19$}}
& 84.17{\scriptsize$\pm 1.01$}
& 74.50{\scriptsize$\pm 0.69$}
& \textbf{77.11}{\scriptsize$\pm 0.26$}
\\

\bottomrule
\end{tabular}}
\end{table*}

\begin{table*}[t]
\centering
\caption{Experimental results on (chordal) cycle counting, reported as normalized test MAE. Lower is better.}
\label{tab:exp_subgraph}

\setlength{\tabcolsep}{2pt}
\renewcommand{\arraystretch}{1.06}
\footnotesize

\newcommand{\taskimgS}[1]{\adjustbox{valign=c}{\includegraphics[height=0.62cm]{#1}}}

\begin{adjustbox}{max width=\textwidth}
\begin{tabular}{c|cccccc|cccccc|cccccc}
\hhline{-|------|------|------}
\multirow{2}{*}{\backslashbox{Model}{Task}}
& \multicolumn{6}{c|}{Graph-level}
& \multicolumn{6}{c|}{Node-level}
& \multicolumn{6}{c}{Edge-level} \\
& \taskimgS{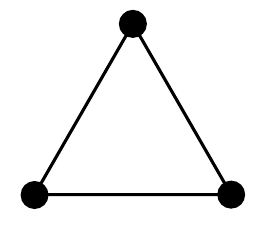}
& \taskimgS{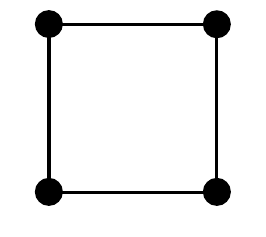}
& \taskimgS{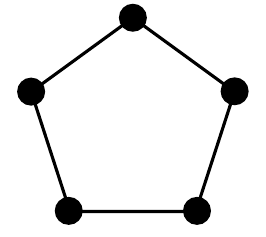}
& \taskimgS{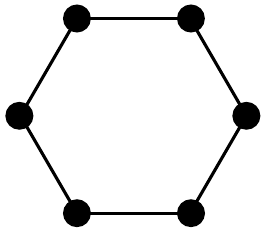}
& \taskimgS{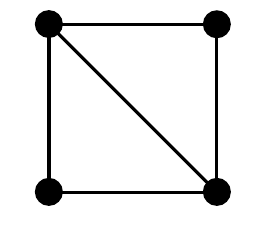}
& \taskimgS{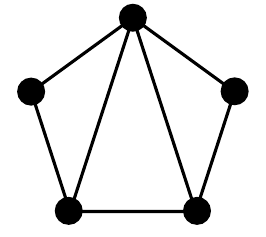}
& \taskimgS{Figures/exp_3-cycle.pdf}
& \taskimgS{Figures/exp_4-cycle.pdf}
& \taskimgS{Figures/exp_5-cycle.pdf}
& \taskimgS{Figures/exp_6-cycle.pdf}
& \taskimgS{Figures/exp_chordal_4-cycle.pdf}
& \taskimgS{Figures/exp_chordal_5-cycle.pdf}
& \taskimgS{Figures/exp_3-cycle.pdf}
& \taskimgS{Figures/exp_4-cycle.pdf}
& \taskimgS{Figures/exp_5-cycle.pdf}
& \taskimgS{Figures/exp_6-cycle.pdf}
& \taskimgS{Figures/exp_chordal_4-cycle.pdf}
& \taskimgS{Figures/exp_chordal_5-cycle.pdf} \\
\hhline{-|------|------|------}
MPNN
& \cellcolor{red!20} $.358$
& \cellcolor{red!20} $.208$
& \cellcolor{red!20} $.188$
& \cellcolor{red!20} $.146$
& \cellcolor{red!20} $.261$
& \cellcolor{red!20} $.205$
& \cellcolor{red!20} $.600$
& \cellcolor{red!20} $.413$
& \cellcolor{red!20} $.300$
& \cellcolor{red!20} $.207$
& \cellcolor{red!20} $.318$
& \cellcolor{red!20} $.237$
& -
& -
& -
& -
& -
& - \\
Spec. Inv. GNN
& \cellcolor{orange!20} $.072$
& \cellcolor{orange!20} $.072$
& \cellcolor{orange!20} $.089$
& \cellcolor{orange!20} $.089$
& \cellcolor{orange!20} $.060$
& \cellcolor{orange!20} $.099$
& -
& -
& -
& -
& -
& -
& -
& -
& -
& -
& -
& - \\
Subgraph GNN
& \cellcolor{cyan!20} $.010$
& \cellcolor{cyan!20} $.020$
& \cellcolor{cyan!20} $.024$
& \cellcolor{green!20} $.046$
& \cellcolor{cyan!20} $.007$
& \cellcolor{green!20} $.027$
& \cellcolor{cyan!20} $.003$
& \cellcolor{cyan!20} $.005$
& \cellcolor{orange!20} $.092$
& \cellcolor{orange!20} $.082$
& \cellcolor{orange!20} $.050$
& \cellcolor{orange!20} $.073$
& \cellcolor{cyan!20} $.001$
& \cellcolor{cyan!20} $.003$
& \cellcolor{orange!20} $.090$
& \cellcolor{orange!20} $.096$
& \cellcolor{green!20} $.038$
& \cellcolor{orange!20} $.065$ \\
Local 2-GNN
& \cellcolor{cyan!20} $.008$
& \cellcolor{cyan!20} $.011$
& \cellcolor{cyan!20} $.017$
& \cellcolor{green!20} $.034$
& \cellcolor{cyan!20} $.007$
& \cellcolor{cyan!20} $.016$
& \cellcolor{cyan!20} $.002$
& \cellcolor{cyan!20} $.005$
& \cellcolor{cyan!20} $.010$
& \cellcolor{cyan!20} $.023$
& \cellcolor{cyan!20} $.004$
& \cellcolor{cyan!20} $.015$
& \cellcolor{cyan!20} $.001$
& \cellcolor{cyan!20} $.005$
& \cellcolor{cyan!20} $.010$
& \cellcolor{cyan!20} $.019$
& \cellcolor{cyan!20} $.005$
& \cellcolor{cyan!20} $.014$ \\
Local 2-FGNN
& \cellcolor{cyan!20} $.003$
& \cellcolor{cyan!20} $.004$
& \cellcolor{cyan!20} $.010$
& \cellcolor{cyan!20} $.020$
& \cellcolor{cyan!20} $.003$
& \cellcolor{cyan!20} $.010$
& \cellcolor{cyan!20} $.004$
& \cellcolor{cyan!20} $.006$
& \cellcolor{cyan!20} $.012$
& \cellcolor{cyan!20} $.021$
& \cellcolor{cyan!20} $.004$
& \cellcolor{cyan!20} $.014$
& \cellcolor{cyan!20} $.003$
& \cellcolor{cyan!20} $.006$
& \cellcolor{cyan!20} $.012$
& \cellcolor{cyan!20} $.022$
& \cellcolor{cyan!20} $.005$
& \cellcolor{cyan!20} $.012$ \\
\hhline{-|------|------|------}
\textsc{FSpecGNN}
& \cellcolor{cyan!20} $.005$
& \cellcolor{cyan!20} $.018$
& \cellcolor{cyan!20} $.024$
& \cellcolor{green!20} $.033$
& \cellcolor{cyan!20} $.006$
& \cellcolor{green!20} $.032$
& \cellcolor{cyan!20} $.003$
& \cellcolor{cyan!20} $.009$
& \cellcolor{cyan!20} $.020$
& \cellcolor{green!20} $.029$
& \cellcolor{cyan!20} $.009$
& \cellcolor{cyan!20} $.021$
& \cellcolor{cyan!20} $.003$
& \cellcolor{cyan!20} $.008$
& \cellcolor{green!20} $.026$
& \cellcolor{green!20} $.045$
& \cellcolor{cyan!20} $.009$
& \cellcolor{green!20} $.033$ \\
\hhline{-|------|------|------}
\end{tabular}
\end{adjustbox}
\end{table*}

We benchmark \textsc{FSpecGNN}s on three representative tasks: node classification on heterophilic datasets, homomorphism counting, and cycle counting. For the counting tasks, our primary goal is not to push state-of-the-art performance; rather, we use them as controlled evaluations to validate the theoretical predictions derived in our analysis.

\textbf{Datasets.}
For heterophilic node classification, we conduct experiments on eight widely used benchmark datasets, including four small heterophilic graphs (Texas, Wisconsin, Chameleon, and Squirrel) and four large-scale heterophilic graphs (Roman Empire, Minesweeper, Tolokers, and Questions). Following \citet{platonovcritical}, we use the directed clean versions of Chameleon and Squirrel, removing duplicate nodes. For all datasets, we adopt the sparse split strategy of \citet{liu2025asymmetric}, randomly partitioning nodes into training/validation/test sets with ratios of $2.5\%/2.5\%/95\%$.

For homomorphism counting, we adopt the benchmark of \citet{zhao2022from} and evaluate graph-level homomorphism counts as well as node/edge-anchored homomorphism count regression, using the same template graphs as \citet{zhangbeyond}. For (chordal) cycle counting, following \citet{huang2023boosting}, we evaluate graph-level and node/edge-anchored (chordal) cycle subgraph count regression, again selecting the same subgraphs as \citet{zhangbeyond}.

\textbf{Baselines.}
For heterophilic node classification, our baselines include five representative spectral GNNs: ChebNet, GPRGNN~\cite{chien2021adaptive}, BernNet, JacobiConv~\cite{wang2022powerful}, and ChebNetII, with results reported in \citet{liu2025asymmetric}. For homomorphism and cycle counting, we compare against MPNN, Subgraph GNN, Local 2-GNN, and Local 2-FGNN, adopting the results reported by \citet{zhangbeyond}, as well as Spectral Invariant GNN with results from \citet{gaihomomorphism}.

\subsection{Heterophilic Node Classification}
\label{subsec:hetnode}
We adopt \textsc{FSpecGNN} in form of Eq.~\eqref{eq:rank1-impl} with detailed implementation described in Section~\ref{subsec:model-offd}, yielding three variants denoted by
\textsc{FSpecGNN}(Cheb/ChebII/Bern), respectively.
The results are reported in the last three rows of Table~\ref{tab:hetG}. ROC-AUC is reported on Minesweeper, Tolokers, and Questions, while test accuracy is reported on the remaining datasets.
Across all datasets, each variant consistently improves upon its corresponding base filter
(e.g., \textsc{\textsc{FSpecGNN}}(Cheb) vs.\ ChebNet, and similarly for ChebII and Bern).
Moreover, for every dataset, the best-performing \textsc{\textsc{FSpecGNN}} variant surpasses all preceding spectral baselines in Table~\ref{tab:hetG}.

\subsection{GNN Expressivity}
\label{sec:wl-expressivity}

\begin{table}[t]
\centering
\caption{Experimental results on homomorphism counting. Red/blue nodes indicate marked vertices. Lower is better.}
\label{tab:exp_homo}
\setlength{\tabcolsep}{2pt}
\renewcommand{\arraystretch}{1.08}
\footnotesize

\newcommand{\taskimg}[1]{\adjustbox{valign=c}{\includegraphics[height=0.62cm]{#1}}}

\begin{adjustbox}{max width=\columnwidth}
\begin{tabular}{c|ccc|cc|ccc}
\hhline{---------}
\multirow{2}{*}{\backslashbox{Model}{Task}}
& \multicolumn{3}{c|}{Graph-level}
& \multicolumn{2}{c|}{Node-level}
& \multicolumn{3}{c}{Edge-level} \\
& \taskimg{Figures/exp_chordal_4-cycle.pdf}
& \taskimg{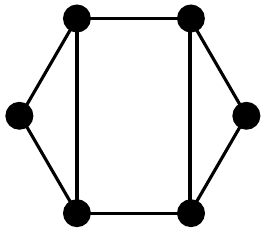}
& \taskimg{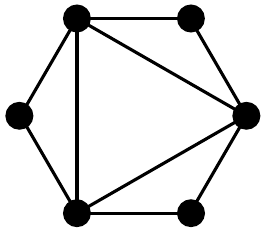}
& \taskimg{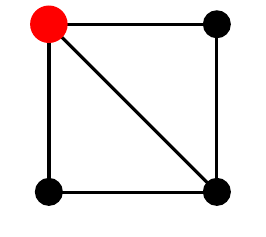}
& \taskimg{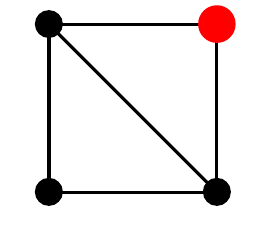}
& \taskimg{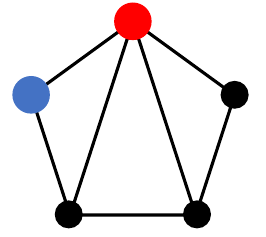}
& \taskimg{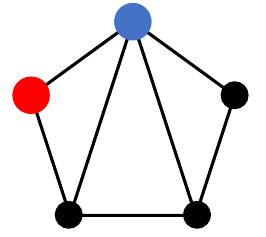}
& \taskimg{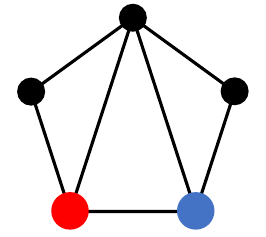} \\
\hhline{---------}
MPNN         & .300 & .233 & .254 & .505 & .478 & --   & --   & --   \\
Spec. Inv. GNN & .045 & .048 & -- & -- & -- & -- & -- & -- \\
Subgraph GNN & .011 & .015 & .012 & .004 & .058 & .003 & .058 & .048 \\
Local 2-GNN  & .008 & .008 & .010 & .003 & .004 & .005 & .006 & .008 \\
Local 2-FGNN & .003 & .005 & .004 & .005 & .005 & .007 & .007 & .008 \\
\hhline{---------} % 底边
\textsc{FSpecGNN} & .006 & .009 & .008 & .006 & .008 & .008 & .008 & .012 \\
\hhline{---------} % 底边
\end{tabular}
\end{adjustbox}
\end{table}

We adopt Route~I in Section~\ref{subsec:scale} and parameterize the spectral response $g$ using a two-layer MLP.
Since a single spectral layer can aggregate global information, we do not follow the five propagation layers used in \citet{zhangbeyond}; instead, we use fewer layers with a larger hidden dimension, resulting in a comparable model size (approximately $300$K parameters) to the baselines.
Apart from the above modifications, we keep all remaining hyperparameters and setups identical to \citet{zhangbeyond}.

The reported performance is measured by normalized mean absolute error (MAE). Results are summarized in Tables~\ref{tab:exp_subgraph} and~\ref{tab:exp_homo}. In Table~\ref{tab:exp_subgraph}, cell colors indicate MAE ranges: \textcolor{cyan!60}{blue} for $<0.025$, \textcolor{green!40}{green} for $[0.025,0.05)$, \textcolor{orange!50}{orange} for $[0.05,0.1)$, and \textcolor{red!60}{red} for $\ge 0.1$.
Overall, the empirical results align with our theoretical predictions. In particular, Theorems~\ref{thm:order2bound} and~\ref{thm:refine-2lwl} suggest that \textsc{FSpecGNN}s should have expressivity comparable to Local 2-GNNs when equipped with sufficiently expressive spectral filters. This argument is supported by experiments: on all benchmarks, \textsc{FSpecGNN} achieves performance comparable to Local 2-GNN, and even outperforms it on some datasets. Moreover, compared to Spectral Invariant GNN, which also leverages graph spectral information, \textsc{FSpecGNN} achieves substantially lower MAE on these benchmarks.

\subsection{Ablation Study}

\begin{table}[t]
\centering
\caption{Ablation on architectural components. $\Delta$ indicates change from the full model.}
\label{tab:ablation}
\small
\setlength{\tabcolsep}{5.5pt}
\renewcommand{\arraystretch}{1}
\begin{tabular}{lccc}
\toprule
Variants & Texas & Squirrel & Roman \\
\midrule
Full model
& $55.3_{\pm4.6}$ & $39.6_{\pm0.7}$ & $54.4_{\pm0.7}$ \\
\midrule
w/o Off-diagonal
& $51.0_{\pm7.1}$ & $34.6_{\pm0.4}$ & $53.8_{\pm1.8}$ \\
$\Delta$
& $-4.3$ & $-5.0$ & $-0.6$ \\
\midrule
w/o In-filter $f$
& $49.5_{\pm11.0}$ & $29.9_{\pm3.1}$ & $52.0_{\pm1.5}$ \\
$\Delta$
& $-5.8$ & $-9.7$ & $-2.4$ \\
\midrule
w/o $f$ and off-diagonal
& $39.7_{\pm7.9}$ & $24.5_{\pm1.1}$ & $47.1_{\pm0.8}$ \\
$\Delta$
& $-15.6$ & $-15.1$ & $-7.3$ \\
\bottomrule
\end{tabular}
\end{table}
Table~\ref{tab:ablation} studies the contribution of architectural components in \textsc{FSpecGNN}(Cheb) on heterophilic node classification.
The full model uses two spectral filters: the in-filter $f$, applied to the input features before eigenspace interaction, and the out-filter $h$, applied after interaction to refine the propagated representation.
Removing off-diagonal interactions consistently degrades performance across datasets, underscoring the necessity of modeling cross-eigenspace coupling.
Ablating both the in-filter $f$ and the off-diagonal interaction reduces the
model to \textsc{ChebNet}, yielding performance consistent with the baseline.

\begin{table*}[t]
\caption{Average runtime per run in seconds on heterophilic datasets.  ``--'' indicates out-of-memory.}
\centering
\label{tab:het_time}
\resizebox{\linewidth}{!}{
\begin{tabular}{lcccccccc}
\toprule
\textbf{Model}
& \textbf{Texas}
& \textbf{Wisconsin}
& \textbf{Chameleon}
& \textbf{Squirrel}
& \textbf{Roman}
& \textbf{Minesweeper}
& \textbf{Tolokers}
& \textbf{Questions} \\
\midrule

Local 2-GNN
& 23.20
& 51.24
& --
& --
& --
& --
& --
& --
\\

\midrule

ChebNet
& 0.94
& 0.97
& 1.51
& 1.93
& 1.66
& 1.90
& 2.32
& 2.45
\\

\textbf{\textsc{FSpecGNN}(Cheb)}
& 2.04
& 2.10
& 2.55
& 2.51
& 3.88
& 3.83
& 3.96
& 3.40
\\
\midrule
ChebNetII
& 2.07
& 1.90
& 2.55
& 2.73
& 2.96
& 3.07
& 3.63
& 3.49
\\

\textbf{\textsc{FSpecGNN}(ChebII)}
& 2.41
& 2.78
& 3.17
& 3.27
& 4.31
& 4.34
& 4.80
& 5.16
\\

\midrule

BernNet
& 2.21
& 2.15
& 2.21
& 2.24
& 2.53
& 3.25
& 3.85
& 4.13
\\

\textbf{\textsc{FSpecGNN}(Bern)}
& 2.17
& 2.48
& 2.69
& 3.35
& 3.05
& 3.52
& 4.66
& 6.01
\\

\bottomrule
\end{tabular}}
\end{table*}

\begin{table*}[t]
\caption{GPU memory usage (MiB) on heterophilic datasets.  ``--'' indicates out-of-memory.}
\centering
\label{tab:het_resource}
\resizebox{\linewidth}{!}{
\begin{tabular}{lcccccccc}
\toprule
\textbf{Model}
& \textbf{Texas}
& \textbf{Wisconsin}
& \textbf{Chameleon}
& \textbf{Squirrel}
& \textbf{Roman}
& \textbf{Minesweeper}
& \textbf{Tolokers}
& \textbf{Questions} \\
\midrule

Local 2-GNN
& 1830
& 3128
& --
& --
& --
& --
& --
& --
\\

\midrule

ChebNet
& 534
& 534
& 558
& 594
& 864
& 654
& 742
& 1150
\\

\textbf{\textsc{FSpecGNN}(Cheb)}
& 646
& 648
& 692
& 760
& 1130
& 754
& 976
& 1242
\\

\midrule

ChebNetII
& 646
& 648
& 660
& 716
& 954
& 750
& 914
& 1230
\\

\textbf{\textsc{FSpecGNN}(ChebII)}
& 650
& 658
& 698
& 752
& 1062
& 756
& 998
& 1244
\\

\midrule

BernNet
& 648
& 648
& 658
& 698
& 1058
& 756
& 954
& 1254
\\

\textbf{\textsc{FSpecGNN}(Bern)}
& 652
& 658
& 694
& 758
& 1102
& 774
& 1040
& 1276
\\

\bottomrule
\end{tabular}}
\end{table*}

\subsection{Time and Space Overhead}
We first evaluate the computational efficiency of models on the heterophilic node classification benchmarks in Table~\ref{tab:het_time} and Table~\ref{tab:het_resource}. For each model, we report the average runtime per run, averaged over 10 runs, together with the peak GPU memory usage for each task. We observe that all variants of \textsc{FSpecGNN} are substantially faster and more memory-efficient than Local 2-GNN, while maintaining runtime and GPU memory usage comparable to classical spectral GNNs.

We then evaluate the computational efficiency of models on the homomorphism- and cycle-counting benchmarks in Table~\ref{tab:model_level_time_mem}. For each model, we report the average time per iteration, aggregated over graph/node/edge levels across both homomorphism- and cycle-counting tasks, and the peak GPU memory usage, taken as the maximum over all tasks. Notably, MPNN is realized on the lifted node-pair domain representation and reduces to MPNN only through a degenerate message passing, so it retains second-order computational overhead.

Further details, including overhead analysis for additional models and per-task evaluations, are provided in Appendix~\ref{appsec:exp}.

\begin{table}[t]
\centering
\caption{Average iteration time and peak GPU memory per model across all homomorphism- and cycle-counting tasks.}
\label{tab:model_level_time_mem}
\renewcommand{\arraystretch}{1}
\resizebox{\linewidth}{!}{
\begin{tabular}{lrr}
\toprule
Model & Avg. Time (s/it) & Max GPU Mem (MB) \\
\midrule
MPNN           & 6.162 & 9688  \\
Subgraph NN    & 6.145 & 11228 \\
Local 2-GNN    & 7.569 & 11532 \\
Local 2-FGNN   & 4.856 & 19886 \\
\midrule
\textsc{FSpecGNN}           & 1.111 & 9532 \\
\bottomrule
\end{tabular}}
\end{table}

\section{Related Works}
Several recent GNN architectures are closely related, yet they differ from our work in perspectives of the signal domain, the spectral filter, and the analysis focus.
Specformer~\cite{bospecformer} learns a set-to-set map on the graph spectrum, but the resulting operator remains \emph{diagonal} in the eigenbasis; our framework instead parameterizes $g(\lambda_i,\lambda_j)$ to enable \emph{off-diagonal} spectral interactions.
CITRUS~\cite{einizade2024continuous} leverages Cartesian product graphs and continuous diffusion for scalable learning on multi-domain data; our focus instead stays within a single graph and its internal spectral interactions.
STSGNN~\cite{chen2025designing} designs second-order spectral filters for spatio-temporal modeling by using two Laplacians from the spatial and temporal domains; in contrast, we study second-order spectral filters based on a single graph and analyze their expressivity and scalability.
Finally, Spectral Invariant GNNs~\cite{gaihomomorphism} inject spectral invariants into GNNs and study homomorphism expressivity, using spectral information in a positional-encoding-like manner rather than parameterizing a spectral filter.
More related works are discussed in Appendix~\ref{appsec:relwork}.

\section{Conclusion}
We propose \textsc{FSpecGNN}s as a second-order generalization of spectral GNNs. It can surpass the expressivity limits of $1$-WL, while admitting scalable implementations that extend to large graphs.  Overall, as second-order spectral GNNs, \textsc{FSpecGNN}s are both expressive and scalable, making it well suited to tasks with inherently second-order structure.

\section*{Acknowledgements}
This work was supported in part by the Singapore Ministry of Education Academic Research fund Tier 1 grant RG16/23, Tier 2 grants MOE-T2EP20125-0004 and MOE-T2EP50223-0036.

The authors would like to thank Xinqi Gong, Fei Han, Jianzhao Gao, and Yifei Yang for the helpful discussions.

\section*{Impact Statement}
This paper presents work whose goal is to advance the field of machine learning, specifically in spectral graph neural networks.
Our contributions are methodological and are intended for general-purpose use.
We do not anticipate substantial ethical concerns or negative societal consequences beyond those commonly associated with machine learning research.

\bibliography{example_paper}
\bibliographystyle{icml2026}

%%%%%%%%%%%%%%%%%%%%%%%%%%%%%%%%%%%%%%%%%%%%%%%%%%%%%%%%%%%%%%%%%%%%%%%%%%%%%%%
%%%%%%%%%%%%%%%%%%%%%%%%%%%%%%%%%%%%%%%%%%%%%%%%%%%%%%%%%%%%%%%%%%%%%%%%%%%%%%%
% APPENDIX
%%%%%%%%%%%%%%%%%%%%%%%%%%%%%%%%%%%%%%%%%%%%%%%%%%%%%%%%%%%%%%%%%%%%%%%%%%%%%%%
%%%%%%%%%%%%%%%%%%%%%%%%%%%%%%%%%%%%%%%%%%%%%%%%%%%%%%%%%%%%%%%%%%%%%%%%%%%%%%%
\newpage
\appendix
\onecolumn
\section{Arrangement of the Appendix}
The appendix is organized as follows.
\begin{itemize}
    \item Section~\ref{appsec:relwork}: More Related Work. A more detailed discussion of related work.
    
    \item Section~\ref{appsec:discussion}: Discussions and Potential Limitations. Possible extensions of this work and its current limitations.
    
    \item Section~\ref{appsec:filterprop}: Algebraic Properties and Parameterizations of Full-Spectrum Filters. Includes proofs for Section~\ref{sec:fspecgnn}, except for Section~\ref{subsec:express}.
    
    \item Section~\ref{appsec:exFSpec}: Expressivity of Full-Spectrum GNNs. Includes proofs for Section~\ref{subsec:express}.
    
    \item Section~\ref{appsec:classsep}: An Analysis of Spectral Filters for ClassWise Separability. Includes proofs for Section~\ref{sec:het}.
    
    \item Section~\ref{appsec:exp}: Experimental Details.
\end{itemize}

\section{More Related Work}
\label{appsec:relwork}
\subsection{Spectral GNN Architectures} 
Spectral GNNs implement graph convolution as a Laplacian filter $g(L)$, but exact eigendecomposition is typically avoided via scalable approximations. \citet{defferrard2016convolutional} introduced truncated Chebyshev polynomial parameterizations to realize localized spectral filtering, and GCN~\cite{kipf2017semisupervised} can be viewed as a first-order simplification that yields an efficient, widely used propagation rule. Subsequent work increasingly emphasizes stable and flexible spectral-response parameterizations beyond vanilla Chebyshev expansions: ChebNetII~\cite{he2022convolutional} attributes ChebNet’s weaker performance to poorly behaved learned coefficients, while BernNet~\cite{he2021bernnet} learns arbitrary spectral filters via Bernstein approximation, enabling controlled realization of diverse frequency responses. In parallel, diffusion-based formulations connect propagation to PageRank: PPNP/APPNP~\cite{gasteiger2018combining} derive personalized-PageRank propagation decoupled from prediction, and GPR-GNN~\cite{chien2021adaptive} generalizes this view by learning generalized PageRank weights to adapt across homophily/heterophily regimes. Finally, recent research finds that the choice of polynomial basis materially affects optimization and practical performance, motivating a sequence of methods that select or learn improved polynomial bases for spectral filtering~\cite{wang2022powerful,guo2023graph}. Moreover, spectral methods have been explored for heterophily by moving beyond purely low-pass smoothing: either by injecting learnable high-pass responses or by combining multiple spectral filters to adapt across homophily/heterophily regimes~\cite{bo2021beyond,yan2023trainable}.
Recent studies also highlight the role of polynomial bases under heterophily, motivating heterophily-aware and universal basis construction beyond predefined Laplacian bases~\cite{huang2024universal}.

\subsection{Expressivity of Spectral GNNs}
Expressivity has been a central theme in the GNN literature, and is most commonly studied through the WL-based hierarchy.
\citet{xupowerful} shows that standard neighborhood aggregation is at most as powerful as $1$-WL for graph discrimination, which catalyzed extensive WL-based analyses and the development of higher-order GNNs matching stronger WL variants~\cite{morris2019higherorder,maron2019provably,morris2020sparse}.
Going beyond this coarse WL lens, \citep{zhangbeyond} propose homomorphism expressivity as a finer quantitative hierarchy for comparing substructure-counting capabilities.
In contrast, theoretical understanding of spectral GNN expressivity is relatively limited:
\citet{wang2022powerful} establish universality results for spectral filtering on node signals, while also relating spectral models to $1$-WL limitations.
More recently, \cite{roth2025MIMO} extend the universality analysis to multi-channel (MIMO) settings.
\citet{gaihomomorphism} study spectral-invariant architectures that inject spectral information as basis-invariant positional encodings; their expressivity has been characterized via WL- and homomorphism-based analysis.

\section{Discussions and Potential Limitations}
\label{appsec:discussion}
\subsection{Discussions on $\textsc{FSpecGNN}$}
\textbf{Extending Full-Spectrum Filtering to Higher Orders.}
\label{appsubsec:discussion:higherorder}
Although we instantiate the framework at second order, i.e. the node-pair domain,
the same construction extends canonically to $k$-th order signals on $V^k$.
Let $L$ be a chosen graph Laplacian on $G$ and let $I_n$ be the $n\times n$ identity.
On the $k$-fold product domain, define the commuting operators
\[
L^{(1)} := L\otimes I_n^{\otimes (k-1)},\quad
L^{(2)} := I_n\otimes L\otimes I_n^{\otimes (k-2)},\ \ldots,\ 
L^{(k)} := I_n^{\otimes (k-1)}\otimes L .
\]
A $k$-variate spectral filter can then be parameterized as
$g\!\left(L^{(1)},\ldots,L^{(k)}\right)$.
For example, the third-order case acts on $V^3$ via
\[
g(L\otimes I_n\otimes I_n,\ I_n\otimes L\otimes I_n,\ I_n\otimes I_n\otimes L),
\]
which propagates along each coordinate while keeping the others fixed.
This higher-order viewpoint provides a systematic route to designing spectral GNNs
that operate on $k$-tuple domains, and enables a systematic connection between higher-order spectral constructions and the expressiveness hierarchy of GNNs, e.g.~\cite{zhangbeyond}.

\textbf{Beyond Graph Laplacians.}
\label{app:discussion:hodge}
A further direction is to move beyond graph Laplacians to Laplacians defined on
higher-dimensional combinatorial structures, such as simplicial or cell complexes.
These domains admit Laplacian operators, e.g. Hodge Laplacians, acting on $p$-cochains,
including edge/face-signals, whose spectra encode topology/geometry-related modes that are invisible to purely node-level models.
From this perspective, our use of the Cartesian product provides one concrete instance of how additional structure induces a corresponding notion of propagation:
full-spectrum filtering can be interpreted as propagation along different coordinates of a product graph.
For richer higher-order structures, the induced propagation is generally governed not by the graph Laplacian alone,
but by the Laplacians intrinsic to the underlying domain, suggesting that analogous spectral formalisms could be developed
to capture these more complex propagation patterns.
For example, cell complex neural networks generalize message passing via boundary/coboundary operators, inducing propagation patterns determined by higher-order incidence relations~\cite{hajij2020cell},
and BScNets further exploit block Hodge Laplacians to couple interactions across different dimensions, enabling information mixing between cells of different orders~\cite{chen2022bscnets}.
Developing spectral analyses and learning methods for these Laplacians remains under-explored and is a natural extension of our perspective.

\subsection{Potential Limitations and Future Work}
\label{app:discussion:limitations}
\textbf{Universal Approximation on Multi-channel Node-pair Signals}
Our discussion on universal approximation focuses on $1$-dimensional node-pair signals.
In practice, approximation capacity can be strengthened by adopting a MIMO design similar to~\citet{roth2025MIMO},
i.e., learning multiple input/output channels with channel mixing,
which is standard in spectral GNNs and can be incorporated into our framework directly.

\paragraph{Underexplored implementation choices for \textsc{FSpecGNN}.}
While our framework is general, our empirical instantiation explores only a limited portion of the design space.
First, for heterophily node classification we adopt a particular initial convolution in the form of the pair-domain signal,
$\varepsilon = I + \alpha\,\mathrm{GAT}$, which is a reasonable but by no means canonical choice; alternative initial convolutions
may yield better performance.
Second, for scalability we mainly use the low-rank parameterization with $S=1$, i.e., a single filter pair $(f,h)$.
The general form allows $S>1$ with multiple filter pairs, raising additional questions on how to compose and coordinate these components effectively.
A systematic exploration of $\varepsilon$ and multi-component designs is left to future work.

\section{Algebraic Properties and Parameterizations of Full-Spectrum Filters}
\label{appsec:filterprop}
\subsection{Basic Properties}

\begin{definition}
Let $G=(V,E)$ be an undirected graph.
The \emph{Cartesian product graph} $G\square G$ is the graph with vertex set $V\times V$, where
two vertices $(u,v)$ and $(u',v')$ are adjacent if and only if either
$u=u'$ and $(v,v')\in E$, or
$v=v'$ and $(u,u')\in E$.
\end{definition}

Under the above convention, the action of $L\otimes I_n$ and $I_n\otimes L$ can be described explicitly as follows:
for $(u,v)\in V\times V$,
\begin{align*}
\bigl((L\otimes I_n)\varepsilon\bigr)(u,v)
= (\varepsilon L)(u,v)
&=
\sum_{v'\in V} \varepsilon(u,v')\,L_{v',v},\\
\bigl((I_n\otimes L)\varepsilon\bigr)(u,v)
= (L\varepsilon)(u,v)
&=
\sum_{u'\in V} L_{u,u'}\,\varepsilon(u',v).
\end{align*}
That is, right (resp.\ left) multiplication by $L$ propagates the signal by changing the second (resp.\ first)
coordinate of $G\square G$ while keeping the other fixed, see the first row of Figure~\ref{fig:main} for an illustration.

We restate and prove Proposition~\ref{prop:joint_diag_q} as follows.
\begin{proposition}
\label{appprop:joint_diag_q}
Let $L\in\mathbb{R}^{n\times n}$ be symmetric with eigendecomposition
$L=U\Lambda U^\top$, where $U=[u_1,\dots,u_n]$ is orthogonal and
$\Lambda=\mathrm{diag}(\lambda_1,\dots,\lambda_n)$.
Let $q:\mathbb{R}\times\mathbb{R}\to\mathbb{R}$ be a bivariate polynomial
\[
q(s,t)=\sum_{a,b\ge 0}\alpha_{ab}s^a t^b.
\]
Define
\[
q(L\otimes I_n,\ I_n\otimes L)\ \coloneqq\ \sum_{a,b\ge 0}\alpha_{ab}\,(L\otimes I_n)^a(I_n\otimes L)^b.
\]
Then
\[
q(L\otimes I_n,\ I_n\otimes L)
=
(U\otimes U)\,
\mathrm{diag}\!\Big(q(\lambda_i,\lambda_j)\Big)_{1\le i,j\le n}\,
(U\otimes U)^\top,
\]
i.e., the operator is diagonal in the basis $\{u_i\otimes u_j\}_{i,j=1}^n$, with eigenvalues $q(\lambda_i,\lambda_j)$.
\end{proposition}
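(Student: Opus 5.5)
The plan is to diagonalize each of the two commuting factors $L\otimes I_n$ and $I_n\otimes L$ in the common basis $\{u_i\otimes u_j\}$, and then use linearity together with multiplicativity of the Kronecker product. The first step is to record the mixed-product property $(A\otimes B)(C\otimes D)=(AC)\otimes(BD)$. Together with $UU^\top=U^\top U=I_n$, it shows that $U\otimes U$ is orthogonal, since $(U\otimes U)^\top(U\otimes U)=(U^\top U)\otimes(U^\top U)=I_{n^2}$. This also justifies the claim in the main text that $\{u_i\otimes u_j\}$ is an orthonormal basis.

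Next, I will write $L=U\Lambda U^\top$ and $I_n=UIU^\top$ and apply the mixed-product rule:
\[
L\otimes I_n=(U\otimes U)(\Lambda\otimes I_n)(U\otimes U)^\top,\qquad I_n\otimes L=(U\otimes U)(I_n\otimes\Lambda)(U\otimes U)^\top .
\]
Both middle factors are diagonal. Because conjugation by the orthogonal matrix $U\otimes U$ is an algebra homomorphism, powers and products pass through it:
\[
(L\otimes I_n)^a(I_n\otimes L)^b=(U\otimes U)\,(\Lambda^a\otimes\Lambda^b)\,(U\otimes U)^\top .
\]
Summing over $a,b$ with coefficients $\alpha_{ab}$ then gives
\[
q(L\otimes I_n,\ I_n\otimes L)=(U\otimes U)\,D\,(U\otimes U)^\top,\qquad D=\sum_{a,b}\alpha_{ab}\,\Lambda^a\otimes\Lambda^b .
\]

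It remains to identify the diagonal entries of $D$. The diagonal entry of $\Lambda^a\otimes\Lambda^b$ at the position indexed by the column $u_i\otimes u_j$ is $\lambda_i^a\lambda_j^b$, so the corresponding entry of $D$ is $q(\lambda_i,\lambda_j)$. The only delicate point, and the one I expect to need care, is index bookkeeping. The ordering of the pairs $(i,j)$ along the diagonal must match both the column ordering of $U\otimes U$ and the column-stacking $\mathrm{vec}$ convention of Definition~\ref{def:kron-vec}, so that the result agrees with $Q(\Lambda)=\mathrm{diag}(\mathrm{vec}(Q_\lambda))$ in Proposition~\ref{prop:joint_diag_q}. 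Here $\mathrm{vec}$ places the entry $(i,j)$ at position $(j-1)n+i$, whereas the Kronecker ordering of the columns $u_j\otimes u_i$ (item 3 of Lemma~\ref{lem:vec-kron}) also places it at position $(j-1)n+i$. So under the paper's identification $u_j\otimes u_i=\mathrm{vec}(u_iu_j^\top)$, the eigenvalue attached to $\mathrm{vec}(u_iu_j^\top)$ is the pair from the first coordinate ($L\otimes I$ acts by right multiplication, picking up $\lambda_j$) and the second ($\lambda_i$). I will fix the convention consistently, noting that the stated form simply labels each basis element by its eigenvalue pair, and then check it by a direct computation.

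As that independent check, and to state the equivalent form, I will apply the operator to $\varepsilon$ and use item 2 of Lemma~\ref{lem:vec-kron}:
\[
(L^a\otimes L^b)\,\mathrm{vec}(\varepsilon)=\mathrm{vec}(L^b\varepsilon L^a).
\]
Expanding $\varepsilon=\sum_{i,j}(u_i^\top\varepsilon u_j)\,u_iu_j^\top$ in the eigengraph basis, each summand becomes $\lambda_i^b\lambda_j^a\,(u_i^\top\varepsilon u_j)\,u_iu_j^\top$. This recovers the spectral formula $\sum_{i,j}g(\lambda_i,\lambda_j)\,u_iu_i^\top\varepsilon u_ju_j^\top$ of Definition~\ref{def:full-filter}, up to the argument-order convention just discussed, and hence gives $Q_\lambda\ast_G\varepsilon=q(L\otimes I_n,\ I_n\otimes L)\,\varepsilon$.
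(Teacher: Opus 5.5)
Your proof is correct and is essentially the paper's argument. The mixed-product property reduces each monomial to $(U\otimes U)(\Lambda^a\otimes\Lambda^b)(U\otimes U)^\top$, whose diagonal entry in the slot of $u_i\otimes u_j$ is $\lambda_i^a\lambda_j^b$. You diagonalize $L\otimes I_n$ and $I_n\otimes L$ before taking powers, whereas the paper first forms $L^a\otimes L^b$; that reordering is immaterial.

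One caution on your closing aside, which lies outside this statement. Your own computation puts $q(\lambda_j,\lambda_i)$ on $u_iu_j^\top$, so with the paper's $\mathrm{vec}$ convention you actually get $q(L\otimes I_n,\ I_n\otimes L)\,\varepsilon=Q_\lambda^\top\ast_G\varepsilon$. This equals $Q_\lambda\ast_G\varepsilon$ for all $\varepsilon$ only when $q(\lambda_i,\lambda_j)=q(\lambda_j,\lambda_i)$, not merely ``up to convention.''
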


\begin{proof}
Since $L=U\Lambda U^\top$, we have $L^a=U\Lambda^aU^\top$ for any integer $a\ge 0$.
Using the mixed-product property of the Kronecker product,
\[
(A\otimes B)(C\otimes D)=(AC)\otimes(BD),
\]
we obtain
\[
(L\otimes I_n)^a = L^a\otimes I_n,\qquad (I_n\otimes L)^b = I_n\otimes L^b,
\]
and hence
\[
(L\otimes I_n)^a(I_n\otimes L)^b=(L^a\otimes I_n)(I_n\otimes L^b)=L^a\otimes L^b.
\]
Therefore,
\[
q(L\otimes I_n,\ I_n\otimes L)=\sum_{a,b\ge 0}\alpha_{ab}\,L^a\otimes L^b.
\]
Substituting $L^a=U\Lambda^aU^\top$ and $L^b=U\Lambda^bU^\top$ gives
\begin{align*}
q(L\otimes I_n,\ I_n\otimes L)
&=\sum_{a,b\ge 0}\alpha_{ab}\,(U\Lambda^aU^\top)\otimes(U\Lambda^bU^\top)\\
&=(U\otimes U)\Big(\sum_{a,b\ge 0}\alpha_{ab}\,\Lambda^a\otimes\Lambda^b\Big)(U^\top\otimes U^\top).
\end{align*}
It remains to identify the middle term. Since $\Lambda^a$ and $\Lambda^b$ are diagonal,
$\Lambda^a\otimes\Lambda^b$ is diagonal with diagonal entries $\lambda_i^a\lambda_j^b$
indexed by pairs $(i,j)$ such that ${1\le i,j\le n}$. Hence
\[
\sum_{a,b\ge 0}\alpha_{ab}\,\Lambda^a\otimes\Lambda^b
=
\mathrm{diag}\!\Big(\sum_{a,b\ge 0}\alpha_{ab}\lambda_i^a\lambda_j^b\Big)_{1\le i,j\le n}
=
\mathrm{diag}\!\Big(q(\lambda_i,\lambda_j)\Big)_{1\le i,j\le n}.
\]
Finally, $U^\top\otimes U^\top=(U\otimes U)^\top$ because $U$ is orthogonal, which yields
\[
q(L\otimes I_n,\ I_n\otimes L)
=
(U\otimes U)\,
\mathrm{diag}\!\Big(q(\lambda_i,\lambda_j)\Big)_{1\le i,j\le n}\,
(U\otimes U)^\top.
\]
\end{proof}

We restate and prove Proposition~\ref{prop:diag-embed} as follows.
\begin{proposition}
\label{appprop:diag-embed}
Define the diagonal embedding $\mathcal E:\mathbb R^n\to\mathbb R^{n\times n}$ and
the projection $\mathcal P:\mathbb R^{n\times n}\to\mathbb R^n$ by
\[
\mathcal E(x) \;:=\; \sum_{i=1}^n \hat x_i\, u_i u_i^\top,
\qquad
\hat x_i := u_i^\top x,
\]
and
\[
\mathcal P(H) \;:=\; \sum_{i,j=1}^n (u_i^\top H u_j)\,u_i.
\]
Then
\[
\mathcal P\!\left(g(L\otimes I_n,\ I_n\otimes L)\,\mathcal E(x)\right)
\;=\;
U\,\mathrm{diag}\!\big(g(\lambda_1,\lambda_1),\dots,g(\lambda_n,\lambda_n)\big)\,U^\top x .
\]
Equivalently, if we define a univariate spectral response $\tilde g(\lambda):=g(\lambda,\lambda)$,
then
\[
\mathcal P\!\left(g(L\otimes I_n,\ I_n\otimes L)\,\mathcal E(x)\right)
\;=\;
U\,\tilde g(\Lambda)\,U^\top x ,
\]
which recovers the classical spectral filtering with response $\tilde g$.
\end{proposition}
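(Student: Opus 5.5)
The plan is a direct computation in the eigengraph basis $\{u_iu_j^\top\}_{i,j=1}^n$. Everything reduces to three facts: the full-spectrum operator is diagonal in this basis, the embedding $\mathcal E(x)$ has only diagonal coefficients in it, and $\mathcal P$ reads off coefficients along the rows of this basis.

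\emph{Step 1: expand the operator.} By Definition~\ref{def:full-filter}, under the identification $\varepsilon\leftrightarrow\mathrm{vec}(\varepsilon)$, we have for any $H\in\mathbb R^{n\times n}$
\[
g(L\otimes I_n,\ I_n\otimes L)\,H=\sum_{i,j} g(\lambda_i,\lambda_j)\,u_iu_i^\top H\,u_ju_j^\top=\sum_{i,j} g(\lambda_i,\lambda_j)\,(u_i^\top H u_j)\,u_iu_j^\top .
\]
One could instead start from $(U\otimes U)G(\Lambda)(U\otimes U)^\top$. In that case I would use Lemma~\ref{lem:vec-kron}(3), $u_j\otimes u_i=\mathrm{vec}(u_iu_j^\top)$, together with the fact that the coefficient of $\mathrm{vec}(H)$ on $u_j\otimes u_i$ is $u_i^\top Hu_j$. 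The only subtlety there is whether the eigenvalue attached to $u_iu_j^\top$ is $g(\lambda_i,\lambda_j)$ or $g(\lambda_j,\lambda_i)$. This ordering is the main bookkeeping hazard, but it is harmless here because only diagonal pairs $i=j$ will survive.

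\emph{Step 2: plug in $H=\mathcal E(x)=\sum_k \hat x_k u_ku_k^\top$.} Orthonormality gives $u_i^\top\mathcal E(x)u_j=\sum_k\hat x_k\,\delta_{ik}\delta_{kj}=\hat x_i\,\delta_{ij}$. Therefore
\[
g(L\otimes I_n,\ I_n\otimes L)\,\mathcal E(x)=\sum_i g(\lambda_i,\lambda_i)\,\hat x_i\,u_iu_i^\top ,
\]
so the off-diagonal responses $g(\lambda_i,\lambda_j)$ with $i\neq j$ never act.

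\emph{Step 3: apply $\mathcal P$.} For the matrix $M$ just obtained, orthonormality again gives $u_a^\top M u_b=g(\lambda_a,\lambda_a)\hat x_a\delta_{ab}$. Hence
\[
\mathcal P(M)=\sum_a g(\lambda_a,\lambda_a)\,\hat x_a\,u_a=U\,\mathrm{diag}\big(g(\lambda_a,\lambda_a)\big)_{a}\,U^\top x=U\,\tilde g(\Lambda)\,U^\top x,
\]
where I used $\hat x=U^\top x$. This is exactly the spectral convolution of Definition~\ref{def:diag-spec-filter} with response $\tilde g(\lambda)=g(\lambda,\lambda)$. No step is a genuine obstacle; the only care needed is the index convention noted in Step 1.
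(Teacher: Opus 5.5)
Your proof is correct and follows essentially the same route as the paper: both show that $g(L\otimes I_n,\ I_n\otimes L)$ maps $u_iu_i^\top$ to $g(\lambda_i,\lambda_i)\,u_iu_i^\top$, so $\mathcal E(x)$ is acted on only by the diagonal responses, and both then use orthonormality to evaluate $\mathcal P$. The only cosmetic difference is that you read this eigen-relation off the expansion in Definition~\ref{def:full-filter}, whereas the paper derives it from $L\,u_iu_i^\top = u_iu_i^\top L = \lambda_i\,u_iu_i^\top$; your remark that the $g(\lambda_i,\lambda_j)$ versus $g(\lambda_j,\lambda_i)$ ordering is irrelevant here, because only $i=j$ survives, is apt.
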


\begin{proof}
We first record two elementary facts: (i) By definition, $\mathcal E(x)\in \mathrm{span}\{u_i u_i^\top\}_{i=1}^n$; (ii) 
\(
L(u_i u_i^\top) = (Lu_i)u_i^\top = \lambda_i\,u_i u_i^\top,
\)
and
\(
(u_i u_i^\top)L = u_i(u_i^\top L) = \lambda_i\,u_i u_i^\top
\)
as $Lu_i=\lambda_i u_i$ and $L$ is symmetric.
Therefore we have
\[
g(L\otimes I_n,\ I_n\otimes L)\,(u_i u_i^\top)
\;=\;
g(\lambda_i,\lambda_i)\,u_i u_i^\top .
\]
Linearity then gives
\[
g(L\otimes I_n,\ I_n\otimes L)\,\mathcal E(x)
=
\sum_{i=1}^n g(\lambda_i,\lambda_i)\,\hat x_i\,u_i u_i^\top .
\]

Finally, we apply $\mathcal P$ and get
\[
\mathcal P\!\left(\sum_{i=1}^n g(\lambda_i,\lambda_i)\,\hat x_i\,u_i u_i^\top\right)
=
\sum_{j,k=1}^n
\left(
u_j^\top
\left(\sum_{i=1}^n g(\lambda_i,\lambda_i)\,\hat x_i\,u_i u_i^\top\right)
u_k
\right)u_j .
\]
Using orthonormality, the coefficient of $u_j$ is exactly $g(\lambda_j,\lambda_j)\hat x_j$. Thus
\[
\mathcal P\!\left(g(L\otimes I_n,\ I_n\otimes L)\,\mathcal E(x)\right)
=
U\,\mathrm{diag}(g(\lambda_1,\lambda_1),\dots,g(\lambda_n,\lambda_n))\,U^\top x,
\]
which proves the claim.
\end{proof}

\subsection{Tensor Decomposition of Full-Spectrum Filters.}
\label{Appsub:filterprop}
Note that $(L\otimes I_n)^i(I_n\otimes L)^j=(I_n\otimes L)^j(L\otimes I_n)^i=L^i\otimes L^j$. We restate and prove Proposition~\ref{prop:tensor-decomposition} as follows.
\begin{proposition}
\label{appprop:tensor-decomposition}
Let $P(I_n \otimes L,\, L \otimes I_n)$ be a bivariate polynomial of total degree $K$ in the two matrix variables $I_n \otimes L$ and $L \otimes I_n$. We write
\[
    P(I_n \otimes L,\, L \otimes I_n)
    = \sum_{\substack{0 \le i,j \le K \\ i+j \le K}}
      c_{ij}\,(I_n \otimes L)^i (L \otimes I_n)^j
    = \sum_{\substack{0 \le i,j \le K \\ i+j \le K}}
      c_{ij}\, L^{\,i} \otimes L^{\,j},
\]
where the coefficients are collected into a $(K{+}1)\times(K{+}1)$ matrix
$A = (c_{ij})_{0 \le i,j \le K}$.
Then there exist polynomial pairs $\{(f_r,h_r)\}_{r=1}^R$ with $\deg f_r \le K$ and $ \deg h_r \le K$ for all $r$, such that
\[
P(I_n\otimes L,\, L\otimes I_n) \;=\; \sum_{r=1}^{R} f_r(L)\otimes h_r(L),
\]
if and only if $R\ge \mathrm{rank}(A)$.
\end{proposition}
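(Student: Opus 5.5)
The plan is to identify the operator $P(I_n\otimes L,\,L\otimes I_n)=\sum_{i,j}c_{ij}\,L^i\otimes L^j$ with its coefficient matrix $A$, and to identify a sum of $R$ separable terms $\sum_r f_r(L)\otimes h_r(L)$ with a sum of $R$ rank-one matrices. For each pair $(f_r,h_r)$ I write $f_r(t)=\sum_{i=0}^K a^{(r)}_i t^i$ and $h_r(t)=\sum_{j=0}^K b^{(r)}_j t^j$, with coefficient vectors $a^{(r)},b^{(r)}\in\mathbb{R}^{K+1}$. Bilinearity of the Kronecker product then gives
\[
f_r(L)\otimes h_r(L)=\sum_{i,j}a^{(r)}_i b^{(r)}_j\,L^i\otimes L^j .
\]
So $\sum_r f_r(L)\otimes h_r(L)$ is the operator attached to the coefficient matrix $\sum_r a^{(r)}(b^{(r)})^\top$. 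The whole statement therefore reduces to the question of when $A=\sum_{r=1}^R a^{(r)}(b^{(r)})^\top$.

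\textbf{Sufficiency} ($R\ge\mathrm{rank}(A)$). I take a rank factorization $A=\sum_{r=1}^{\mathrm{rank}(A)}a^{(r)}(b^{(r)})^\top$, for example from the SVD. I then define $f_r,h_r$ as the polynomials with these coefficient vectors; they have degree at most $K$ because the vectors have length $K+1$. If $R$ exceeds $\mathrm{rank}(A)$, I pad with the zero polynomials $f_r=h_r=0$. By the display above, the resulting sum equals $P$.

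\textbf{Necessity.} Suppose $P=\sum_{r=1}^R f_r(L)\otimes h_r(L)$. Subtracting the two expansions gives
\[
\sum_{i,j}\Big(c_{ij}-\sum_r a^{(r)}_i b^{(r)}_j\Big)L^i\otimes L^j=0 .
\]
To conclude $A=\sum_r a^{(r)}(b^{(r)})^\top$, and hence $\mathrm{rank}(A)\le R$, I need the family $\{L^i\otimes L^j\}_{0\le i,j\le K}$ to be linearly independent. I would derive this from two facts.
\begin{enumerate}
\item The Kronecker products $\{M_i\otimes N_j\}$ of two linearly independent families $\{M_i\}$ and $\{N_j\}$ are linearly independent. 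Indeed, $\mathrm{vec}$ identifies $M\otimes N$ with a rearrangement of $\mathrm{vec}(M)\otimes\mathrm{vec}(N)$, and tensor products of bases of two subspaces are independent.
\item The family $\{I,L,\dots,L^K\}$ is linearly independent.
\end{enumerate}
For the second fact I use the eigendecomposition $L^i=U\Lambda^iU^\top$. Linear independence of $\{L^i\}_{i\le K}$ then amounts to the Vandermonde matrix on the distinct eigenvalues of $L$ having full column rank $K+1$. This holds exactly when $L$ has at least $K+1$ distinct eigenvalues, equivalently when the minimal polynomial of $L$ has degree greater than $K$.

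The main obstacle is this last independence step. Without it the ``only if'' direction fails: if $L$ has few distinct eigenvalues, different coefficient matrices define the same operator, and the minimal $R$ can be smaller than $\mathrm{rank}(A)$. I would therefore state the necessity direction under the assumption that $L$ has at least $K+1$ distinct eigenvalues, which holds for instance under the simple-spectrum hypothesis of Theorem~\ref{thm:refine-2lwl} when $K<n$. Alternatively, I would read the statement at the level of formal polynomials $P(s,t)=\sum_r f_r(s)h_r(t)$. In that reading the monomials $s^it^j$ are automatically independent, and the argument above goes through with no assumption on $L$.
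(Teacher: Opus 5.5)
Your argument is the paper's: both identify $\sum_r f_r(L)\otimes h_r(L)$ with the coefficient matrix $\sum_r \alpha_r\beta_r^\top$ and apply the rank-one decomposition lemma, with your zero-padding doing the job of the paper's ``without loss of generality $\mathrm{rank}(A)=R$''. Your caveat is also correct and identifies an unjustified step in the paper, which simply asserts $P=\sum_r f_r(L)\otimes h_r(L)\iff A=B$. That step needs $\{L^i\otimes L^j\}_{0\le i,j\le K}$ to be linearly independent, i.e.\ $L$ to have at least $K+1$ distinct eigenvalues, and without it the ``only if'' direction is false. For example, it fails whenever $K\ge n$, and also for the combinatorial Laplacian of the complete graph: there $L^2=nL$, so $s^2+t^2-ns-nt$ has $\mathrm{rank}(A)=2$ but induces the zero operator. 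Your restricted hypothesis, or the formal-polynomial reading, is therefore the right fix.
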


\begin{lemma}
\label{applem:rank1decom}
Let $A\in\mathbb{R}^{n\times n}$ be a square matrix. Then
\begin{enumerate}
\item If $\mathrm{rank}(A)=R$, there exist vectors 
$\alpha_1,\dots,\alpha_R\in\mathbb{R}^n$ and 
$\beta_1,\dots,\beta_R\in\mathbb{R}^n$ such that
\[
A \;=\; \sum_{i=1}^R \alpha_i \beta_i^\top .
\]

\item Conversely, if $A$ can be written as 
\[
A=\sum_{i=1}^R \alpha_i \beta_i^\top ,
\]
then $\mathrm{rank}(A)\le R$. 
\end{enumerate}
In particular, the minimal number of rank-one matrices $\{\alpha_i\beta_i^\top\}$  required in such a decomposition equals $\mathrm{rank}(A)$.
\end{lemma}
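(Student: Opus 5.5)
We prove the two parts of Lemma~\ref{applem:rank1decom} separately, using only elementary linear algebra, and then combine them for the minimality statement.

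For part 1, assume $\mathrm{rank}(A)=R$. Then the column space $\mathrm{col}(A)\subseteq\mathbb{R}^n$ has dimension $R$, so we choose a basis $\alpha_1,\dots,\alpha_R$ of it. Each column $A_{:,k}$ lies in $\mathrm{col}(A)$ and therefore has a unique expansion $A_{:,k}=\sum_{i=1}^R b_{ik}\alpha_i$. We define $\beta_i:=(b_{i1},\dots,b_{in})^\top\in\mathbb{R}^n$. Comparing the $k$-th columns of both sides,
\[
\Bigl(\sum_{i=1}^R \alpha_i\beta_i^\top\Bigr)_{:,k}=\sum_{i=1}^R b_{ik}\alpha_i=A_{:,k}.
\]
Since this holds for every $k$, we get $A=\sum_{i=1}^R\alpha_i\beta_i^\top$. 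Alternatively, the singular value decomposition $A=\sum_{i=1}^R\sigma_i p_i q_i^\top$ gives the same conclusion, with $\alpha_i=\sigma_i p_i$ and $\beta_i=q_i$. The degenerate case $R=0$ corresponds to $A=0$ and the empty sum.

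For part 2, assume $A=\sum_{i=1}^R\alpha_i\beta_i^\top$. For any $x\in\mathbb{R}^n$,
\[
Ax=\sum_{i=1}^R(\beta_i^\top x)\,\alpha_i\in\mathrm{span}\{\alpha_1,\dots,\alpha_R\}.
\]
Hence $\mathrm{col}(A)\subseteq\mathrm{span}\{\alpha_i\}$, and so $\mathrm{rank}(A)=\dim\mathrm{col}(A)\le R$.

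The minimality claim follows directly from the two parts. Part 2 shows that any decomposition into rank-one matrices uses at least $\mathrm{rank}(A)$ terms. Part 1 exhibits a decomposition with exactly $\mathrm{rank}(A)$ terms. So the minimal number of terms equals $\mathrm{rank}(A)$.

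All steps are routine. The only point that needs care is defining the coefficient vectors $\beta_i$ correctly as the rows of the coordinate matrix of the columns of $A$ with respect to the basis $\{\alpha_i\}$, so that the outer-product sum reproduces $A$ column by column.
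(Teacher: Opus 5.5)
Your proof is correct and follows essentially the same route as the paper: part 1 factors $A=CD$ through a basis of the column space (your $\beta_i$ are exactly the rows of the paper's $D$), and part 2 uses the same containment $Ax\in\mathrm{span}\{\alpha_i\}$. Your write-up is slightly cleaner than the paper's, since it uses $R$ consistently where the paper's part 2 slips into $K$, and it handles the $R=0$ case explicitly.
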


\begin{proof}
(1) Since $\mathrm{rank}(A)=R$, the column space of $A$ has dimension $R$. 
Choose a basis $c_1,\dots,c_R\in\mathbb{R}^n$ for the column space and set 
$C=[\,c_1\ \cdots\ c_R\,]\in\mathbb{R}^{n\times R}$. Then there exists 
$D\in\mathbb{R}^{R\times n}$ such that $A=CD$. Writing $D$ row-wise as 
$D=[\,b_1 \ \cdots \ b_R\,]^\top$ with $b_i\in\mathbb{R}^n$, 
we obtain
\[
A = \sum_{i=1}^R c_i b_i^\top,
\]
where we set $\alpha_i=c_i$, $\beta_i=b_i$.

(2) Suppose $A=\sum_{i=1}^R \alpha_i\beta_i^\top$. For any $x\in\mathbb{R}^n$,
\[
Ax = \sum_{i=1}^K \alpha_i(\beta_i^\top x) \in \mathrm{span}\{\alpha_1,\dots,\alpha_K\}.
\]
Hence the column space of $A$ is contained in $\mathrm{span}\{\alpha_i\}$, which has dimension at most $K$. Therefore $\mathrm{rank}(A)\le K$. Combining with part (1), the minimal number of rank-one matrices required is exactly $\mathrm{rank}(A)$.
\end{proof}

\begin{proof}[Proof of Proposition~\ref{appprop:tensor-decomposition}]
We may write
\[
f_r(L) = \sum_{i=0}^{K} \alpha_{r,i} L^i,
\qquad
h_r(L) = \sum_{j=0}^{K} \beta_{r,j} L^j,
\]
where $\alpha_{r,i}=0$ and $\beta_{r,j}=0$ whenever $i>\deg f_r$ or $j>\deg h_r$, and $\deg f_r+\deg h_r \le K$ for all $r$.
Then
\begin{align*}
\sum_{r=1}^R f_r(L)\otimes h_r(L)
= \sum_{r=1}^R \;\sum_{i,j=0}^K \alpha_{r,i}\,\beta_{r,j}\; L^i \otimes L^j = \sum_{i,j=0}^K \Bigl(\sum_{r=1}^R \alpha_{r,i}\beta_{r,j}\Bigr)\, L^i \otimes L^j .
\end{align*}
For each $r$, define coefficient vectors
\[
\alpha_r = [\,\alpha_{r,0}, \alpha_{r,1}, \dots, \alpha_{r,K}\,]^\top \in \mathbb{R}^{K+1},
\qquad
\beta_r  = [\,\beta_{r,0}, \beta_{r,1}, \dots, \beta_{r,K}\,]^\top \in \mathbb{R}^{K+1}.
\]
Let $B\in \mathbb{R}^{(K+1)\times(K+1)}$ be the coefficient matrix
\(
B \;=\; \sum_{r=1}^R \alpha_r \beta_r^\top
\), i.e. 
\(
B_{ij} = \sum_{r=1}^R \alpha_{r,i}\beta_{r,j}.
\)
Then
\[
\sum_{r=1}^R f_r(L)\otimes h_r(L) \;=\; \sum_{i,j=0}^K B_{ij}\, L^i \otimes L^j.
\]
On the other hand,we know
\(
P(I_n\otimes L,\, L\otimes I_n) = \sum_{i,j=0}^K A_{ij}\, L^i \otimes L^j,
\)
then we have the equivalence
\[
P(I_n\otimes L,\, L\otimes I_n) \;=\; \sum_{r=1}^R f_r(L)\otimes h_r(L)
\quad\Longleftrightarrow\quad
A = B .
\]

Suppose there exists such $f_r$ and $h_r$, with $A=B= \sum_{r=1}^R \alpha_{r}\beta_{r}^\top$, we know $\mathrm{rank}(A)\leq R$ by Lemma~\ref{applem:rank1decom}. Conversely, without loss of generality, we assume $\mathrm{rank}(A)=R$. Through the decomposition in Lemma~\ref{applem:rank1decom}, we have $A=\sum_{r=1}^K \alpha_r\beta_r^\top$. Then the polynomials can be constructed by $f_r(L)=\sum_{i=0}^K\alpha_{r,i}L^{i}$ and $h_r(L)=\sum_{i=0}^K\beta_{r,i}L^{i}$.

\end{proof}

\begin{corollary}
\label{appcor:poly-expand}
Let $L = U \Lambda U^\top\in \mathbb{R}^{n\times n}$ denote the eigendecomposition of a symmetric Laplacian,
and let $M$ be a square matrix of the same dimension.
Then the filtered matrix
\(
P(I\otimes L, L\otimes I)\,\mathrm{vec}(M)
\)
admits the spectral expansion
\[
P(I\otimes L, L\otimes I)\,(M) \;=\; \sum_{i,j} g(\lambda_i,\lambda_j)\langle u_i, M u_j\rangle u_i u_j^\top
\]
where
\(g(\lambda_i,\lambda_j) \;=\; \sum_{r=1}^R f_r(\lambda_i)\,h_r(\lambda_j)\) for polynomial pairs $\{(f_r,h_r)\}_{r=1}^R$.
\end{corollary}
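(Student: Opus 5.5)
The statement to prove is Corollary~\ref{appcor:poly-expand}. I will combine the tensor decomposition of Proposition~\ref{appprop:tensor-decomposition} with the vectorization identity of Lemma~\ref{lem:vec-kron}, and then expand each univariate filter in the eigenbasis.

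\textbf{Step 1: reduce to a sum of Kronecker products.} By Proposition~\ref{appprop:tensor-decomposition}, with $R=\mathrm{rank}(A)$, the operator decomposes as
\[
P(I\otimes L,\,L\otimes I)=\sum_{r=1}^R \tilde f_r(L)\otimes \tilde h_r(L)
\]
for univariate polynomials $\tilde f_r,\tilde h_r$.

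\textbf{Step 2: pass back to matrix form.} By Lemma~\ref{lem:vec-kron}(2) and symmetry of $\tilde f_r(L)$, we have $(\tilde f_r(L)\otimes\tilde h_r(L))\,\mathrm{vec}(M)=\mathrm{vec}\bigl(\tilde h_r(L)\,M\,\tilde f_r(L)\bigr)$. Hence, identifying vectors with matrices via $\mathrm{vec}$,
\[
P(I\otimes L,\,L\otimes I)(M)=\sum_{r=1}^R \tilde h_r(L)\,M\,\tilde f_r(L).
\]

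\textbf{Step 3: expand in the eigenbasis.} Substitute $\tilde h_r(L)=\sum_i \tilde h_r(\lambda_i)u_iu_i^\top$ and $\tilde f_r(L)=\sum_j \tilde f_r(\lambda_j)u_ju_j^\top$. This gives
\[
\sum_{i,j}\Bigl(\sum_r \tilde h_r(\lambda_i)\tilde f_r(\lambda_j)\Bigr)(u_i^\top M u_j)\,u_iu_j^\top .
\]
Now relabel $f_r:=\tilde h_r$ and $h_r:=\tilde f_r$. The coefficient then becomes $g(\lambda_i,\lambda_j)=\sum_r f_r(\lambda_i)h_r(\lambda_j)$, and $u_i^\top M u_j=\langle u_i,Mu_j\rangle$. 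This is exactly the claimed expansion.

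\textbf{Main obstacle: index conventions.} The delicate point is bookkeeping the order of the factors. The statement writes $P(I\otimes L,\,L\otimes I)$, while Proposition~\ref{appprop:joint_diag_q} uses $L\otimes I$ first. Moreover, $\mathrm{vec}$ maps $(A\otimes B)$ to left multiplication by $B$ and right multiplication by $A$. I would therefore fix which factor acts on the row index $i$ and which on the column index $j$, and choose the labelling of $(f_r,h_r)$ to match. The same conclusion also follows directly from Proposition~\ref{appprop:joint_diag_q}, since each $u_iu_j^\top$ is an eigenvector with eigenvalue $P$ evaluated at the appropriate pair $(\lambda_i,\lambda_j)$ or $(\lambda_j,\lambda_i)$. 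Any mismatch can be absorbed by swapping the roles of $f_r$ and $h_r$, because $g$ is only claimed to be of the form $\sum_r f_r h_r$ for some polynomial pairs.
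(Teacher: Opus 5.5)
Your proof is correct and follows the paper's route. Like the paper, you invoke Proposition~\ref{appprop:tensor-decomposition} to write the operator as $\sum_r f_r(L)\otimes h_r(L)$, turn each Kronecker term into a two-sided matrix product, and expand both univariate filters in the eigenbasis. Your tracking of the $\mathrm{vec}$ ordering, which yields $\tilde h_r(L)\,M\,\tilde f_r(L)$ before relabelling, is more careful than the paper, which writes $f_r(L)\,M\,h_r(L)$ directly; as you note, the swap is harmless because the claim only asserts that $g$ has the form $\sum_r f_r(\lambda_i)h_r(\lambda_j)$.
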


\begin{proof}
According to Proposition~\ref{appprop:tensor-decomposition}, there exist polynomial pairs $\{(f_r,h_r)\}_{r=1}^R$ such that
\[
P(I\otimes L,\, L\otimes I) \;=\; \sum_{r=1}^{R} f_r(L)\otimes h_r(L).
\]
Since $f_r(L) = \sum_i f_r(\lambda_i)\,u_i u_i^\top$ and 
$h_r(L) = \sum_j h_r(\lambda_j)\,u_j u_j^\top$, we have
\begin{align*}
\Big(f_r(L)\otimes h_r(L)\Big)(M)&=f_r(L)\, M \, h_r(L) 
= \Big(\sum_i f_r(\lambda_i) u_i u_i^\top \Big) M 
   \Big(\sum_j h_r(\lambda_j) u_j u_j^\top \Big)\\
   &= \sum_{i,j} f_r(\lambda_i)h_r(\lambda_j)\, 
u_i \underbrace{(u_i^\top M u_j)}_{\langle u_i,Mu_j\rangle}\, u_j^\top.
\end{align*}
Summing over $r$ completes the proof.
\end{proof}

\section{Expressivity of Full-Spectrum GNNs}
\label{appsec:exFSpec}
\subsection{Universal Approximation on Node-Pair Signals}
We restate and prove Theorem~\ref{thm:univ-approx} as follows.
\begin{theorem}
\label{appthm:univ-approx}
Let $\tilde L = U\Lambda U^\top$ be the normalized Laplacian with a simple spectrum
, i.e., $\lambda_1,\dots,\lambda_n$ are pairwise distinct.
Consider a \emph{linear} \textsc{FSpecGNN} acting on a $d$-channel node-pair
representation $H\in \mathbb{R}^{n\times n\times d}\cong \mathbb R^{n^2\times d}$:
\[
\mathcal T_{g,W}(H)
\;:=\;
(U\otimes U)\Big(G_\lambda \odot \big((U\otimes U)^\top H \big)\Big)\, W,
\qquad (G_\lambda)_{ij}=g(\lambda_i,\lambda_j),
\]
where $g:\mathbb R^2\to\mathbb R$ is a bivariate spectral function, and
$W\in\mathbb R^{d\times 1}$ produces a one-dimensional output.
Assume that the input $H$ satisfies that
for every $1\le i,j\le n$ there exists at least one channel $c\in\{1,\cdots,d\}$ with
\[
\big(U^\top H^{(c)} U\big)_{ij} \neq 0.
\]
Then for any target one-dimensional node-pair signal $Y\in\mathbb R^{n\times n}\cong \mathbb{R}^{n^2}$,
there exist a vector $W\in\mathbb R^{d\times 1}$ and a bivariate function $g$ such that $\mathcal T_{g,W}(H)=Y$.
\end{theorem}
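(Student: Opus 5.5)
The plan is to work entirely in the paired frequency domain, where the linear \textsc{FSpecGNN} acts as an entrywise multiplication. The model is linear in the channel dimension. Writing $H=[H^{(1)},\dots,H^{(d)}]$ and $W=(w_1,\dots,w_d)^\top$, we have
\[
\mathcal T_{g,W}(H)=(U\otimes U)\Big(G_\lambda\odot \textstyle\sum_{c} w_c\,(U\otimes U)^\top H^{(c)}\Big),
\]
because the Hadamard product with $G_\lambda$ and multiplication by $U\otimes U$ are both linear and commute with right-multiplication by $W$. Using Lemma~\ref{lem:vec-kron}, $(U\otimes U)^\top \mathrm{vec}(H^{(c)})=\mathrm{vec}(U^\top H^{(c)}U)$. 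So, setting $Z:=\sum_c w_c\,U^\top H^{(c)}U\in\mathbb R^{n\times n}$, the output is $U(G_\lambda\odot Z)U^\top$. Since $U\otimes U$ is orthogonal, the target condition $\mathcal T_{g,W}(H)=Y$ is equivalent to
\[
G_\lambda\odot Z=\hat Y:=U^\top Y U .
\]

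The first step is to choose $W$ so that $Z_{ij}\neq 0$ for every pair $(i,j)$. For each $(i,j)$, the map $w\mapsto Z_{ij}(w)=\sum_c w_c (U^\top H^{(c)}U)_{ij}$ is a linear functional on $\mathbb R^d$. By hypothesis some coefficient is nonzero, so the functional is nonzero and its zero set is a hyperplane. A finite union of $n^2$ hyperplanes cannot cover $\mathbb R^d$, so there is a $w$ avoiding all of them. A generic (e.g.\ random Gaussian) $W$ works, or one can argue by a measure-zero/Baire argument.

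The second step is to define the filter values $G_{ij}:=\hat Y_{ij}/Z_{ij}$, and then find a bivariate $g$ with $g(\lambda_i,\lambda_j)=G_{ij}$. This is where the simple-spectrum assumption enters. Because $\lambda_1,\dots,\lambda_n$ are pairwise distinct, the $n^2$ points $(\lambda_i,\lambda_j)\in\mathbb R^2$ are pairwise distinct. Hence any prescribed values can be interpolated. Concretely, I would use the tensor-product Lagrange interpolant
\[
g(s,t)=\sum_{i,j}G_{ij}\,\ell_i(s)\ell_j(t),\qquad \ell_i(s)=\prod_{k\ne i}\frac{s-\lambda_k}{\lambda_i-\lambda_k}.
\]
This is a continuous bivariate polynomial, and it is even admissible for the Route~II parameterization via Proposition~\ref{prop:joint_diag_q}. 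It satisfies $g(\lambda_i,\lambda_j)=G_{ij}$, so $G_\lambda\odot Z=\hat Y$, and inverting the GFT gives $\mathcal T_{g,W}(H)=Y$.

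The main obstacle is the first step: the hypothesis only guarantees a nonzero channel separately for each index, not a single channel that is nonzero everywhere. The avoidance-of-hyperplanes argument handles this, and it is the step I would write out carefully. The interpolation step is routine once distinctness of the eigenvalue pairs is noted. Without a simple spectrum this step would fail, since repeated eigenvalues force equal filter values on distinct frequency pairs.
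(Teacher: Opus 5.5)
Your proposal is correct and follows the paper's proof essentially step for step. You reduce to the paired frequency domain via $\bar C=\sum_c w_c\,U^\top H^{(c)}U$ and choose $W$ outside a finite union of proper hyperplanes so that every $\bar C_{ij}\neq 0$. You then take $g$ to be the tensor-product Lagrange interpolant of $\widehat Y_{ij}/\bar C_{ij}$ on the grid $\{(\lambda_i,\lambda_j)\}$, which is well defined by the simple spectrum.
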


\begin{proof}
Write the $c$-th channel of $H$ as $H^{(c)}\in\mathbb R^{n\times n}$, and define
\[
C^{(c)} := U^\top H^{(c)} U \in \mathbb R^{n\times n},\qquad c\in\{1,\cdots,d\}.
\]
For a feature transformation matrix $W=(w_1,\dots,w_d)^\top\in\mathbb R^{d\times 1}$, define the
transformed one-dimensional representation
\[
\bar H \;:=\;\sum_{c=1}^d w_c\, H^{(c)} \in \mathbb R^{n\times n},
\qquad
\bar C \;:=\; U^\top \bar H U \;=\;\sum_{c=1}^d w_c\, C^{(c)}.
\]
Therefore we have:
\[
\mathcal T_{g,W}(H)
=
U\Big(G_\lambda \odot (U^\top \bar H U)\Big)U^\top
=
U\Big(G_\lambda \odot \bar C\Big)U^\top.
\]
Hence, to realize an arbitrary target $Y$, it suffices to choose $W$ so that
$\bar C_{ij}\neq 0$ for all $i,j$, and then choose $g$ so that
$g(\lambda_i,\lambda_j)\bar C_{ij} = \widehat Y_{ij}$, where
$\widehat Y := U^\top Y U$.

\paragraph{Choose $W$.}
Fix any $(i,j)$. By the support assumption, there exists at least one channel
$c$ with $C^{(c)}_{ij}\neq 0$. Therefore the scalar linear form
\[
\bar C_{ij}\;=\;\sum_{c=1}^d w_c\, C^{(c)}_{ij}
\]
is not identically zero, and its zero set
\(
\{W: \bar C_{ij}=0\}
\)
is a proper hyperplane in $\mathbb R^d$.
Since there are finitely many pairs $(i,j)$, the union
\[
\mathcal Z \;:=\; \bigcup_{i=1}^n\bigcup_{j=1}^n \{W:\bar C_{ij}=0\}
\]
is a finite union of proper hyperplanes, hence a measure-zero subset of
$\mathbb R^d$.
Consequently, we can pick $W\notin \mathcal Z$, so that
\[
\bar C_{ij}\neq 0,\qquad \forall\, i,j\in\{1,\cdots,n\}.
\]

\paragraph{Choose $g$.}
With such a choice of $W$, define $\widehat Y := U^\top Y U$ and
\[
M_{ij}\;:=\;\frac{\widehat Y_{ij}}{\bar C_{ij}},\qquad \forall\, i,j\in\{1,\cdots,n\}.
\]
If we construct $g$ satisfying $g(\lambda_i,\lambda_j)=M_{ij}$ for all $i,j$, then
we will entrywisely have
\[
\big(U^\top \mathcal T_{g,W}(H)U\big)_{ij}
=
g(\lambda_i,\lambda_j)\,\bar C_{ij}
=
M_{ij}\,\bar C_{ij}
=
\widehat Y_{ij},
\]
so $U^\top \mathcal T_{g,W}(H)U=\widehat Y$ and thus $\mathcal T_{g,W}(H)=Y$.

Because the spectrum is simple, define the univariate Lagrange polynomials
\[
\ell_i(s) \;:=\; \prod_{k\neq i}\frac{s-\lambda_k}{\lambda_i-\lambda_k},
\qquad\text{so that}\qquad
\ell_i(\lambda_{i'})=\mathbf 1[i=i'].
\]
Now define the bivariate polynomial
\[
q(s,t)\;:=\;\sum_{i=1}^n\sum_{j=1}^n M_{ij}\,\ell_i(s)\,\ell_j(t).
\]
Then for any pair $(i,j)$,
\[
q(\lambda_i,\lambda_j)
=\sum_{i'=1}^n\sum_{j'=1}^n M_{i'j'}\,\ell_{i'}(\lambda_i)\,\ell_{j'}(\lambda_j)
= M_{ij}.
\]
Taking $g=q$ completes the construction and yields $\mathcal T_{q,W}(H)=Y$.
\end{proof}

\subsection{Non-Isomorphic Graph Distinguishability of \textsc{FSpecGNN}s}

At the node level, we have the following result reformulated from \citet{wang2022powerful}, which states that any spectral GNN expressed as a polynomial of the normalized Laplacian is upper-bounded in expressive power by $1$-$\mathrm{WL}$.

\begin{proposition}
\label{prop:WLspec}
Let $G=(V,E,\ell)$ be a labeled graph with node labels
$\ell:V\to\mathbb{R}^d$, and let $\widetilde{L}$ denote its normalized
graph Laplacian.  
Consider a $K$-th order linear polynomial spectral GNN
\[
Z = p(\widetilde{L})XW = \sum_{t=0}^{K} \alpha_t\,\widetilde{L}^t(XW),
\]
where $X$ is the matrix form of $\ell$ and $W\in \mathbb{R}^{d\times d'}$.
For each node $u\in V$, define the output at node $u$ by $Z(u) = [\,Z\,]_{u,:}$, i.e. the $u$-th row of $Z\in \mathbb{R}^{n\times d'}$.
Let $\chi^{1-\mathrm{WL}, (k)}_G(u)$ denote the color of node $u$ on graph $G$ after
$k$ iterations of the 1-WL refinement
initialized with labels $\ell$.
Then, for any $i,j\in V$,
\[
\chi^{\mathrm{1-WL}, (K+1)}_G(i)=\chi^{\mathrm{1-WL}, (K+1)}_G(j)\quad\Longrightarrow\quad
Z(i)=Z(j).
\]

\end{proposition}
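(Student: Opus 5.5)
We will prove a slightly stronger statement by induction on $t$. Set $Y^{(t)} := \widetilde{L}^t(XW)$ for $0\le t\le K$. The claim to establish is: for each $t$, the row $Y^{(t)}(u)$ is a function of the 1-WL color $\chi^{(t+1)}_G(u)$ alone. Once this holds, every summand of $Z=\sum_t\alpha_t Y^{(t)}$ is determined by $\chi^{(K+1)}_G(u)$. The reason is that 1-WL colors only refine over iterations: equality of $\chi^{(K+1)}$ forces equality of $\chi^{(t+1)}$ for every $t\le K$. Hence $Z(i)=Z(j)$ whenever $\chi^{(K+1)}_G(i)=\chi^{(K+1)}_G(j)$. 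The refinement property itself follows by a trivial induction, since $\ell^{(t)}(v)$ is an argument of the injective $\mathrm{HASH}$ producing $\ell^{(t+1)}(v)$.

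The key preliminary observation is that one color already records the degree. Two nodes with equal $\chi^{(1)}$ have equal $\chi^{(0)}$ and equal neighbor multisets, so in particular equal degree. It follows that $\chi^{(s+1)}(v)$ determines the pair $(\chi^{(s)}(v), \deg v)$, and that $\chi^{(s+1)}(v)$ also determines the multiset $\{\!\!\{(\chi^{(s)}(w),\deg w) : w\in N(v)\}\!\!\}$ whenever $s\ge 1$.

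For the base case $t=0$, the row $Y^{(0)}(u)=X_uW$ is a function of $\ell(u)=\chi^{(0)}(u)$, and hence of $\chi^{(1)}(u)$.

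For the inductive step, write $\widetilde L = I - D^{-1/2}AD^{-1/2}$. This gives
\[
Y^{(t+1)}(u) = Y^{(t)}(u) - \sum_{w\in N(u)} \frac{Y^{(t)}(w)}{\sqrt{\deg u\,\deg w}}.
\]
By the induction hypothesis, $Y^{(t)}(w)$ is a function of $\chi^{(t+1)}(w)$. Each summand is therefore a function of $\chi^{(t+1)}(w)$, $\deg w$ and $\deg u$. We now apply the observation with $s=t+1\ge 1$. It shows that $\chi^{(t+2)}(u)$ determines $\chi^{(t+1)}(u)$, determines $\deg u$, and determines the multiset $\{\!\!\{(\chi^{(t+1)}(w),\deg w): w\in N(u)\}\!\!\}$. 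Consequently $Y^{(t+1)}(u)$ is a function of $\chi^{(t+2)}(u)$, which completes the induction.

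The main obstacle is the degree normalization $(\deg u\deg w)^{-1/2}$. Each neighbor's contribution depends on the neighbor's own degree, which is not directly part of its color $\chi^{(t+1)}(w)$. This is exactly why the statement uses $K+1$ rather than $K$ iterations, and the dependence is resolved by the one-step offset in the induction. For $t=0$ the offset is not even needed, since $Y^{(0)}$ depends only on $\chi^{(0)}$ and $\chi^{(1)}$ already encodes neighbor degrees through $\chi^{(1)}(w)$ for $w\in N(u)$ inside $\chi^{(2)}(u)$. The argument carries over verbatim to the unnormalized Laplacian $D-A$.
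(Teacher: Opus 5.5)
Your proof is correct and uses the same mechanism as the paper's: each application of $\widetilde L$ costs one 1-WL round, and the single extra round supplies the degrees needed for the $(\deg u\,\deg w)^{-1/2}$ normalization. The paper rewrites the filter in $\widetilde A = I-\widetilde L$ and packages the computation as a Horner-scheme message-passing simulation whose node state stores the degree, the running accumulator and the original label, then appeals to the fact that such message passing is dominated round-by-round by 1-WL. You instead induct on each power $\widetilde L^t XW$ separately, let the colors $\chi^{(s)}$ with $s\ge 1$ carry the degree information, and combine the powers through monotone refinement; this is a somewhat more direct route that avoids the paper's state bookkeeping.
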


\begin{proof}
Let $\widetilde{A}=I-\widetilde{L}$ denote the normalized adjacency matrix, then there exist real coefficients $\{\theta_t\}_{t=0}^K$ such that $Z$ can be equivalently written as
\[
Z = \sum_{t=0}^{K} \alpha_t (I-\widetilde{A})^{t}
    (XW)= \sum_{t=0}^{K} \theta_t\,\widetilde{A}^{\,t}(XW),
\]
Accordingly, $Z$ can be viewed as a $(K{+}1)$-step embedding
\[
h_i^{(k+1)}
= H^{(k)}\!\left(
h_i^{(k)},\
\big\{\!\!\big\{\,h_j^{(k)} : j\in N_G(i)\,\big\}\!\!\big\}
\right),
\]
where $h_i^{(k)}$ is the embedding of node $i$ at step $k$ with $h_i^{(0)}=(XW)_{i,:}$, $N_G(i)$ denotes the neighborhood of node $i$ in graph $G$ and each update function $H^{(k)}$ is defined as follows.
\begin{itemize}
    \item For the first step $k=0$:
\[
H^{(0)}\!\Big(
\ell(i),\
\big\{\!\!\big\{\,\ell(j):j\in N(i)\,\big\}\!\!\big\}
\Big)
=
\big(\,
|N(i)|,\ \theta_{K}\ell(i),\ \ell(i)
\,\big).
\]
    \item For intermediate steps $k\in\{1,2,\dots,K-1\}$:
\begin{align*}
    &H^{(k)}\!\Big(
h_i^{(k)},\
\big\{\!\!\big\{\,h_j^{(k)}:j\in N(i)\,\big\}\!\!\big\}
\Big)
=\\
\Big(
p_1(h_i^{(k)})&,\
\sum_{j\in N(i)}
    \frac{p_2(h_j^{(k)})}{
    \sqrt{p_1(h_i^{(k)})\,p_1(h_j^{(k)})}}
    +\theta_{K-k}\,p_3(h_i^{(k)}),\
p_3(h_i^{(k)})
\Big).
\end{align*}

    \item For the final step $k=K$:
\[
H^{(K)}\!\left(
h_i^{(K)},\
\big\{\!\!\big\{\,h_j^{(K)}:j\in N(i)\,\big\}\!\!\big\}
\right)
=
\sum_{j\in N(i)}
    \frac{p_2(h_j^{(K)})}{
    \sqrt{p_1(h_i^{(K)})\,p_1(h_j^{(K)})}}
    +\theta_{0}\,p_3(h_i^{(K)}).
\]
\end{itemize}
Here $p_1$, $p_2$ and $p_3$ denote the projections onto the respective coordinates
of the vector $h_i^{(k)}$, namely,
\begin{align}
    &p_1(h_i^{(k+1)})=p_1(h_i^{(k)}),\label{eq:p1}\\
    &p_2(h_i^{(k+1)})=\sum_{j\in N(i)}
    \frac{p_2(h_j^{(k)})}{
    \sqrt{p_1(h_i^{(k)})\,p_1(h_j^{(k)})}}
    +\theta_{K-k}\,p_3(h_i^{(k)}),\label{eq:p2}\\
    &p_3(h_i^{(k+1)})=p_3(h_i^{(k)})\label{eq:p3}
\end{align}
for $k\in \{0,1,\cdots,K-1\}$ with $p_1(h_i^{(0)})=|N(i)|$, $p_2(h_i^{(0)})=\theta_K \ell (i)$ and $p_3(h_i^{(0)})=\ell(i)$.
According to \eqref{eq:p1} and \eqref{eq:p3} with their initializations, we can rewrite \eqref{eq:p2} as
\[p_2(h_i^{(k+1)})
    =\sum_{j\in N_G(i)}\widetilde{A}_{ij}\,p_2(h_j^{(k)})
    +\theta_{K-k}\,\ell(i)
    =\sum_j\widetilde{A}_{ij}\,p_2(h_j^{(k)})
    +\theta_{K-k}\,\ell(i),\]
in which $\widetilde{A}_{ij}$ denotes the $(i,j)$-entry of the normalized adjacency matrix $\widetilde{A}$. For the final step, we similarly have 
\[h_i^{(K+1)}=
    \sum_j\widetilde{A}_{ij}\,p_2(h_j^{(K)})
    +\theta_{0}\,\ell(i).\]
Hence, we inductively find that the overall output is the composition of all
embedding steps
\[
Z = H^{(K)}\circ H^{(K-1)}\circ \cdots\circ H^{(0)}(XW).
\]

Recall that the 1-WL color refinement is defined by
\[
\chi^{\mathrm{1-WL},(k+1)}(i)
= \mathsf{hash}\!\left(
\chi^{\mathrm{1-WL},(k)}(i),\;
\big\{\!\!\big\{\chi^{\mathrm{1-WL},(k)}(j): j\in N(i)\big\}\!\!\big\}
\right),
\]
where $\mathsf{hash}$ is a perfect hash function and
$\chi^{\mathrm{1-WL},(0)}(i)=\ell(i)$.
Since each update function $H^{(k)}$ depends only on
the current state of node $i$ and the multiset of its neighbors' states
through permutation-invariant operations that are shared across all nodes,
the updates $H^{(k)}$ are consistent with the refinement steps of 1-WL.
Consequently, if two nodes have the same 1-WL color after the $k$-th iteration,
then they receive identical updates under $H^{(k)}$, i.e.,
\[
\chi^{\mathrm{1-WL}, (k)}_G(i)=\chi^{\mathrm{1-WL}, (k)}_G(j)
\quad\Longrightarrow\quad
h_i^{(k)} = h_j^{(k)}.
\]
By induction over $k$, all intermediate representations remain equal,
and therefore the final outputs coincide:
\[
\chi^{\mathrm{1-WL}, (K+1)}_G(i)=\chi^{\mathrm{1-WL}, (K+1)}_G(j)\quad\Longrightarrow\quad
y(i)=y(j).
\]
\end{proof}

A subtle conceptual issue arises when bridging the apparent inconsistency noted in \citet{wang2022powerful}. In \citet{wang2022powerful}, Proposition~4.3 shows that, for a fixed graph and fixed node
features, a linear GNN with a degree-$K$ spectral polynomial filter is bounded by
$(K\!+\!1)$ iterations of the 1-WL test, as formulated above. On the other hand, Theorem~4.1 establishes
that, under the ``no multiple eigenvalues'' and ``no missing frequency components'' conditions,
linear spectral GNNs enjoy a universal approximation property and can, in principle,
map all nodes to distinct outputs. This seems at odds with the well-known fact that
1-WL often fails to distinguish certain pairs of vertices.

To bridge this gap, \citet{wang2022powerful} argues that their spectral conditions imply
that 1-WL can distinguish all ``non-isomorphic nodes'' (Corollary~4.4), and moreover
that the graph admits no nontrivial labeled automorphisms (Theorem~4.6). Consequently,
every pair of nodes becomes non-isomorphic in their sense, and 1-WL must distinguish
all vertices, thereby removing the apparent contradiction with Proposition~4.3.

Although the formal statements of
Theorem~4.5 and Theorem~4.6 are mathematically correct, the way they are invoked to
resolve the apparent inconsistency is
not justified, as it is important to distinguish between two similar yet inequivalent notions of
node equivalence. One notion is \emph{automorphism-based equivalence}: two vertices
are considered ``isomorphic'' if they lie in the same orbit of the labeled automorphism
group $\mathrm{Aut}(G,\ell)$. The other is $1$-$\mathrm{WL}$ \emph{equivalence}: two vertices receive the
same stable $1$-$\mathrm{WL}$ color or, equivalently, have isomorphic unrolling trees \cite{morris2020sparse}.
In general, these notions do not coincide. Even with a trivial automorphism group, 1-WL may still collapse all nodes to a single color. A classical example is the Frucht graph (see Figure~\ref{fig:placeholder}):
it is $3$-regular and asymmetric, i.e. its automorphism group consists only of the identity, yet
$1$-$\mathrm{WL}$ with uniform initial labels fails to distinguish any of its vertices.

\begin{figure}
    \centering
    \includegraphics[width=0.3\linewidth]{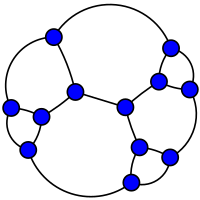}
    \caption{The Frucht graph (image adapted from \citet{enwiki:1318917930}). Despite having a trivial automorphism group, the stable 1-WL refinement assigns a single color to all vertices.}
    \label{fig:placeholder}
\end{figure}

The reconciliation between
universality and the $1$-$\mathrm{WL}$ upper bound does not actually stem from the absence
of nontrivial automorphisms. Rather, it is the universality assumptions themselves that
severely constrain the admissible graph-signal pairs: a simple Laplacian spectrum, together
with labels having full spectral support, enforces a highly non-degenerate labeling. When $1$-$\mathrm{WL}$ is run on such labels,
its refinement becomes much more discriminative; in fact, combined with the upper bound in
Proposition~4.3, universality forces the stable $1$-$\mathrm{WL}$ partition on $(G,\ell)$ to
refine all the way down to singletons, since any two vertices mapped to different values by
a universal linear spectral GNN must already be separated by $1$-$\mathrm{WL}$.

We restate and prove Theorem~\ref{thm:order2bound} as follows.
\begin{theorem}
\label{appthm:order2bound}
Let $G=(V,E,\ell)$ be a labeled graph with node labels
$\ell:V\to\mathbb{R}^d$, and let $\widetilde{L}$ denote its normalized graph Laplacian.
Define
\[
A := \widetilde{L}\otimes I_n,
\qquad
B := I_n\otimes \widetilde{L},
\]
where $n=|V|$.
Let
\[
p(x,y)=\sum_{\substack{a,b\ge 0\\ a+b\le K}}\alpha_{ab}x^a y^b
\]
be a bivariate polynomial of total degree at most $K$.
Consider the edge-level signal
\[
\varepsilon:V\times V\to\mathbb{R}^{2d+2},\qquad
\varepsilon(u,v)=\bigl(\ell(u),\ell(v),\mathbb{I}[u=v],\mathbb{I}[\{u,v\}\in E]\bigr),
\]
and let $\mathcal L\in\mathbb{R}^{n^2\times(2d+2)}$ be its matrix form, i.e.,
the $(u,v)$-th row of $\mathcal L$ is $\varepsilon(u,v)$.
For a learnable matrix $W\in\mathbb{R}^{(2d+2)\times d'}$, define
\[
\mathcal L' := p(A,B)\mathcal LW\in\mathbb{R}^{n^2\times d'}.
\]
For $(u,v)\in V\times V$, denote by
\[
\mathcal L'(u,v):=\mathcal L'_{(u,v),:}
\]
the $(u,v)$-th row of $\mathcal L'$.

Let $\chi_G^{\mathrm{Local2\text{-}GNN},(k)}(u,v)$ denote the color of the pair
$(u,v)$ after $k$ iterations of the Local $2$-GNN refinement initialized with
the pair label $\varepsilon$.
Then for any $(u_1,v_1),(u_2,v_2)\in V\times V$,
\[
\chi_G^{\mathrm{Local2\text{-}GNN},(K)}(u_1,v_1)
=
\chi_G^{\mathrm{Local2\text{-}GNN},(K)}(u_2,v_2)
\quad\Longrightarrow\quad
\mathcal L'(u_1,v_1)=\mathcal L'(u_2,v_2).
\]
\end{theorem}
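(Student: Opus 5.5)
The plan is a layer-by-layer unrolling argument, parallel to the proof of Proposition~\ref{prop:WLspec}, but run on the product domain $V\times V$ with the two coordinates propagating separately. First, by linearity in $W$ and in $p$, it suffices to prove the claim for each monomial $A^aB^b\mathcal L$ with $a+b\le K$. By the proof of Proposition~\ref{appprop:joint_diag_q} and Lemma~\ref{lem:vec-kron}, this monomial is the matrix signal $\widetilde L^{\,b}\,\varepsilon\,\widetilde L^{\,a}$, applied channel-wise. So I will show that for every $k\le K$, every $a+b=k$, and every channel, the entry $(\widetilde L^{\,b}\varepsilon\widetilde L^{\,a})(u,v)$ is a function of $\chi^{(K)}(u,v)$. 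Colours only refine, so it is enough to show that this entry is a function of a colour at round $\ge k$.

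Next I will strip the normalisation off into degree factors. I will use $\widetilde L^{\,b}=D^{1/2}P^bD^{-1/2}$ on the left, with $P=I-D^{-1}A_G$, and $\widetilde L^{\,a}=D^{-1/2}Q^aD^{1/2}$ on the right, with $Q=I-A_GD^{-1}$. Then
\[
\widetilde L^{\,b}\varepsilon\widetilde L^{\,a}=D^{1/2}\,P^b\bigl(D^{-1/2}\varepsilon D^{-1/2}\bigr)Q^a\,D^{1/2}.
\]
The point of this choice is that left multiplication by $P$ sends $f(u,v)$ to $f(u,v)-d_u^{-1}\sum_{w\in N(u)}f(w,v)$, and right multiplication by $Q$ sends it to $f(u,v)-d_v^{-1}\sum_{w\in N(v)}f(u,w)$. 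Each step therefore uses only the degree of the \emph{centre} coordinate, together with the multisets $\mathcal M_{u\leftarrow v}$ and $\mathcal M_{v\leftarrow u}$. The centre degree is the cardinality of the very multiset being hashed, so it is available at the round where the aggregation occurs. The induction is then the same as for Proposition~\ref{prop:WLspec}. If $f$ is a function of the round-$j$ colour, then $Pf$ and $fQ$ are functions of the round-$(j{+}1)$ colour, because the Local $2$-GNN update hashes exactly $\ell^{(j)}(u,v)$, $\{\!\!\{\ell^{(j)}(w,v):w\in N(u)\}\!\!\}$ and $\{\!\!\{\ell^{(j)}(u,w):w\in N(v)\}\!\!\}$. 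The commutation $AB=BA$ lets me interleave the $a$ right steps and $b$ left steps in any order, so $k$ steps use exactly $k$ rounds. The outer factor $D^{1/2}(\cdot)D^{1/2}$ needs $d_u,d_v$. For $K\ge1$ these are read off from the sizes of the two multisets in $\chi^{(1)}(u,v)$, hence from $\chi^{(K)}(u,v)$. The case $K=0$ is trivial, since $p$ is then a constant multiple of the identity.

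The main obstacle is the innermost factor $D^{-1/2}\varepsilon D^{-1/2}$, whose entry $\varepsilon(u',v')/\sqrt{d_{u'}d_{v'}}$ requires degrees at the pair reached at the deepest level. For walks of total length $k<K$ the endpoint pair carries a colour of round $K-k\ge1$, which determines $d_{u'}$ and $d_{v'}$, so there is no problem. For $k=K$, only the round-$0$ colour of the endpoint pair is controlled, and that colour does not record degrees. My first attempt will be to absorb these factors into the preceding aggregation step. At depth $K-1$, the pair $(x,v')$ with $u'\in N(x)$ sees the multiset $\{\!\!\{\ell^{(0)}(u',v')\}\!\!\}$, and I will try to recover $d_{u'}$ from the neighbouring pair's colour via the atomic type and the symmetric role of $(u',x)$. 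If this does not close, I will state honestly the fallback that the argument certainly gives. That fallback is the bound with $K$ replaced by $K+1$ for $\widetilde L$, or exactly $K$ for the combinatorial Laplacian $L=D-A$, where no degree normalisation appears inside. Alternatively, it holds for $\widetilde L$ if degrees are included in the initial pair label, which is the convention needed for the stated $K$-round bound to go through.
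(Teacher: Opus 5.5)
The obstruction you isolate at the innermost factor $D^{-1/2}\varepsilon D^{-1/2}$ is real. Your first attempt, absorbing it into the preceding aggregation step, cannot succeed, because the statement is false as written for $\widetilde L$.

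Take $K=1$, $p(x,y)=x$, constant labels $\ell\equiv 1$ (so $d=1$), and $W=(1,0,0,0)^\top$. Then $\mathcal LW=\mathbf 1_{n^2}$ and $\mathcal L'=(\widetilde L\mathbf 1)\otimes\mathbf 1$. On a diagonal pair, under either vectorization convention, $\mathcal L'(u,u)=(\widetilde L\mathbf 1)(u)=1-d_u^{-1/2}\sum_{w\in N(u)}d_w^{-1/2}$. On the path with vertices $v_0,\dots,v_4$ and edges $v_iv_{i+1}$, this equals $\tfrac12-\tfrac1{\sqrt2}$ at $v_1$ and $0$ at $v_2$. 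Yet $\chi^{(1)}(v_1,v_1)=\chi^{(1)}(v_2,v_2)$: both roots carry the label $(1,1,1,0)$, and each of the two multisets consists of two copies of $(1,1,0,1)$.

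The failure persists for every $K$. Take a path with at least $2K+3$ vertices and $p=x^K$. The pair $(v_K,v_K)$ and a diagonal pair at distance $>K$ from both ends have isomorphic depth-$K$ local unrolling trees. Their outputs are $(-\tfrac12)^{K-1}(\tfrac12-\tfrac1{\sqrt2})$ and $0$. This is exactly your diagnosis. Atomic types record only whether $u'=v'$ and whether $u'v'\in E$. The parent pair's multiset has cardinality $d_x$, not $d_{u'}$. So nothing in $\chi^{(K)}$ pins down the degree at a depth-$K$ leaf.

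The paper's proof has the same gap. It asserts $\widetilde L_{u_1u'}=\widetilde L_{u_2\phi(u')}$ on the grounds that off-diagonal entries depend only on the edge and the endpoint degrees. This needs $\deg u'=\deg\phi(u')$, which the matched child subtrees supply only when they have depth at least $1$. The induction hypothesis is applied to child pairs that are only depth-$(K-1)$ equivalent, so a correct version must be indexed by tree depth. Made precise in this way, it bottoms out at depth-$0$ children, whose labels $\varepsilon$ carry no degree information. The paper's own first-order analogue, Proposition~\ref{prop:WLspec}, needs $K+1$ rounds of 1-WL for precisely this reason.

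So drop the absorption attempt and state your fallback as the result. Your factorization $\widetilde L^{\,b}\varepsilon\widetilde L^{\,a}=D^{1/2}P^b(D^{-1/2}\varepsilon D^{-1/2})Q^aD^{1/2}$ does prove the bound with $\chi^{(K+1)}$ for $\widetilde L$:
\begin{itemize}
\item the inner factor is a function of $\chi^{(1)}$;
\item each of the at most $K$ steps costs one round;
\item the outer factor needs only $d_u$ and $d_v$.
\end{itemize}
The same argument gives the $K$-round bound for the combinatorial Laplacian, or for initial pair labels augmented with degrees. Your bookkeeping is also cleaner than the paper's entry-by-entry coefficient matching. Each step uses only the degree of the coordinate being aggregated, which is the cardinality of the multiset being hashed.
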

We utilize the following definition and lemma from \citet{morris2020sparse}.
\begin{definition}
\label{def:local-unrolling-tree-pairs}
Let $G=(V,E,\ell)$ be a labeled graph.
For a pair $(u,v)\in V\times V$, define its depth-$k$ local unrolling tree,
denoted by
\[
\mathrm{L\text{-}UNR}_G^{(k)}(u,v),
\]
recursively as follows.

\begin{itemize}
    \item For $k=0$, $\mathrm{L\text{-}UNR}_G^{(0)}(u,v)$ consists of a single root,
    whose vertex label is the initial pair label
    \[
    \varepsilon(u,v)
    =
    \bigl(\ell(u),\ell(v),\mathbb{I}[u=v],\mathbb{I}[\{u,v\}\in E]\bigr).
    \]

    \item For $k\ge 1$, $\mathrm{L\text{-}UNR}_G^{(k)}(u,v)$ is the rooted directed tree
    obtained by taking a root labeled by $\varepsilon(u,v)$ and attaching:
    \begin{enumerate}
        \item for every $u'\in N_G(u)$, one child whose attached subtree is
        $\mathrm{L\text{-}UNR}_G^{(k-1)}(u',v)$, with the edge from the root to this child
        labeled by $1$;
        \item for every $v'\in N_G(v)$, one child whose attached subtree is
        $\mathrm{L\text{-}UNR}_G^{(k-1)}(u,v')$, with the edge from the root to this child
        labeled by $2$.
    \end{enumerate}
\end{itemize}

Two such trees are said to be isomorphic if there exists a bijection between
their vertex sets that preserves:
(i) the root,
(ii) the parent--child relation,
(iii) the vertex labels, and
(iv) the directed edge labels in $\{1,2\}$.
\end{definition}

\begin{lemma}[\citet{morris2020sparse}]
\label{lem:go-sparse-local-unr}
For any labeled graph $G$, any $(u_1,v_1),(u_2,v_2)\in V\times V$, and any
$k\ge 0$,
\[
\chi_G^{\mathrm{Local2\text{-}GNN},(k)}(u_1,v_1)
=
\chi_G^{\mathrm{Local2\text{-}GNN},(k)}(u_2,v_2)
\]
if and only if
\[
\mathrm{L\text{-}UNR}_G^{(k)}(u_1,v_1)
\cong
\mathrm{L\text{-}UNR}_G^{(k)}(u_2,v_2).
\]
\end{lemma}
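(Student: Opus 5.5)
The plan is to prove both directions at once by induction on $k$, using the facts that $\mathrm{HASH}$ is injective and that $\mathrm{L\text{-}UNR}^{(k+1)}_G(u,v)$ is built from depth-$k$ trees in exactly the same pattern as the Local $2$-GNN update.

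\textbf{Preliminary facts.} First I record two auxiliary facts.

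\emph{(a) Colors refine.} Since $\mathrm{HASH}$ is injective and its first argument is $\ell^{(t)}(u,v)$, equal colors at step $t+1$ force equal colors at step $t$. Iterating, they force equal colors at every earlier step, in particular equal initial labels $\varepsilon(u,v)$.

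\emph{(b) Truncation.} Cutting $\mathrm{L\text{-}UNR}^{(k+1)}_G(u,v)$ at depth $k$ yields exactly $\mathrm{L\text{-}UNR}^{(k)}_G(u,v)$. This follows by induction directly from Definition~\ref{def:local-unrolling-tree-pairs}. Any isomorphism preserves the root and the parent--child relation, hence preserves depths, so it restricts to an isomorphism of the truncations.

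\emph{(c) Tree-isomorphism criterion.} I need the characterization of isomorphism of rooted trees with vertex and edge labels. Two depth-$(k+1)$ trees $T_1,T_2$ are isomorphic if and only if three conditions hold:
\begin{itemize}
\item their roots carry the same label;
\item for each edge label $e\in\{1,2\}$, the multiset of isomorphism classes of subtrees hanging from root-children along $e$-edges agrees.
\end{itemize}
For ($\Rightarrow$), an isomorphism maps root-children to root-children, preserves the edge labels, and restricts to isomorphisms of the attached subtrees. For ($\Leftarrow$), choose bijections between the children matching isomorphism classes within each edge label, and glue the subtree isomorphisms together with the root map.

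\textbf{Base case.} For $k=0$ both sides reduce to the statement $\varepsilon(u_1,v_1)=\varepsilon(u_2,v_2)$. This holds because $\ell^{(0)}(u,v)$ and the single-vertex tree both carry exactly $\varepsilon(u,v)$.

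\textbf{Inductive step} (assuming the claim at level $k$).

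\emph{($\Rightarrow$)} Suppose $\ell^{(k+1)}(u_1,v_1)=\ell^{(k+1)}(u_2,v_2)$. By injectivity of $\mathrm{HASH}$, the multisets $\mathcal M^{(k)}_{u_1\leftarrow v_1}$ and $\mathcal M^{(k)}_{u_2\leftarrow v_2}$ coincide, and likewise the $v\leftarrow u$ multisets. This gives a bijection $N(u_1)\to N(u_2)$, $w\mapsto w'$, with $\ell^{(k)}(w,v_1)=\ell^{(k)}(w',v_2)$. By the induction hypothesis, $\mathrm{L\text{-}UNR}^{(k)}(w,v_1)\cong\mathrm{L\text{-}UNR}^{(k)}(w',v_2)$, which are exactly the edge-label-$1$ subtrees; the edge-label-$2$ subtrees are handled the same way. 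By (a), the root labels $\varepsilon(u_i,v_i)$ agree. Criterion (c) then gives the tree isomorphism.

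\emph{($\Leftarrow$)} Suppose the depth-$(k+1)$ trees are isomorphic. By (c), the multisets of label-$1$ and label-$2$ subtree classes agree, and the induction hypothesis converts these into equality of the multisets $\mathcal M^{(k)}$. By (b), the depth-$k$ truncations are isomorphic, so the induction hypothesis gives $\ell^{(k)}(u_1,v_1)=\ell^{(k)}(u_2,v_2)$. All three arguments of $\mathrm{HASH}$ then coincide, so the colors at step $k+1$ are equal.

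\textbf{Expected obstacle.} The main subtlety is the mismatch between the two recursions. The color update carries the previous color $\ell^{(k)}(u,v)$ of the pair itself, whereas the tree only carries the initial root label $\varepsilon(u,v)$. Facts (a) and (b) are exactly what bridge this gap: (a) recovers $\varepsilon$ from the color in the forward direction, and (b) recovers $\ell^{(k)}$ from the tree in the backward direction, so I would state and check them carefully first. A minor further point is that (c) must keep the label-$1$ and label-$2$ children separate, which is why the two neighbor multisets enter the update as separate arguments.
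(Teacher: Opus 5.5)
Your proof is correct. The paper gives no proof of this lemma; it imports it from \citet{morris2020sparse}. Your simultaneous induction on $k$ is the standard argument behind that citation.

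Injectivity of $\mathrm{HASH}$ matches the two neighbor multisets $\mathcal{M}^{(k)}_{u\leftarrow v}$ and $\mathcal{M}^{(k)}_{v\leftarrow u}$ to the label-$1$ and label-$2$ children. Your facts (a) and (b) correctly bridge the mismatch between the carried color $\ell^{(k)}(u,v)$ and the bare root label $\varepsilon(u,v)$.

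The only step you leave implicit is in the ($\Leftarrow$) direction. The induction hypothesis makes the map from isomorphism classes of depth-$k$ trees to step-$k$ colors well defined and injective. That is what lets you pass from equal multisets of subtree classes to equal multisets of colors. It follows immediately from the ``if and only if'' form of the hypothesis, so this is not a gap.
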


\begin{proof}[Proof of Theorem~\ref{appthm:order2bound}]
Assume that
\[
\chi_G^{\mathrm{Local2\text{-}GNN},(K)}(u_1,v_1)
=
\chi_G^{\mathrm{Local2\text{-}GNN},(K)}(u_2,v_2).
\]
By Lemma~\ref{lem:go-sparse-local-unr}, this implies
\[
\mathrm{L\text{-}UNR}_G^{(K)}(u_1,v_1)
\cong
\mathrm{L\text{-}UNR}_G^{(K)}(u_2,v_2).
\]
Set
\[
X:=\mathcal LW\in\mathbb{R}^{n^2\times d'}.
\]
Since the initial pair label is exactly $\varepsilon$, and $X(u,v)=\varepsilon(u,v)W$,
the value $X(u,v)$ depends only on the root label of
$\mathrm{L\text{-}UNR}_G^{(0)}(u,v)$.
Hence, whenever two depth-$0$ local unrolling trees are isomorphic, the
corresponding rows of $X$ are equal.

We now claim that for every $a,b\ge 0$ with $a+b\le K$,
\[
(A^aB^bX)(u_1,v_1)=(A^aB^bX)(u_2,v_2),
\]
where \(
A := \widetilde{L}\otimes I_n
\)
 and \( 
B := I_n\otimes \widetilde{L}.
\)
Once this is proved, linearity yields
\[
\mathcal L'(u_1,v_1)
=
\sum_{a+b\le K}\alpha_{ab}(A^aB^bX)(u_1,v_1)
=
\sum_{a+b\le K}\alpha_{ab}(A^aB^bX)(u_2,v_2)
=
\mathcal L'(u_2,v_2),
\]
which is exactly the desired conclusion.

It remains to prove the claim. We proceed by induction on the total degree
$a+b$.

\medskip
\noindent
\textbf{Base case: $a+b=0$.}
Then $a=b=0$, so
\[
A^0B^0X = X.
\]
Since the roots of
$\mathrm{L\text{-}UNR}_G^{(0)}(u_1,v_1)$ and
$\mathrm{L\text{-}UNR}_G^{(0)}(u_2,v_2)$
have the same label under the above isomorphism, we obtain
\[
X(u_1,v_1)=X(u_2,v_2).
\]

\medskip
\noindent
\textbf{Induction step:}
Assume the claim holds for all pairs $(a',b')$ with $a'+b'\le t-1$, where
$1\le t\le K$, and let $a+b=t$.

We consider two cases.

\smallskip
\noindent
\emph{Case 1: $a\ge 1$.}
Write
\[
A^aB^bX = A(A^{a-1}B^bX).
\]
For any $(u,v)\in V\times V$, using the row-wise action of
$A=\widetilde L\otimes I_n$, we have
\[
(A^{a}B^{b}X)(u,v)
=
\widetilde L_{uu}\,(A^{a-1}B^{b}X)(u,v)
+
\sum_{u'\in N_G(u)} \widetilde L_{uu'}\,(A^{a-1}B^{b}X)(u',v).
\]
Since
\[
\mathrm{L\text{-}UNR}_G^{(K)}(u_1,v_1)
\cong
\mathrm{L\text{-}UNR}_G^{(K)}(u_2,v_2),
\]
their roots have the same label $\varepsilon$, hence in particular
$\ell(u_1)=\ell(u_2)$, $\ell(v_1)=\ell(v_2)$, and
\[
\mathbb{I}[u_1=v_1]=\mathbb{I}[u_2=v_2],
\qquad
\mathbb{I}[\{u_1,v_1\}\in E]=\mathbb{I}[\{u_2,v_2\}\in E].
\]
Moreover, because the rooted tree isomorphism preserves the multiset of
children connected by an edge labeled $1$, there is a bijection between the
neighbors of $u_1$ and the neighbors of $u_2$ matching
\[
\mathrm{L\text{-}UNR}_G^{(K-1)}(u',v_1)
\quad\text{and}\quad
\mathrm{L\text{-}UNR}_G^{(K-1)}(\phi(u'),v_2)
\]
for a suitable bijection $\phi:N_G(u_1)\to N_G(u_2)$.
In particular,
\[
\deg_G(u_1)=\deg_G(u_2),
\]
and therefore the corresponding normalized Laplacian coefficients coincide:
\[
\widetilde L_{u_1u_1}=\widetilde L_{u_2u_2},
\qquad
\widetilde L_{u_1u'}=\widetilde L_{u_2\phi(u')}
\quad
\text{for all }u'\in N_G(u_1),
\]
since for the normalized Laplacian the off-diagonal entry depends only on the
existence of the edge and the endpoint degrees.

Now $a-1+b=t-1\le K-1$, so by the induction hypothesis,
\[
(A^{a-1}B^bX)(u_1,v_1)=(A^{a-1}B^bX)(u_2,v_2),
\]
and for every $u'\in N_G(u_1)$,
\[
(A^{a-1}B^bX)(u',v_1)=(A^{a-1}B^bX)(\phi(u'),v_2).
\]
Substituting these equalities into the above expansion gives
\[
(A^{a}B^{b}X)(u_1,v_1)=(A^{a}B^{b}X)(u_2,v_2).
\]

\smallskip
\noindent
\emph{Case 2: $b\ge 1$.}
This is completely analogous. Writing
\[
A^aB^bX = B(A^aB^{b-1}X),
\]
for any $(u,v)\in V\times V$ we have
\[
(A^{a}B^{b}X)(u,v)
=
\widetilde L_{vv}\,(A^{a}B^{b-1}X)(u,v)
+
\sum_{v'\in N_G(v)} \widetilde L_{vv'}\,(A^{a}B^{b-1}X)(u,v').
\]
Since the rooted tree isomorphism preserves the children connected by an edge
labeled $2$, there is a bijection
\[
\psi:N_G(v_1)\to N_G(v_2)
\]
such that
\[
\mathrm{L\text{-}UNR}_G^{(K-1)}(u_1,v')
\cong
\mathrm{L\text{-}UNR}_G^{(K-1)}(u_2,\psi(v'))
\]
for all $v'\in N_G(v_1)$.
Hence
\[
\deg_G(v_1)=\deg_G(v_2),
\]
and the corresponding normalized Laplacian coefficients also coincide. Since
$a+(b-1)=t-1$, the induction hypothesis implies
\[
(A^{a}B^{b-1}X)(u_1,v_1)=(A^{a}B^{b-1}X)(u_2,v_2),
\]
and for every $v'\in N_G(v_1)$,
\[
(A^{a}B^{b-1}X)(u_1,v')=(A^{a}B^{b-1}X)(u_2,\psi(v')).
\]
Therefore,
\[
(A^{a}B^{b}X)(u_1,v_1)=(A^{a}B^{b}X)(u_2,v_2).
\]

This completes the induction, and hence proves the theorem.
\end{proof}
We restate and prove Theorem~\ref{thm:refine-2lwl} as follows.

\begin{theorem}
\label{appthm:refine-2lwl}
Following the notation of Theorem~\ref{appthm:order2bound}, consider linear polynomial \textsc{FSpecGNN} of the form
\[
\mathcal{L}' = q(\widetilde{L}\otimes I,\; I\otimes \widetilde{L})(\mathcal{L}W),
\]
where $W\in \mathbb{R}^{n^2\times 1}$ is a transform matrix.
Let
$\{u_i\}_{i=1}^n$ denote the eigenvectors in the eigendecomposition
$\widetilde{L} = \sum_{i=1}^n \lambda_i u_i u_i^\top$. Suppose that the graph
and signal satisfy:
\begin{enumerate}
\item[(i)] $\widetilde{L}$ has a simple spectrum, i.e.\ $\lambda_1,\dots,\lambda_n$
are pairwise distinct;

\item[(ii)] the spectral coefficients
\(
c_{ij} := u_i^\top \mathrm{Matrix}(\mathcal{L}W)\, u_j
\)
are nonzero for every pair $(i,j)\in\{1,\dots,n\}^2$.
\end{enumerate}
Let $\chi^{\mathrm{Local2\text{-}GNN},(\infty)}_G(i,j)$ denote the stable Local
$2$-GNN color of the pair $(u,v)\in V\times V$. Then there exists a polynomial
$q$ in two variables such that the output
satisfies
\[
\mathcal{L}'(u_1,v_1)=\mathcal{L}'(u_2,v_2) \quad\Longrightarrow\quad
\chi^{\mathrm{Local2\text{-}GNN},(\infty)}_G(u_1,v_1)=
\chi^{\mathrm{Local2\text{-}GNN},(\infty)}_G(u_2,v_2).
\]
Equivalently, the partition of $V\times V$ induced by $\mathcal{L}'$ is a refinement of the
stable Local $2$-GNN coloring. Moreover, if the stable Local $2$-GNN partition on $V\times V$
is not discrete, i.e. some color class has size at least two, then this refinement
is strict.
\end{theorem}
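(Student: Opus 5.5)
The plan is to prove something stronger than the stated implication. I will show that, under (i) and (ii), some bivariate polynomial $q$ makes the output $\mathcal{L}'$ \emph{injective} as a map $V\times V\to\mathbb{R}$. An injective output induces the discrete partition of $V\times V$. The discrete partition refines every partition, in particular the stable Local $2$-GNN coloring. So the implication $\mathcal{L}'(u_1,v_1)=\mathcal{L}'(u_2,v_2)\Rightarrow\chi^{(\infty)}(u_1,v_1)=\chi^{(\infty)}(u_2,v_2)$ holds trivially, because the hypothesis forces $(u_1,v_1)=(u_2,v_2)$. No combinatorial analysis of Local $2$-GNN is needed; the whole argument is spectral interpolation, reusing the machinery of Theorem~\ref{appthm:univ-approx}.

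\textbf{Step 1 (target signal).} Fix any one-dimensional node-pair signal $Y\in\mathbb{R}^{n\times n}$ whose $n^2$ entries are pairwise distinct, for instance $Y(u,v)=n\cdot\mathrm{idx}(u)+\mathrm{idx}(v)$ for an enumeration $\mathrm{idx}$ of $V$. Let $\widehat Y:=U^\top YU$.

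\textbf{Step 2 (interpolating the filter).} Let $\varepsilon:=\mathrm{Matrix}(\mathcal{L}W)$ and $c_{ij}=u_i^\top\varepsilon u_j$. By Definition~\ref{def:full-filter} and Proposition~\ref{appprop:joint_diag_q}, for any bivariate polynomial $q$,
\[
U^\top\,\mathrm{Matrix}(\mathcal{L}')\,U=\bigl(q(\lambda_i,\lambda_j)\,c_{ij}\bigr)_{i,j}.
\]
By (ii) every $c_{ij}\neq 0$, so I set $M_{ij}:=\widehat Y_{ij}/c_{ij}$. This is the point where the given $W$ is used directly, so the hyperplane-avoidance step of Theorem~\ref{appthm:univ-approx} is not needed. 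By (i) the Lagrange polynomials $\ell_i(s)=\prod_{k\ne i}(s-\lambda_k)/(\lambda_i-\lambda_k)$ are well defined. Then $q(s,t):=\sum_{i,j}M_{ij}\ell_i(s)\ell_j(t)$ satisfies $q(\lambda_i,\lambda_j)=M_{ij}$. Hence $U^\top\mathrm{Matrix}(\mathcal{L}')U=\widehat Y$, and since $U$ is orthogonal, $\mathcal{L}'=Y$. In short, this is Theorem~\ref{appthm:univ-approx} applied with the prescribed $W$.

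\textbf{Step 3 (conclusion and strictness).} Since $Y$ has distinct entries, $\mathcal{L}'(u_1,v_1)=\mathcal{L}'(u_2,v_2)$ forces $(u_1,v_1)=(u_2,v_2)$, so the colors agree; this gives the refinement. For strictness, suppose some stable Local $2$-GNN color class contains two distinct pairs. Those pairs receive different $\mathcal{L}'$ values, so the $\mathcal{L}'$-partition is strictly finer than the Local $2$-GNN partition.

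I expect no real obstacle. The only delicate point is interpretive: the statement's $W$ has output dimension $1$ (the printed shape $n^2\times 1$ should read $(2d+2)\times 1$), and (ii) is imposed on $\mathcal{L}W$ rather than on $\mathcal{L}$. Once that is fixed, Step 2 is the Lagrange construction already carried out earlier.
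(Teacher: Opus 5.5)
Your proposal is correct and is essentially the paper's proof. The paper also realizes an arbitrary target by interpolating $d_{ij}/c_{ij}$ on the grid $\{(\lambda_i,\lambda_j)\}$ and then picks a target with pairwise distinct values; its per-class disjoint-interval construction is therefore redundant, as your Step~1 shows, and one harmless nit remains: with column-stacking $\mathrm{vec}$, $q(L\otimes I, I\otimes L)$ scales $u_iu_j^\top$ by $q(\lambda_j,\lambda_i)$ rather than $q(\lambda_i,\lambda_j)$ (an index swap inherited from the paper's Proposition~\ref{appprop:joint_diag_q}), which only transposes your interpolation data.
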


\begin{proof}
Since $\widetilde{L}$ has eigendecomposition
$\widetilde{L} = \sum_{i=1}^n \lambda_i u_i u_i^\top$ with pairwise distinct
eigenvalues, the commuting operators
\[
A := \widetilde{L}\otimes I\quad\text{and}\quad
B := I\otimes \widetilde{L}
\]
on $\mathbb{R}^{n^2}$ admit a joint eigenbasis
$\{u_i\otimes u_j\}_{i,j=1}^n$. On this basis, $A$ and $B$ act as
\[
A(u_i\otimes u_j) = \lambda_i (u_i\otimes u_j),\qquad
B(u_i\otimes u_j) = \lambda_j (u_i\otimes u_j).
\]
Viewing $\varepsilon$ as a vector in $\mathbb{R}^{n^2}$ via the usual
vectorization, it has a unique expansion
\[
\varepsilon = \sum_{i,j=1}^n c_{ij}\, (u_i\otimes u_j),
\qquad
c_{ij} = (u_i\otimes u_j)^\top \varepsilon = u_i^\top \varepsilon\, u_j.
\]
By assumption (ii), all $c_{ij}$ are nonzero.

For any real-valued function $g:V\times V\to\mathbb{R}$, we may write its
vectorization in the same basis as
\[
g = \sum_{i,j=1}^n d_{ij}\, (u_i\otimes u_j)
\]
for suitable coefficients $d_{ij}\in\mathbb{R}$. Consider a polynomial
$q$ in two variables. Its action on $\varepsilon$ is
diagonal in the joint eigenbasis:
\[
q(A,B)\,\varepsilon
= \sum_{i,j=1}^n q(\lambda_i,\lambda_j)\,c_{ij}\,(u_i\otimes u_j).
\]
Thus, to realize the desired set of coefficients $\{d_{ij}\}$, it suffices to
choose $q$ such that
\[
q(\lambda_i,\lambda_j)\,c_{ij} = d_{ij}
\quad\Longleftrightarrow\quad
q(\lambda_i,\lambda_j) = \frac{d_{ij}}{c_{ij}}
\quad\text{for all } i,j.
\]
Since the grid $\{(\lambda_i,\lambda_j):1\le i,j\le n\}$ is finite and all
$c_{ij}\neq 0$, there always exists a bivariate polynomial $q$ interpolating the
values $d_{ij}/c_{ij}$ at these $n^2$ points, e.g. by multivariate Lagrange
interpolation. Therefore, under assumptions (i) and (ii), the family
$q(\widetilde{L}\otimes I,\; I\otimes \widetilde{L})$ acting on $\varepsilon$ is
universal on the finite domain $V\times V$: it can realize any target signal
$g:V\times V\to\mathbb{R}$.

Now let $\mathcal{C}=\{C_1,\dots,C_s\}$ denote the stable Local 2-GNN partition of
$V\times V$, where each $C_t$ is a color class
\[
C_t = \big\{(i,j)\in V\times V:
\chi^{\mathrm{Local2\text{-}GNN},(\infty)}_G(i,j) = t\big\}.
\]
Choose pairwise disjoint intervals $I_1,\dots,I_s\subset\mathbb{R}$, and for each
$t\in\{1,\dots,s\}$ assign distinct real values
\(
\{ \gamma_{t,\alpha} \in I_t : (i_\alpha,j_\alpha)\in C_t\}
\)
to the elements of $C_t$. Define the target function
$g:V\times V\to\mathbb{R}$ by
\[
g(i,j) = \gamma_{t,\alpha}
\quad\text{if }(i,j)=(i_\alpha,j_\alpha)\in C_t.
\]
By construction, if two pairs belong to different Local $2$-GNN color classes, say
$(u_1,v_1)\in C_{t_1}$ and $(u_2,v_2)\in C_{t_2}$ with $t_1\neq t_2$, then
$g(u_1,v_1)\in I_{t_1}$ and $g(u_2,v_2)\in I_{t_2}$, so $g(u_1,v_1)\neq g(u_2,v_2)$.
Hence
\[
g(u_1,v_1)=g(u_2,v_2)
\quad\Longrightarrow\quad
\chi^{\mathrm{Local2\text{-}GNN},(\infty)}_G(u_1,v_1)=
\chi^{\mathrm{Local2\text{-}GNN},(\infty)}_G(u_2,v_2).
\]
Moreover, if some color class $C_t$ has size at least two, we chose the
$\gamma_{t,\alpha}$ within $I_t$ to be pairwise distinct, so $g$ is not
constant on $C_t$ and the induced partition strictly refines $\mathcal{C}$.

By Theorem~\ref{appthm:univ-approx}, there exists a polynomial $q$ such that
\[
\mathcal{L}' = q(\widetilde{L}\otimes I,\; I\otimes \widetilde{L})\,(\mathcal{L}W).
\]
For this $q$ we therefore have
\[
\mathcal{L}'(u_1,v_1)=\mathcal{L}'(u_2,v_2)
\quad\Longrightarrow\quad
\chi^{\mathrm{Local2\text{-}GNN},(\infty)}_G(u_1,v_1)=
\chi^{\mathrm{Local2\text{-}GNN},(\infty)}_G(u_2,v_2).
\]
i.e.\ the partition refines the stable Local $2$-GNN coloring. If some
$C_t$ has size at least two, the refinement is strict by our choice of $g$.
\end{proof}

\section{An Analysis of Spectral Filters for ClassWise Separability}
\label{appsec:classsep}
\subsection{Optimal Convolution under Classwise Separability}
\label{appsubsec:opt-conv}
\subsubsection{Setting and Notation}
\label{appsubsubsec:setting}

Let $G=(V,E)$ be a graph with $|V|=n$. Nodes are partitioned into $k$ classes
$\Pi=\{V_1,\dots,V_k\}$ with $|V_a|=n_a$ and $\sum_a n_a=n$.
For each class $a\in\{1,\dots,k\}$, we associate a mean vector 
$m_a\in\mathbb{R}^d$, drawn independently and uniformly at random from the unit
sphere $\mathbb{S}^{d-1}$. 
Conditioned on these means, node features are drawn independently according to
their class labels:
\[
x_i \ \sim\ \mathcal D_{a_i}, \qquad 
a_i\text{ such that }i\in V_{a_i},
\]
where each class distribution $\mathcal D_a$ on $\mathbb{R}^d$ has mean
$m_a$ and finite covariance $\Sigma_a\succeq0$.  
Denote by
$X\in\mathbb{R}^{n\times d}$ the resulting feature matrix with rows $x_i^\top$,
and write $\tau_a:=\mathrm{tr}(\Sigma_a)$ for the total variance of class $a$, where we assume $\tau_a>0$ for every $a$.
This model can be viewed as a high-dimensional isotropic mixture in which class
means are randomly oriented and features fluctuate around them with bounded
covariance.

A (linear) convolution is a matrix $C\in\mathbb{R}^{n\times n}$ producing $Y:=CX$.
We evaluate $C$ by the classwise mean-regression MSE
\begin{equation}\label{eq:loss}
\mathcal{L}(C)\ :=\ \sum_{a=1}^k \frac{1}{n_a}\sum_{p\in V_a}\ 
\mathbb{E}\,\big\|\,Y_p - m_a\big\|_2^2.
\end{equation}

\begin{definition}
\label{def:class-eq}
Let $\Pi=\{V_1,\dots,V_k\}$ be the class partition of the node set $V$ with $|V_a|=n_a$.
For each class $a$, let $\mathfrak{S}_{n_a}$ be the permutation group acting on the indices $V_a$,
and define the product group
\[
\mathcal{G}\ :=\ \prod_{a=1}^k \mathfrak{S}_{n_a},
\]
whose elements $\pi=(\pi_1,\dots,\pi_k)$ act on $V$ by independently permuting
the nodes within each class.
The corresponding permutation matrix $P_\pi$ acts on vectors or matrices
as a block permutation operator.

A matrix $C\in\mathbb{R}^{n\times n}$ is said to be \emph{classwise permutation equivariant} if
\begin{equation*}
\label{eq:class-eq}
C\ =\ P_\pi\, C\, P_\pi^\top,
\qquad \forall\,\pi\in\mathcal{G}.
\end{equation*}
Equivalently, $C$ commutes with the representation of $\mathcal{G}$ on $\mathbb{R}^n$.

For any matrix $C$, its \emph{Reynolds average} with respect to
$\mathcal{G}$ is defined as
\[
\overline{C}\ :=\
\mathbb{E}_{\pi\sim\mathrm{Unif}(\mathcal{G})}\big[P_\pi\, C\, P_\pi^\top\big],
\]
which is classwise permutation equivariant and thus belongs to the commutant of~$\mathcal{G}$.
\end{definition}

\subsubsection{Algebraic Structure and a Bias--Variance Decomposition}

Before analyzing the loss $\mathcal{L}(C)$ in its explicit form, we first show that the Reynolds average of any matrix $C$ can only decrease the loss. 
This allows us to restrict attention, without loss of generality, to classwise-equivariant matrices.

\begin{lemma}\label{lem:jensen}
Let $\mathcal{G}=\prod_{a=1}^k \mathfrak{S}_{n_a}$ act on indices by classwise permutations,
and define the Reynolds average of $C\in\mathbb{R}^{n\times n}$ by
\(
\overline{C}
\).
Then the classwise mean-regression loss satisfies
\[
\mathcal{L}(\overline{C})\ \le\ \mathcal{L}(C).
\]
\end{lemma}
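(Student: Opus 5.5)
The approach is to show that $\mathcal{L}$ is a convex function of $C$ that is invariant under the conjugation action $C\mapsto P_\pi C P_\pi^\top$ of $\mathcal{G}$. Jensen's inequality over the uniform distribution on the finite group $\mathcal{G}$ then gives
\[
\mathcal{L}(\overline C)=\mathcal{L}\Big(\tfrac{1}{|\mathcal{G}|}\sum_{\pi}P_\pi C P_\pi^\top\Big)\le \tfrac{1}{|\mathcal{G}|}\sum_\pi \mathcal{L}(P_\pi C P_\pi^\top)=\mathcal{L}(C).
\]

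\textbf{Convexity.} For each node $p$ in class $a$, we have $Y_p-m_a=\sum_q C_{pq}x_q-m_a$, which is affine in $C$ for every realization of $X$ and of the means. Hence $\|Y_p-m_a\|_2^2$ is a convex quadratic in $C$. Taking expectations and then nonnegative weighted sums ($1/n_a>0$) preserves convexity, so $\mathcal{L}$ is convex. The only care needed is integrability: second moments exist because the covariances are finite and $\|m_a\|=1$.

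\textbf{Invariance.} We check that $\mathcal{L}(P_\pi C P_\pi^\top)=\mathcal{L}(C)$ for every $\pi\in\mathcal{G}$. Write $C'=P_\pi C P_\pi^\top$, so that $C'_{pq}=C_{\pi^{-1}(p)\pi^{-1}(q)}$ with a suitable convention. Then
\[
Y'_p=\sum_q C_{\pi^{-1}(p)\pi^{-1}(q)}x_q=\sum_{r}C_{\pi^{-1}(p) r}\,x_{\pi(r)}.
\]
Because $\pi$ permutes nodes only within classes, $x_{\pi(r)}$ has the same class as $x_r$. Conditional on the means, the rows of $X$ are independent with class-dependent laws, so the family $(x_{\pi(r)})_r$ has the same joint law as $(x_r)_r$. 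Consequently $\mathbb{E}\|Y'_p-m_a\|^2=\mathbb{E}\|Y_{\pi^{-1}(p)}-m_a\|^2$, where $\pi^{-1}(p)$ lies in the same class $a$. Summing over $p\in V_a$ merely reindexes the inner sum, and the weights $1/n_a$ are unchanged. This gives invariance.

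\textbf{Main obstacle.} The delicate step is the invariance argument, specifically stating the exchangeability carefully: it holds jointly with the random means $m_a$, which are shared across a class and untouched by $\pi$. I would handle this by conditioning on $(m_1,\dots,m_k)$ first, proving equality of the conditional expectations, and then integrating over the means. Convexity and the final Jensen step are routine.
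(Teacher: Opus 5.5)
Your proposal is correct and follows essentially the same route as the paper: convexity of the loss in $C$, Jensen's inequality over the uniform distribution on $\mathcal{G}$, and invariance of the loss under $C\mapsto P_\pi C P_\pi^\top$, which comes from within-class row reindexing together with the within-class i.i.d. structure of $X$. The only cosmetic difference is the order of operations: the paper applies Jensen pathwise to $\ell(\cdot;X)$ and writes the invariance as $\ell(P_\pi C P_\pi^\top;X)=\ell(C;P_\pi^\top X)$ before taking the expectation over $X$, whereas you apply Jensen directly to the expected loss $\mathcal{L}$.
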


\begin{proof}
For fixed feature matrix $X\in\mathbb{R}^{n\times d}$, with $\|\cdot\|_F$ denoting the Frobenius norm, write
\[
\ell(C;X)
\ :=\
\sum_{a=1}^k \frac{1}{n_a}
\big\|\,(CX)_a - \mathbf{1}_{n_a} m_a^\top\,\big\|_F^2.
\]
The map $C\mapsto CX$ is linear, and
$A\mapsto\sum_a \frac{1}{n_a}\|A_a-\mathbf{1}m_a^\top\|_F^2$ is a quadratic form in $A$, and thus convex.
Hence $\ell(C;X)$ is convex in $C$, and taking expectation over $X$
preserves convexity, so $\mathcal{L}(C)=\mathbb{E}_X[\ell(C;X)]$ is convex in $C$.
Since $\overline C$ is the group average of linear transforms of $C$,
convexity gives
\[
\ell(\overline C;X)
\ =\
\ell\!\big(\mathbb{E}_{\pi}[P_\pi C P_\pi^\top];X\big)
\ \le\
\mathbb{E}_{\pi}\,\ell(P_\pi C P_\pi^\top;X).
\]
Taking expectation in $X$ and exchanging the order of expectations,
\[
\mathcal{L}(\overline C)\ =\ \mathbb{E}_X\ell(\overline C;X)
\ \le\
\mathbb{E}_{\pi}\,\mathbb{E}_X\ell(P_\pi C P_\pi^\top;X).
\]
For any $\pi\in\mathcal G$,
\[
\ell(P_\pi C P_\pi^\top;X)
=\ell(C;P_\pi^\top X),
\]
because $(P_\pi C P_\pi^\top)X=P_\pi[C(P_\pi^\top X)]$
and Frobenius norms are invariant under left multiplication by the permutation
matrices $P_{\pi_a}$ within each class.
Under the classwise i.i.d.\ model in Section~\ref{appsubsubsec:setting}, $P_\pi^\top X$ and $X$ are identically distributed,
so $\mathbb{E}_X\ell(C;P_\pi^\top X)=\mathbb{E}_X\ell(C;X)=\mathcal{L}(C)$.
Substituting this into the previous inequality yields
\[
\mathcal{L}(\overline C)\ \le\ \mathcal{L}(C).
\]
\end{proof}

\begin{corollary}
\label{cor:opsolution}
An optimal solution of $\min_C \mathcal{L}(C)$ can be chosen among classwise permutation equivariant matrices.
\end{corollary}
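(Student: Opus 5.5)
The corollary follows from Lemma~\ref{lem:jensen} once we know that $\mathcal{L}$ attains its minimum. The plan is to establish existence of a minimizer $C^\circ$ and then replace it by its Reynolds average $\overline{C^\circ}$.

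\textbf{Existence of a minimizer.} By the bias--variance structure, $\mathcal{L}(C)$ is a convex quadratic in the entries of $C$. Concretely, $\mathbb{E}\|Y_p-m_a\|_2^2$ equals $\sum_q C_{pq}^2\tau_{a_q}$ plus the squared norm of the mean part $\sum_q C_{pq}m_{a_q}-m_a$.

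\begin{itemize}
\item Since every $\tau_a>0$, the quadratic part $\sum_a\frac1{n_a}\sum_{p\in V_a}\sum_q C_{pq}^2\tau_{a_q}$ is positive definite in $C$.
\item Hence $\mathcal{L}$ is strictly convex and coercive ($\mathcal{L}(C)\to\infty$ as $\|C\|_F\to\infty$).
\item It is also finite and continuous, because all second moments are finite.
\end{itemize}

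Therefore a minimizer $C^\circ$ exists, and it is unique.

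\textbf{Symmetrization.} Apply Lemma~\ref{lem:jensen} to $C^\circ$. This gives $\mathcal{L}(\overline{C^\circ})\le\mathcal{L}(C^\circ)=\min\mathcal{L}$, so $\overline{C^\circ}$ is also a minimizer. By Definition~\ref{def:class-eq}, $\overline{C^\circ}$ is classwise permutation equivariant. This is because averaging over the finite group $\mathcal{G}$ gives $P_\sigma\overline{C}P_\sigma^\top=\overline{C}$ for every $\sigma\in\mathcal{G}$, by reindexing $\pi\mapsto\sigma\pi$.

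Uniqueness of the minimizer then yields the stronger fact that $C^\circ=\overline{C^\circ}$. So the optimal solution is necessarily equivariant, not merely choosable as such.

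\textbf{Main obstacle.} The only nontrivial step is existence, i.e.\ coercivity. It relies on $\tau_a>0$, which ensures that every column of $C$ is penalized. Without that assumption, a minimizing sequence could diverge. Even then, Lemma~\ref{lem:jensen} alone would still show that the infimum over equivariant matrices equals the global infimum: averaging any near-optimal $C$ does not increase the loss.
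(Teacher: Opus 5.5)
Your proof is correct and follows the paper's route: the paper reads the corollary off Lemma~\ref{lem:jensen} by replacing a minimizer with its Reynolds average, which is equivariant. Your added step (the variance term $\sum_{p,q}\tfrac{\tau_{a_q}}{n_{a_p}}C_{pq}^2$ is positive definite in the entries of $C$, so $\mathcal{L}$ is strictly convex and coercive) supplies the existence the paper takes for granted, and it gives uniqueness over all $C$, which the paper only argues within the equivariant parametrization in Theorem~\ref{appthm:opconv}; as a minor aside, existence alone would not need $\tau_a>0$, since a convex quadratic bounded below (here by $0$) always attains its infimum.
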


We next characterize the algebraic structure of all
classwise permutation equivariant matrices,
which will serve as the reduced parameter space in which the optimization can be carried out according to Corollary~\ref{cor:opsolution}.

\begin{lemma}\label{lem:algebraofC}
Let $\mathcal{G}=\prod_{a=1}^k \mathfrak{S}_{n_a}$ act on $\mathbb{R}^n$ by permuting
the coordinates within each class $V_a$.
A matrix $C\in\mathbb{R}^{n\times n}$ is classwise permutation equivariant
if and only if there exist real scalars
$\{\alpha_a,\beta_a\}_{a=1}^k$ and $\{\gamma_{ab}\}_{a\ne b}$ such that
\begin{equation}\label{eq:C-form}
C
=\sum_{a=1}^k \big(\alpha_a I_{V_a}+\beta_a J_{V_a}\big)
+\sum_{a\ne b}\gamma_{ab}\,J_{V_a,V_b},
\end{equation}
where:
\begin{itemize}
\item $I_{V_a}$ denotes the identity matrix on the coordinates indexed by $V_a$;
\item $J_{V_a}$ is the $n_a\times n_a$ all-ones matrix;
\item $J_{V_a,V_b}$ is the $n_a\times n_b$ all-ones block between classes $a$ and $b$.
\end{itemize}
Equivalently, the commutant of the representation of $\mathcal{G}$ on $\mathbb{R}^n$
is spanned by the matrices $\{I_{V_a},J_{V_a},J_{V_a,V_b}:a\ne b\}$.
\end{lemma}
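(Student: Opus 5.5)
The statement to prove is Lemma~\ref{lem:algebraofC}. The plan is to characterize the commutant entrywise. Equivariance $C=P_\pi C P_\pi^\top$ for all $\pi\in\mathcal G$ is equivalent to
\[
C_{ij}=C_{\pi(i)\pi(j)}\qquad\text{for all } i,j \text{ and all } \pi\in\mathcal G.
\]
So $C$ is equivariant if and only if it is constant on each orbit of $\mathcal G$ acting diagonally on $V\times V$. The proof therefore reduces to listing these orbits and checking that their indicator matrices are exactly the matrices in the claimed spanning set.

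\textbf{Orbits with both indices in the same class.} Take $i,j\in V_a$. Since $\mathfrak S_{n_a}$ acts transitively on $V_a$, the diagonal pairs $\{(i,i):i\in V_a\}$ form a single orbit. If $n_a\ge 2$, then $\mathfrak S_{n_a}$ is also transitive on ordered pairs of distinct elements, so $\{(i,j):i\ne j\in V_a\}$ is a second orbit. Their indicators are $I_{V_a}$ and $J_{V_a}-I_{V_a}$, which span the same space as $\{I_{V_a},J_{V_a}\}$. If $n_a=1$ there is only one orbit; then $I_{V_a}=J_{V_a}$, so the parametrization still holds, though $\alpha_a,\beta_a$ are no longer unique. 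This redundancy is harmless because the statement only asserts existence.

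\textbf{Orbits across classes.} Take $i\in V_a$, $j\in V_b$ with $a\ne b$. The factor $\pi_a$ moves $i$ independently of how $\pi_b$ moves $j$, so all of $V_a\times V_b$ is one orbit, with indicator $J_{V_a,V_b}$. Combining the two cases, a matrix is constant on orbits exactly when it has the form \eqref{eq:C-form}, which gives both directions of the lemma.

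\textbf{Converse check.} For the "if" direction I will also verify directly that each generator is invariant. Block permutation matrices preserve $I_{V_a}$, and they preserve $J_{V_a}$ and $J_{V_a,V_b}$ because permuting rows or columns of an all-ones block leaves it unchanged. By linearity, every matrix of the form \eqref{eq:C-form} is equivariant.

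The only step needing care is the transitivity claim for off-diagonal pairs within a class, together with the degenerate case $n_a=1$. Neither is a real obstacle.
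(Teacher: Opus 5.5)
Your proposal is correct and uses the same intra-class/inter-class split as the paper. The difference is that you prove directly, by listing the orbits of $\mathcal{G}$ on $V\times V$, the two facts the paper simply cites as classical: that the commutant of $\mathfrak{S}_{n_a}$ is spanned by $I_{V_a},J_{V_a}$, and that a block invariant under independent row and column permutations is constant. This also makes the degenerate case $n_a=1$ explicit, where the paper's ``two-dimensional'' claim is slightly inaccurate.
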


\begin{proof}
Let $C$ commute with every $P_\pi\in\mathcal{G}$.
Since $\mathcal{G}$ acts independently on each class block,
the condition $P_\pi C = C P_\pi$ decomposes into intra-class and inter-class parts.

\paragraph{Intra-class blocks:}
Fix a class $a$ and consider $C_{V_a,V_a}$,
the submatrix of $C$ with both rows and columns indexed by the nodes in $V_a$.
For any $\pi_a\in\mathfrak{S}_{n_a}$,
$P_{\pi_a} C_{V_a,V_a}=C_{V_a,V_a} P_{\pi_a}$.
It is a classical result that the commutant of the full symmetric group
acting by permutation matrices on $\mathbb{R}^{n_a}$ is two-dimensional,
spanned by $I_{V_a}$ and $J_{V_a}$.
Hence there exist scalars $\alpha_a,\beta_a$ such that
$C_{V_a,V_a}=\alpha_a I_{V_a}+\beta_a J_{V_a}$.

\paragraph{Inter-class blocks:}
For $a\ne b$, consider $C_{V_a,V_b}$.
The commutation condition with independent permutations
$P_{\pi_a}$ on $V_a$ and $P_{\pi_b}$ on $V_b$ implies
\[
P_{\pi_a}\, C_{V_a,V_b}\, P_{\pi_b}^\top = C_{V_a,V_b},
\qquad \forall\,(\pi_a,\pi_b)\in \mathfrak{S}_{n_a}\times\mathfrak{S}_{n_b}.
\]
The only matrices invariant under left and right actions of the full symmetric groups
are constant matrices, i.e.\ scalar multiples of the all-ones block $J_{V_a,V_b}$.
Thus there exists $\gamma_{ab}\in\mathbb{R}$ such that
$C_{V_a,V_b}=\gamma_{ab} J_{V_a,V_b}$.
Combining all blocks Eq.\eqref{eq:C-form}.

Conversely, any $C$ of the form Eq.\eqref{eq:C-form} clearly commutes with every
$P_\pi\in\mathcal{G}$, so the characterization is complete.
\end{proof}

\begin{corollary}
For any matrix $C\in\mathbb{R}^{n\times n}$, its Reynolds average
$\overline{C}=\mathbb{E}_{\pi\sim\mathrm{Unif}(\mathcal{G})}[P_\pi C P_\pi^\top]$
lies in the commutant characterized in Lemma~\ref{lem:algebraofC} and thus
admits the representation Eq.\eqref{eq:C-form}.
\end{corollary}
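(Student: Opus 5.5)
The statement to prove is the corollary that the Reynolds average $\overline{C}$ of an arbitrary $C$ lies in the commutant and therefore has the form Eq.~\eqref{eq:C-form}. The plan is to verify directly that $\overline{C}$ is classwise permutation equivariant. Once that is done, Lemma~\ref{lem:algebraofC} gives the representation immediately.

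\textbf{Equivariance of the average.} Fix $\sigma\in\mathcal{G}$. The map $\pi\mapsto\pi\to P_\pi$ is a group homomorphism into the orthogonal permutation matrices, so $P_\sigma P_\pi = P_{\sigma\pi}$ and $P_\pi^\top P_\sigma^\top=(P_\sigma P_\pi)^\top = P_{\sigma\pi}^\top$. Since $\mathcal{G}$ is finite, the expectation is a finite average, which gives
\[
P_\sigma \overline{C} P_\sigma^\top
=\frac{1}{|\mathcal{G}|}\sum_{\pi\in\mathcal{G}} P_{\sigma\pi}\, C\, P_{\sigma\pi}^\top .
\]
Left multiplication $\pi\mapsto\sigma\pi$ is a bijection of $\mathcal{G}$, so reindexing turns the right-hand side back into $\overline{C}$. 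Hence $P_\sigma\overline{C}P_\sigma^\top=\overline{C}$ for every $\sigma\in\mathcal{G}$. Because $P_\sigma$ is orthogonal, this is the same as $P_\sigma\overline{C}=\overline{C}P_\sigma$, so $\overline{C}$ lies in the commutant.

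\textbf{Explicit coefficients.} Applying Lemma~\ref{lem:algebraofC} now yields scalars $\alpha_a,\beta_a,\gamma_{ab}$ with $\overline{C}$ of the form Eq.~\eqref{eq:C-form}. These scalars can also be read off directly from averaging over orbits of index pairs:
\begin{itemize}
\item $\gamma_{ab}$ is the mean of the block $C_{V_a,V_b}$;
\item $\alpha_a+\beta_a$ is the mean of the diagonal of $C_{V_a,V_a}$;
\item $\beta_a$ is the mean of the off-diagonal entries of $C_{V_a,V_a}$ (for $n_a\ge2$; if $n_a=1$, take $\beta_a=0$).
\end{itemize}

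There is no real obstacle here. The only care needed is the reindexing step, which relies on $\mathcal{G}$ being finite and the uniform measure being invariant under left translation, and on the compatibility $P_{\sigma\pi}=P_\sigma P_\pi$.
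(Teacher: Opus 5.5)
Your proposal is correct and follows the paper's route. The paper simply asserts, in the definition of the Reynolds average, that $\overline{C}$ is classwise permutation equivariant and then invokes Lemma~\ref{lem:algebraofC}; your reindexing argument over the finite group $\mathcal{G}$ supplies exactly that omitted step. Whether $\pi\mapsto P_\pi$ is a homomorphism or an anti-homomorphism depends on the permutation-matrix convention, but right translation $\pi\mapsto\pi\sigma$ is also a bijection of $\mathcal{G}$, so your argument goes through either way.
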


We then turn to the statistical objective $\mathcal{L}(C)$ and express it in a form amenable to optimization.

\begin{lemma}
\label{lem:bv}
Let $Y=CX$ and write $D_\tau:=\bigoplus_{a=1}^k \tau_a I_{V_a}$.
For any $C$ and any $p\in V_a$,
\[
\mathbb{E}\,Y_p \ =\ \sum_{b=1}^k C_{p,V_b}\,(\mathbf{1}_{n_b} m_b^\top),\qquad
\mathrm{tr\,Var}(Y_p)\ =\ \sum_{b=1}^k \tau_b\,\|C_{p,V_b}\|_2^2.
\]
Consequently,
\begin{equation}\label{eq:loss-bv}
\mathcal{L}(C)\ =\ \underbrace{\sum_{a=1}^k \frac{1}{n_a}\sum_{p\in V_a}\big\|\mathbb{E}Y_p-m_a\big\|_2^2}_{\text{\emph{squared bias}}}
\ +\
\underbrace{\sum_{a=1}^k \frac{1}{n_a}\sum_{p\in V_a}\sum_{b=1}^k \tau_b\,\|C_{p,V_b}\|_2^2}_{\text{\emph{variance}}}.
\end{equation}
\end{lemma}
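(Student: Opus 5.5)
The plan is to compute the mean and the covariance of each propagated row $Y_p=\sum_{q} C_{pq}x_q$ directly, and then apply the standard bias--variance identity row by row. Fix $p\in V_a$. By linearity of expectation, $\mathbb{E}Y_p=\sum_q C_{pq}\,\mathbb{E}x_q$. Every $q\in V_b$ has $\mathbb{E}x_q=m_b$, conditionally on the class means. Grouping the indices $q$ by class therefore gives
\[
\mathbb{E}Y_p=\sum_{b=1}^k\Big(\sum_{q\in V_b}C_{pq}\Big)m_b=\sum_{b=1}^k C_{p,V_b}\,\mathbf{1}_{n_b}m_b^\top,
\]
which is the first identity. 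Here $C_{p,V_b}$ is read as a row vector.

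For the variance, I will use that the $x_q$ are independent across nodes, conditionally on the means. Hence
\[
\mathrm{Var}(Y_p)=\sum_q C_{pq}^2\,\Sigma_{b(q)},
\]
where $b(q)$ denotes the class of $q$; all cross-covariances vanish by independence. Taking the trace and grouping by class gives
\[
\mathrm{tr\,Var}(Y_p)=\sum_b \tau_b\sum_{q\in V_b}C_{pq}^2=\sum_b\tau_b\|C_{p,V_b}\|_2^2,
\]
which is the second identity.

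Finally, for any random vector $Z$ with finite second moment and any fixed vector $m$, we have $\mathbb{E}\|Z-m\|_2^2=\|\mathbb{E}Z-m\|_2^2+\mathrm{tr\,Var}(Z)$. This follows by expanding $Z-m=(Z-\mathbb{E}Z)+(\mathbb{E}Z-m)$ and noting that the cross term has zero mean. Applying it with $Z=Y_p$ and $m=m_a$, and then summing over $p\in V_a$ with weights $1/n_a$ and over $a$, yields Eq.~\eqref{eq:loss-bv}.

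The one point needing care is what the expectation conditions on. Since the $m_a$ are random, I will interpret $\mathbb{E}$ as the conditional expectation given $\{m_a\}$, consistent with the setting in Section~\ref{appsubsubsec:setting}, where the features are drawn conditionally on the class means. Under this reading, $m_a$ is a fixed vector in the decomposition, and finiteness of the $\Sigma_a$ guarantees that all the second moments used above exist. No other step presents a real obstacle; the argument is routine.
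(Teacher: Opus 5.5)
Your proposal is correct and follows essentially the same argument as the paper: you compute the mean and trace-covariance of each row $Y_p$ using independence of the node features, then apply the bias--variance identity $\mathbb{E}\|Z-m\|_2^2=\|\mathbb{E}Z-m\|_2^2+\mathrm{tr\,Var}(Z)$ row by row. Your explicit remark that $\mathbb{E}$ is conditional on the class means matches how the paper implicitly treats each $m_a$ as a fixed vector here, with the randomness of the means only entering later in the $d\to\infty$ asymptotics.
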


\begin{proof}
For each class $b$, the random vectors $\{x_j:j\in V_b\}$ are i.i.d.\ with 
$\mathbb{E}[x_j]=m_b$ and $\operatorname{Cov}(x_j)=\Sigma_b$.  
Hence
\[
\mathbb{E}[X_{V_b}]=\mathbf{1}_{n_b} m_b^\top .
\]
Let $C_{p,V_b}\in\mathbb{R}^{1\times n_b}$ denote the row of $C$ restricted to columns $V_b$.
By classwise independence, 
\[
\operatorname{Var}\!\big(C_{p,V_b}X_{V_b}\big)
=\sum_{j\in V_b} C_{p j}^2\,\Sigma_b
=\|C_{p,V_b}\|_2^2\,\Sigma_b,
\]
so that
\[
\mathbb{E}\big\|C_{p,V_b}(X_{V_b}-\mathbb{E}X_{V_b})\big\|_2^2
=\operatorname{tr}\!\big[\operatorname{Var}(C_{p,V_b}X_{V_b})\big]
=\|C_{p,V_b}\|_2^2\,\operatorname{tr}(\Sigma_b)
=\tau_b\|C_{p,V_b}\|_2^2.
\]
Finally, using the bias–variance identity
\[
\mathbb{E}\|Z-m\|_2^2
=\|\mathbb{E}Z-m\|_2^2+\operatorname{tr}\operatorname{Var}(Z),
\]
and substituting $Z=Y_p=C_{p,;}X$, we obtain \eqref{eq:loss-bv}.
\end{proof}

\begin{corollary}
\label{appcor:lossform}
Suppose $C$ is classwise permutation equivariant, i.e.\ it has the form Eq.\eqref{eq:C-form}.
Then, for any $p\in V_a$,
\[
\mathbb{E}Y_p \ =\ (\alpha_a+\beta_a n_a)\,m_a\ +\ \sum_{b\ne a}\gamma_{ab} n_b\, m_b,
\]
and, writing $\tau_b:=\mathrm{tr}(\Sigma_b)$,
\[
\mathrm{tr\,Var}(Y_p)\ =\ \tau_a\Big((\alpha_a+\beta_a)^2+(n_a-1)\beta_a^2\Big)\ +\ \sum_{b\ne a}\tau_b n_b \gamma_{ab}^2.
\]
Consequently, the per-class contribution to the loss equals the quadratic
\begin{equation}
\label{eq:loss-form}
\phi_a(\alpha_a,\beta_a,\gamma_{a\bullet})
=\Big\|\,u_a+M_{-a}\gamma_{a\bullet}\,\Big\|_2^2
+\tau_a\Big((\alpha_a+\beta_a)^2+(n_a-1)\beta_a^2\Big)
+\gamma_{a\bullet}^\top D_{-a}\gamma_{a\bullet},
\end{equation}
where $\gamma_{a\bullet}\coloneqq (\gamma_{ab})_{b\neq a}\in \mathbb{R}^{k-1}$, $u_a\coloneqq(\alpha_a+n_a\beta_a-1)m_a$, 
$M_{-a}=[\,n_b m_b\,]_{b\ne a}\in\mathbb{R}^{d\times(k-1)}$, 
$D_{-a}=\mathrm{diag}(\tau_b n_b)_{b\ne a}\succeq 0$.
\end{corollary}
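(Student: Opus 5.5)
This is a direct computation, starting from Lemma~\ref{lem:bv} specialized to the block form Eq.~\eqref{eq:C-form} of Lemma~\ref{lem:algebraofC}.

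\textbf{Row of $C$.} Fix $p\in V_a$ and read off the $p$-th row block by block. On the columns $V_a$ the row is $\beta_a\mathbf 1_{n_a}^\top+\alpha_a e_p^\top$: it has one entry $\alpha_a+\beta_a$ and $n_a-1$ entries $\beta_a$. On each $V_b$ with $b\neq a$ the row is $\gamma_{ab}\mathbf 1_{n_b}^\top$.

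\textbf{Mean.} Substituting into $\mathbb{E}Y_p=\sum_b C_{p,V_b}\mathbf 1_{n_b}m_b^\top$ from Lemma~\ref{lem:bv}, the row sums are $C_{p,V_a}\mathbf 1=\alpha_a+n_a\beta_a$ and $C_{p,V_b}\mathbf 1=n_b\gamma_{ab}$. Hence
\[
\mathbb{E}Y_p=(\alpha_a+\beta_a n_a)m_a+\sum_{b\ne a}\gamma_{ab}n_b m_b .
\]

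\textbf{Variance.} The variance formula $\sum_b\tau_b\|C_{p,V_b}\|_2^2$ needs the squared norms $\|C_{p,V_a}\|_2^2=(\alpha_a+\beta_a)^2+(n_a-1)\beta_a^2$ and $\|C_{p,V_b}\|_2^2=n_b\gamma_{ab}^2$. This gives the stated $\mathrm{tr\,Var}(Y_p)$.

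\textbf{Loss.} Subtracting $m_a$ from the mean gives
\[
\mathbb{E}Y_p-m_a=(\alpha_a+n_a\beta_a-1)m_a+\sum_{b\neq a}n_b\gamma_{ab}m_b=u_a+M_{-a}\gamma_{a\bullet}.
\]
The last sum is exactly $M_{-a}\gamma_{a\bullet}$, since the columns of $M_{-a}$ are $n_bm_b$. Likewise $\sum_{b\ne a}\tau_bn_b\gamma_{ab}^2=\gamma_{a\bullet}^\top D_{-a}\gamma_{a\bullet}$.

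The key structural fact is that neither the bias nor the variance depends on which $p\in V_a$ was chosen. Averaging $\frac1{n_a}\sum_{p\in V_a}$ therefore just returns the common value, and the class-$a$ contribution in Eq.~\eqref{eq:loss-bv} equals $\phi_a$ in Eq.~\eqref{eq:loss-form}.

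The only step needing care is the intra-class row: keeping track of the single diagonal entry $\alpha_a+\beta_a$ versus the $n_a-1$ off-diagonal entries $\beta_a$. Everything else is direct substitution into Lemma~\ref{lem:bv}.
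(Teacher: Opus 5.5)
Your proposal is correct and follows the same route as the paper. The paper gives no separate proof of this corollary and treats it as direct substitution of the block form Eq.~\eqref{eq:C-form} into Lemma~\ref{lem:bv}, which is what you do by reading off $C_{p,:}$ (one entry $\alpha_a+\beta_a$, $n_a-1$ entries $\beta_a$, and constant $\gamma_{ab}$ blocks). You also correctly note that the bias and variance do not depend on $p\in V_a$, so the average $\frac{1}{n_a}\sum_{p\in V_a}$ returns $\phi_a$ itself (hence $\mathcal{L}(C)=\sum_a\phi_a$).
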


\begin{theorem}
\label{appthm:opconv}
Under the model assumed in Section~\ref{appsubsubsec:setting}, the minimizer of $\min_C \mathcal{L}(C)$ has the following block structure.

\medskip
\noindent
\textbf{(1) Intra-class blocks.}
For each class $a$,
\[
(C^\star)_{aa} = \alpha_a^\star I_{V_a} + \beta_a^\star J_{V_a}.
\]
\begin{itemize}
    \item If $n_a>1$, then $\alpha_a^\star = 0$ and
\[
\beta_a^\star = \frac{A_a}{\,n_a A_a + \tau_a\,},
\ 
A_a = \|m_a\|_2^2
      - \sum_{\substack{b\ne a\\c\ne a}}
        n_b n_c\,\langle m_a,m_b\rangle
        \big[(G_{-a}+D_{-a})^{-1}\big]_{bc}
        \langle m_c,m_a\rangle,
\]
where $G_{-a}$ and $D_{-a}$ are defined in Corollary~\ref{appcor:lossform}.

    \item If $n_a=1$, the objective depends only on $\alpha_a+\beta_a$,
and one may set $\alpha_a^\star=0$ without loss of generality.
\end{itemize}

\medskip
\noindent
\textbf{(2) Inter-class blocks.}
For every $b\ne a$,
\[
(C^\star)_{ab} = \gamma_{ab}^\star\,J_{V_a,V_b},
\]
where the coefficients $\gamma_{ab}^\star$ satisfy the linear system
\[
\sum_{c\ne a}
(G_{-a}+D_{-a})_{bc}
\,\gamma_{ac}^\star
= (1-n_a\beta_a^\star)\,n_b\,\langle m_a,m_b\rangle,
\quad b\ne a.
\]
Equivalently, solving for $\gamma_{ab}^\star$ gives
\[
\gamma_{ab}^\star
= \frac{\tau_a}{\,n_a A_a + \tau_a\,}
  \sum_{c\ne a}
  \big[(G_{-a}+D_{-a})^{-1}\big]_{bc}\,
  n_c\,\langle m_a,m_c\rangle.
\]

\medskip
\noindent
The minimizer $C^\star$ is unique.

\end{theorem}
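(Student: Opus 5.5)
The plan is to reduce to the classwise permutation equivariant family and then minimize a decoupled quadratic. By Corollary~\ref{cor:opsolution}, a minimizer can be taken of the form Eq.~\eqref{eq:C-form}. Lemma~\ref{lem:bv} and Corollary~\ref{appcor:lossform} then give $\mathcal L(C)=\sum_a \frac{1}{n_a}\,n_a\,\phi_a(\alpha_a,\beta_a,\gamma_{a\bullet})=\sum_a\phi_a$. Each $\phi_a$ involves only the parameters of row-block $a$, so the problem splits into $k$ independent convex quadratic minimizations. For uniqueness over all $C$ (not only equivariant ones), I would note that the variance term $\sum_a\frac{1}{n_a}\sum_{p\in V_a}\sum_b\tau_b\|C_{p,V_b}\|^2$ is a positive definite quadratic form in $C$, since all $\tau_b>0$. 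Hence $\mathcal L$ is strictly convex and has a unique minimizer. By Lemma~\ref{lem:jensen} and uniqueness, this minimizer equals its own Reynolds average, so it is equivariant.

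Within block $a$, I would change variables to $s_a:=\alpha_a+n_a\beta_a$ (which governs the bias) and $\alpha_a$. The variance term is
\[
(\alpha_a+\beta_a)^2+(n_a-1)\beta_a^2=\tfrac{1}{n_a}s_a^2+\tfrac{n_a-1}{n_a}\alpha_a^2.
\]
This follows from the eigenvalues of $\alpha I+\beta J$: the matrix equals $\|\alpha I+\beta J\|_F^2/n_a$. Since $\alpha_a$ then enters only through a nonnegative term, for $n_a>1$ we get $\alpha_a^\star=0$. For $n_a=1$ only $s_a$ matters, and we may set $\alpha_a^\star=0$.

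With $\alpha_a=0$, $\phi_a$ becomes
\[
\|(n_a\beta_a-1)m_a+M_{-a}\gamma\|^2+\tau_a n_a\beta_a^2+\gamma^\top D_{-a}\gamma.
\]
I would first minimize over $\gamma$ for fixed $\beta_a$. The normal equations are $(M_{-a}^\top M_{-a}+D_{-a})\gamma=(1-n_a\beta_a)M_{-a}^\top m_a$. Writing $G_{-a}:=M_{-a}^\top M_{-a}$ (I interpret the paper's $G_{-a}$ this way, with $M_{-a}^\top m_a=(n_b\langle m_a,m_b\rangle)_b$) gives the stated linear system. The matrix $G_{-a}+D_{-a}\succ0$ because $D_{-a}\succ0$.

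Substituting back uses the standard Schur-complement identity
\[
\min_\gamma\|c\,m+M\gamma\|^2+\gamma^\top D\gamma=c^2\bigl(\|m\|^2-m^\top M(G+D)^{-1}M^\top m\bigr)=c^2A_a,
\]
which holds because the quadratic form $\|m\|^2-m^\top M(M^\top M+D)^{-1}M^\top m$ is the reduced value. So $\phi_a=(1-n_a\beta_a)^2A_a+\tau_a n_a\beta_a^2$. Note that $A_a\ge0$, since it is a minimum of nonnegative quantities. Setting the derivative to zero gives $-n_aA_a(1-n_a\beta_a)+\tau_an_a\beta_a=0$, so $\beta_a^\star=A_a/(n_aA_a+\tau_a)$.

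Then $1-n_a\beta_a^\star=\tau_a/(n_aA_a+\tau_a)$, and plugging into the $\gamma$ system yields the closed form for $\gamma_{ab}^\star$.

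The main obstacle is not the algebra but the uniqueness and reduction logic. Lemma~\ref{lem:jensen} only shows that averaging does not increase the loss, so I rely on strict convexity of $\mathcal L$ in $C$. This is immediate from the variance term being $\sum_{p,j}w_{pj}C_{pj}^2$ with all weights $w_{pj}>0$, while the bias term is a convex quadratic. I would also double-check the Schur-complement reduction via the matrix identity $I-M(M^\top M+D)^{-1}M^\top\succeq0$, which confirms $A_a\ge0$ and that the denominators $n_aA_a+\tau_a$ are positive.
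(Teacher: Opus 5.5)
Your proposal is correct, and its skeleton matches the paper's. Both restrict to the classwise-equivariant commutant, decouple into the per-class quadratics $\phi_a$, eliminate $\gamma_{a\bullet}$ through the normal equations $(G_{-a}+D_{-a})\gamma_{a\bullet}=(1-n_a\beta_a)M_{-a}^\top m_a$, and solve a scalar problem for $\beta_a$. Two steps are done differently.

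To obtain $\alpha_a^\star=0$, the paper writes down the Hessian of $\phi_a$ in $(\alpha_a,\beta_a,\gamma_{a\bullet})$ and proves it positive definite via a Schur complement, which requires $n_a>1$. It then invokes the envelope theorem and subtracts $n_a$ times the $\alpha_a$-stationarity condition from the $\beta_a$-condition. Your substitution $s_a=\alpha_a+n_a\beta_a$ turns the variance term into $\tfrac{1}{n_a}s_a^2+\tfrac{n_a-1}{n_a}\alpha_a^2$, while the bias depends on $s_a$ alone. This shows directly that $\alpha_a$ is decoupled and must vanish, and it makes the singleton case $n_a=1$ immediate.

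For uniqueness, the paper's strict-convexity argument lives in the commutant parameters, with a separate discussion when $n_a=1$. On its own it therefore only gives uniqueness among equivariant matrices. Together with the averaging lemma, it does not yet rule out a non-equivariant minimizer whose Reynolds average is $C^\star$. You observe that the variance term is a positively weighted sum of squares of all entries of $C$, so $\mathcal L$ is strictly convex on all of $\mathbb{R}^{n\times n}$. This yields existence, global uniqueness, and, via averaging, that the minimizer is itself equivariant---precisely what ``the minimizer has the following block structure'' asserts.

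The paper's route gives explicit second-order information in exchange, namely a positive definite Hessian $H_a$ in the natural parameters. For the theorem as stated, however, your argument is shorter and handles the global-uniqueness step more cleanly.
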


\begin{proof}
By Corollary~\ref{cor:opsolution}, we may restrict to the commutant form Eq.\eqref{eq:C-form}.
Fix a class $a$.  From Corollary~\ref{appcor:lossform}, the per-class contribution of $V_a$ to $\mathcal{L}(C)$ is Eq.\eqref{eq:loss-form}.

\paragraph{Convexity and strict convexity:}
For each class $a$, recall from Eq.\eqref{eq:loss-form} that
\[
\phi_a(\alpha_a,\beta_a,\gamma_{a\bullet})
=\Big\|\,u_a+M_{-a}\gamma_{a\bullet}\,\Big\|_2^2
+\tau_a\Big((\alpha_a+\beta_a)^2+(n_a-1)\beta_a^2\Big)
+\gamma_{a\bullet}^\top D_{-a}\gamma_{a\bullet},
\]
where $u_a=(\alpha_a+n_a\beta_a-1)m_a$, $M_{-a}=[\,n_b m_b\,]_{b\ne a}$,
and $D_{-a}=\mathrm{diag}(\tau_b n_b)_{b\ne a}\succ0$ as we assume any $\tau_b>0$ in Section~\ref{appsubsubsec:setting}.
The Hessian of $\phi_a$ with respect to $(\alpha_a,\beta_a,\gamma_{a\bullet})$
is the block matrix
\[
H_a
=2\begin{bmatrix}
A & B\\
B^\top & G_{-a}+D_{-a}
\end{bmatrix},
\qquad
A=
\begin{bmatrix}
\|m_a\|_2^2+\tau_a & n_a\|m_a\|_2^2+\tau_a\\[2pt]
n_a\|m_a\|_2^2+\tau_a & n_a^2\|m_a\|_2^2+n_a\tau_a
\end{bmatrix},
\quad
B=\begin{bmatrix}
h_a^\top\\[2pt]
n_a\,h_a^\top
\end{bmatrix},
\]
with $G_{-a}=M_{-a}^\top M_{-a}\succeq0$
and $h_a=M_{-a}^\top m_a$.

Each term in Eq.\eqref{eq:loss-form} is a nonnegative quadratic form:
the first term is a squared Euclidean norm,
the second is $\tau_a$ times a positive semidefinite quadratic in $(\alpha_a,\beta_a)$,
and the last is $\gamma_{a\bullet}^\top D_{-a}\gamma_{a\bullet}$ with $D_{-a}\succ0$.
Therefore $\phi_a$ is convex and $H_a\succeq0$ for any class $a$.

To determine when $H_a$ is positive definite,
apply the Schur complement criterion to its block form:
\[
H_a\succ0
\quad\Longleftrightarrow\quad
A\succ0
\ \text{and}\
S:=G_{-a}+D_{-a}-B^\top A^{-1}B\succ0.
\]
A direct calculation gives
\[
\det A
=\tau_a\,(n_a-1)\,(n_a\|m_a\|_2^2+\tau_a),
\qquad
A_{11}=\|m_a\|_2^2+\tau_a>0.
\]
Hence $A\succ0$ if and only if $n_a>1$ and $\tau_a>0$.
Under these assumptions, we compute
\[
B^\top A^{-1}B
= c\,h_a h_a^\top,
\qquad
c=\frac{n_a}{\,n_a\|m_a\|_2^2+\tau_a\,}
\in\Bigl(0,\frac{1}{\|m_a\|_2^2}\Bigr),
\]
so that
\[
S = (G_{-a}-c\,h_a h_a^\top)+D_{-a}.
\]
For any $v\in\mathbb{R}^{k-1}$, the Cauchy--Schwarz inequality yields
\[
v^\top(G_{-a}-c\,h_a h_a^\top)v
= \|M_{-a}v\|_2^2 - c\,\langle m_a,M_{-a}v\rangle^2
\ \ge\ (1-c\|m_a\|_2^2)\,\|M_{-a}v\|_2^2.
\]
Since $\tau_a>0$ implies $1-c\|m_a\|_2^2>0$ and $D_{-a}\succ0$ when all $\tau_b>0$,
we obtain
\[
v^\top S v
\;\ge\; (1-c\|m_a\|_2^2)\,\|M_{-a}v\|_2^2
      + \sum_{b\ne a}\tau_b n_b\,v_b^2
\;>\;0
\quad\text{for all }v\ne0.
\]
Thus $S\succ0$ and hence $H_a\succ0$ whenever $n_a>1$ and $\tau_a>0$
for all classes.
Consequently, each $\phi_a$ is strictly convex, so is the overall loss
$\mathcal{L}(C)=\sum_a\frac{1}{n_a}\phi_a$.

\paragraph{Eliminating $\gamma_{a\bullet}$:}
For fixed $(\alpha_a,\beta_a)$,
\[
\nabla_{\gamma}\phi_a
=2M_{-a}^\top(u_a+M_{-a}\gamma_{a\bullet})+2D_{-a}\gamma_{a\bullet}.
\]
Setting the gradient to zero gives the normal equations
\[
(G_{-a}+D_{-a})\,\gamma_{a\bullet}+M_{-a}^\top u_a=0.
\]
Note that $G_{-a}+D_{-a}\succ 0$, its unique solution is
\begin{equation}\label{eq:gamma-star-cond}
\gamma_{a\bullet}^\star(\alpha_a,\beta_a)
=-(G_{-a}+D_{-a})^{-1}M_{-a}^\top u_a
=-(\alpha_a+n_a\beta_a-1)\,(G_{-a}+D_{-a})^{-1}h_a.
\end{equation}

\paragraph{Solving for $\alpha_a^\star$, $\beta_a^\star$ and $\gamma_{a\bullet}^\star$:}
Define the reduced function
\[
\tilde\phi_a(\alpha_a,\beta_a)
:=\min_{\gamma}\phi_a(\alpha_a,\beta_a,\gamma)
=\phi_a(\alpha_a,\beta_a,\gamma_{a\bullet}^\star(\alpha_a,\beta_a)).
\]
Because $\phi_a$ is convex-quadratic and the minimizer in $\gamma$ is unique,
the envelope theorem implies that the partial derivatives of $\tilde\phi_a$ are simply the partial derivatives of $\phi_a$ evaluated at $\gamma_{a\bullet}^\star$; that is,
\[
\frac{\partial \tilde\phi_a}{\partial \alpha_a}
=\left.\frac{\partial \phi_a}{\partial \alpha_a}\right|_{\gamma=\gamma_{a\bullet}^\star},
\qquad
\frac{\partial \tilde\phi_a}{\partial \beta_a}
=\left.\frac{\partial \phi_a}{\partial \beta_a}\right|_{\gamma=\gamma_{a\bullet}^\star}.
\]
Hence, when differentiating we may treat $\gamma_{a\bullet}^\star$ as a constant.

Let $z_a:=u_a+M_{-a}\gamma_{a\bullet}^\star$, we obtain the first-order conditions
\begin{equation}\label{eq:stationarity-alpha-beta}
\partial_{\alpha_a}\tilde\phi_a
=2\langle z_a,m_a\rangle+2\tau_a(\alpha_a+\beta_a)=0,
\qquad
\partial_{\beta_a}\tilde\phi_a
=2n_a\langle z_a,m_a\rangle+2\tau_a(\alpha_a+n_a\beta_a)=0.
\end{equation}
Multiply the first equality of Eq.\eqref{eq:stationarity-alpha-beta} by $n_a$ and subtract it from the second:
\[
0=2(n_a-1)\tau_a\,\alpha_a.
\]
Thus $\alpha_a^\star=0$ for $n_a>1$.
When $n_a=1$, the two equations coincide, so the loss depends only on $\alpha_a+\beta_a$; we set $\alpha_a^\star=0$ without loss of generality.

With $\alpha_a=0$, $u_a=(n_a\beta_a-1)m_a$.
Using Eq.\eqref{eq:gamma-star-cond},
\[
M_{-a}\gamma_{a\bullet}^\star
=-(n_a\beta_a-1)\,M_{-a}(G_{-a}+D_{-a})^{-1}h_a,
\]
and hence
\[
\langle z_a,m_a\rangle
=(n_a\beta_a-1)
\Big(\|m_a\|_2^2-h_a^\top(G_{-a}+D_{-a})^{-1}h_a\Big)
=(n_a\beta_a-1)A_a,
\]
where
\[
A_a:=\|m_a\|_2^2-h_a^\top(G_{-a}+D_{-a})^{-1}h_a.
\]
Substituting into the first equation of Eq.\eqref{eq:stationarity-alpha-beta} (with $\alpha_a=0$) gives
\[
(n_a\beta_a-1)A_a+\tau_a\beta_a=0,
\qquad\Rightarrow\qquad
\beta_a^\star=\frac{A_a}{\,n_aA_a+\tau_a\,}.
\]

Substituting $\beta_a^\star$ into Eq.\eqref{eq:gamma-star-cond} yields
\[
\gamma_{a\bullet}^\star
=-(n_a\beta_a^\star-1)(G_{-a}+D_{-a})^{-1}h_a
=\frac{\tau_a}{\,n_aA_a+\tau_a\,}(G_{-a}+D_{-a})^{-1}h_a,
\]
which is equivalent to the stated linear system and its explicit solution.

\paragraph{Nonnegativity of $A_a$ and uniqueness:}
The quantity $A_a$ is nonnegative.  
Indeed, for any scalar $t$,
\[
\min_{\gamma}\big\|\,t\,m_a+M_{-a}\gamma\,\big\|_2^2+\gamma^\top D_{-a}\gamma
=t^2\!\Big(\|m_a\|_2^2-h_a^\top(G_{-a}+D_{-a})^{-1}h_a\Big)
=t^2A_a\ge0.
\]
Hence $A_a\ge0$.  
Finally, given $n_a>1$ for every $a$, strict convexity of $\phi_a$ ensures uniqueness of the minimizer.

When $n_a=1$, the function $\phi_a$ depends on $(\alpha_a,\beta_a)$ only through their sum
$s_a=\alpha_a+\beta_a$.  Indeed, substituting $n_a=1$ into Eq.\eqref{eq:loss-form} gives
\[
\phi_a(\alpha_a,\beta_a,\gamma_{a\bullet})
=\big\|(s_a-1)m_a+M_{-a}\gamma_{a\bullet}\big\|_2^2
+\tau_a s_a^2
+\gamma_{a\bullet}^\top D_{-a}\gamma_{a\bullet},
\qquad s_a=\alpha_a+\beta_a.
\]
Hence $\phi_a$ is strictly convex in $s_a$ and $\gamma_{a\bullet}$, 
but flat along the direction $(1,-1,0,\dots,0)$ in the parameter space of $(\alpha_a,\beta_a,\gamma_{a\bullet})$.
Consequently, the minimizer $(\alpha_a^\star,\beta_a^\star,\gamma_{a\bullet}^\star)$ is not unique:
all pairs $(\alpha_a,\beta_a)$ satisfying $\alpha_a+\beta_a=s_a^\star$ yield the same minimal value,
where $s_a^\star$ and $\gamma_{a\bullet}^\star$ are uniquely determined.
Nevertheless, the corresponding intra-class block
\[
(C^\star)_{aa}
=\alpha_a^\star I_{V_a}+\beta_a^\star J_{V_a}
=(\alpha_a^\star+\beta_a^\star)J_{V_a}
=s_a^\star J_{V_a}
\]
is unique, so the matrix $C^\star$ itself remains uniquely defined.
\end{proof}

%\begin{remark}[Interpretation]
%A_a$ in \eqref{eq:Aa} is the residual energy of the class-$a$ mean $m_a$ after %ridge-regressing it onto $\{m_b\}_{b\ne a}$ with penalty $D_{-a}$. 
%因此，$\beta_a^\star$ 仍然是 “信噪比” 形态：把原来 $\|m_a\|_2^2$ 替换为可解释为“剔除可由其他类%均值解释的部分”后的有效强度 $A_a$。
%跨类块 $\gamma_{ab}^\star$ 一般不为零；它们正比于 $(G_{-a}+D_{-a})^{-1}h_a$，即在“均值张成的%子空间”内对偏差的 ridge 校正。
%当 $h_a=0$（例如均值两两正交）或噪声强（$D_{-a}\gg I$）时，$\gamma_{ab}^\star\to 0$，退化回纯块对角。
%\end{remark}

When features are sampled from a high-dimensional distribution, the optimal convolution is nearly block-diagonal, and its intra-class coefficients asymptotically approach a deterministic limit. We then restate and prove Theorem~\ref{thm:HDasym} as follows.

\begin{theorem}
\label{appthm:HDasym}
Under the model assumed in Section~\ref{appsubsubsec:setting}, 
let $C^\star$ be the minimizer characterized in Theorem~\ref{appthm:opconv}.
Assume $\tau_b n_b \ge \tau_0 > 0$ for all $b$, 
where $\tau_0$ is a fixed positive constant providing a uniform lower bound 
on the classwise variance scales.  
Then, as the dimensions of features $d\to\infty$,
\[
\beta_a^\star \xrightarrow{\ \mathbb P\ } \frac{1}{\,n_a+\tau_a\,},
\qquad
\gamma^\star_{ab}\xrightarrow{\ \mathbb P\ }0\quad (b\neq a).
\]
Hence the optimal convolution $C^\star$ becomes asymptotically block-diagonal, 
with intra-class rank-one blocks 
$s_a^\star J_{V_a}$ where $s_a^\star\to (n_a+\tau_a)^{-1}$ in probability.
\end{theorem}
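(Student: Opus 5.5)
Starting from the closed forms in Theorem~\ref{appthm:opconv}, I will show that the Gram structure of the random class means degenerates as $d\to\infty$, and then pass to the limit using continuity of matrix inversion. Since each $m_b$ is uniform on $\mathbb{S}^{d-1}$, we have $\|m_b\|_2=1$ exactly. For $b\neq c$, independence and rotational invariance give $\mathbb{E}\langle m_b,m_c\rangle=0$ and $\mathbb{E}\langle m_b,m_c\rangle^2=1/d$. By Chebyshev, $\langle m_b,m_c\rangle\xrightarrow{\mathbb P}0$. With $k$ fixed there are finitely many pairs, so the whole Gram matrix $(\langle m_b,m_c\rangle)_{b,c}$ converges in probability to $I_k$. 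In particular $h_a=M_{-a}^\top m_a=(n_b\langle m_b,m_a\rangle)_{b\neq a}\xrightarrow{\mathbb P}0$, while $G_{-a}=M_{-a}^\top M_{-a}\xrightarrow{\mathbb P}\mathrm{diag}(n_b^2)_{b\neq a}$.

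Next I control $(G_{-a}+D_{-a})^{-1}$ uniformly. Because $G_{-a}\succeq0$ and $D_{-a}=\mathrm{diag}(\tau_bn_b)\succeq\tau_0 I$ by the assumption $\tau_bn_b\ge\tau_0$, we get $\|(G_{-a}+D_{-a})^{-1}\|_{op}\le 1/\tau_0$ deterministically, for every $d$. This is exactly where the uniform lower bound enters. It yields
\[
0\le h_a^\top(G_{-a}+D_{-a})^{-1}h_a\le \|h_a\|_2^2/\tau_0\xrightarrow{\mathbb P}0,
\]
hence $A_a=1-h_a^\top(G_{-a}+D_{-a})^{-1}h_a\xrightarrow{\mathbb P}1$. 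The map $x\mapsto x/(n_ax+\tau_a)$ is continuous at $x=1$, and $\tau_a>0$ keeps the denominator away from zero. The continuous mapping theorem then gives $\beta_a^\star\xrightarrow{\mathbb P}1/(n_a+\tau_a)$.

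For the inter-class coefficients, the prefactor $\tau_a/(n_aA_a+\tau_a)$ lies in $(0,1]$ because $A_a\ge0$ (shown in Theorem~\ref{appthm:opconv}). This gives
\[
\|\gamma^\star_{a\bullet}\|_2\le\|(G_{-a}+D_{-a})^{-1}\|_{op}\|h_a\|_2\le\|h_a\|_2/\tau_0\xrightarrow{\mathbb P}0.
\]
Finally, I assemble the block structure. By Theorem~\ref{appthm:opconv}, $\alpha_a^\star=0$ for $n_a>1$; for $n_a=1$ the block is $s_a^\star J_{V_a}$ directly. So the diagonal blocks are $\beta_a^\star J_{V_a}$ with $s_a^\star=\beta_a^\star$, and the off-diagonal blocks $\gamma_{ab}^\star J_{V_a,V_b}$ converge to $0$ entrywise in probability. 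Because there are finitely many entries, this gives the stated blockwise convergence.

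The main obstacle is the middle step. A naive entrywise limit of the inverse would require knowing the limit of $D_{-a}$, which may itself vary with $d$ since the $\tau_b$ are not assumed fixed. I avoid this by using the deterministic operator-norm bound from $\tau_0$ instead of the continuity of inversion. One remaining subtlety is the case $n_a=1$: there $\beta_a$ alone is not identified, so the statement must be read in terms of $s_a^\star=\alpha_a^\star+\beta_a^\star$. The same formula still applies to it after the normalization $\alpha_a^\star=0$.
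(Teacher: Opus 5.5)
Your proof is correct and follows essentially the same route as the paper. You use concentration of $\langle m_a,m_b\rangle$ to force $h_a\to 0$, the deterministic bound $\|(G_{-a}+D_{-a})^{-1}\|_{\mathrm{op}}\le\tau_0^{-1}$ from $D_{-a}\succeq\tau_0 I$, hence $A_a\to1$, and then the closed forms of Theorem~\ref{appthm:opconv} for $\beta_a^\star$ and $\gamma_{a\bullet}^\star$, with $n_a=1$ read through $s_a^\star$. The differences are minor: you use Chebyshev with $\mathbb{E}\langle m_b,m_c\rangle^2=1/d$ instead of the paper's sub-Gaussian tail and union bound, so you lose the explicit $\sqrt{\log d/d}$ rates, which the statement does not need. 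Also, since you allow $\tau_a$ to vary with $d$, the continuous-mapping step should be replaced by the direct bound $|\beta_a^\star-(n_a+\tau_a)^{-1}|=\tau_a(1-A_a)/[(n_aA_a+\tau_a)(n_a+\tau_a)]\le 1-A_a$.
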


\begin{proof}
We keep $k$ and the class sizes $\{n_b\}_{b=1}^k$ fixed as $d\to\infty$.
Under the model in Section~\ref{appsubsubsec:setting}, the class means satisfy $\|m_a\|_2\equiv1$ for all $a$, and each $\tau_a$ is fixed, hence $1/(n_a+\tau_a)$ denotes a deterministic limit.

\paragraph{Concentration of random directions:}
If $u,v\sim\mathrm{Unif}(\mathbb S^{d-1})$ are independent, then
$\Pr\!\big(|\langle u,v\rangle|\ge t/\sqrt d\big)\le 2e^{-c t^2}$ for some universal $c>0$.
Let $t=\sqrt{C\log d}$ with $C>2/c$. By a union bound over ordered pairs $(a,b)$ with $b\ne a$,
with probability at least $1-2k(k-1)d^{-cC}$,
\[
|\langle m_a,m_b\rangle|\le \sqrt\frac{C\operatorname{log}d}{ d}\quad\text{for all }a\ne b.
\]
Writing $h_a:=M_{-a}^\top m_a$ with entries $(h_a)_b=n_b\langle m_b,m_a\rangle$ ($b\ne a$),
\[
\|h_a\|_2^2=\sum_{b\ne a} n_b^2\,\langle m_b,m_a\rangle^2
\ \le\ \frac{C\operatorname{log}d}{d}\sum_{b\ne a} n_b^2,
\]
and since $k,\{n_b\}$ are fixed, uniformly for all $a$,
\[
\|h_a\|_2\ =\ \mathcal O_\mathbb{P}\!\left(\sqrt{\frac{\log d}{d}}\right),
\]
where $\mathcal{O}_{\mathbb P}$ indicates that the bound holds in probability, i.e., $|h_a|_2 / \sqrt{\tfrac{\log d}{d}}$ is bounded in probability.
\paragraph{Controlling $(G_{-a}+D_{-a})^{-1}$.}
By assumption, $D_{-a}=\mathrm{diag}(\tau_b n_b)_{b\ne a}\succeq \tau_0 I$, hence
\[
G_{-a}+D_{-a}\ \succeq\ D_{-a}\ \succeq\ \tau_0 I
\quad\Longrightarrow\quad
\|(G_{-a}+D_{-a})^{-1}\|_{\mathrm{op}}\le \tau_0^{-1}.
\]
Here $\|\cdot\|_{\mathrm{op}}$ denotes the operator norm, equal to the largest singular value of a symmetric matrix
and to the maximal eigenvalue in absolute value.
Therefore,
\[
h_a^\top(G_{-a}+D_{-a})^{-1}h_a
\ \le\ \|(G_{-a}+D_{-a})^{-1}\|_{\mathrm{op}}\cdot \|h_a\|_2^2
\ =\ \mathcal{O}_{\mathbb P}\!\left(\frac{\log d}{d}\right),
\]
uniformly in $a$.

\paragraph{Asymptotics of $A_a$, $\gamma_{a\bullet}^\star$, and the diagonal block coefficients:}
By Theorem~\ref{appthm:opconv}, we have
\[
A_a:=\|m_a\|_2^2-h_a^\top(G_{-a}+D_{-a})^{-1}h_a
\ =\ 1-\mathcal O_{\mathbb P}\!\left(\frac{\log d}{d}\right)\] and
\[
n_aA_a+\tau_a \ =\ (n_a+\tau_a)\,\bigl[1+\mathcal O_{\mathbb P}((\log d)/d)\bigr].
\]
Moreover, the inter-class coefficients satisfy
\[
\gamma_{a\bullet}^\star
=\frac{\tau_a}{\,n_aA_a+\tau_a\,}\,(G_{-a}+D_{-a})^{-1}h_a,
\]
so using the bounds above with $k,\{n_b\}$ fixed,
\[
\|\gamma_{a\bullet}^\star\|_2
\ \le\ \frac{\tau_a}{n_a+\tau_a}\cdot \frac{1+\mathcal O_{\mathbb P}((\log d)/d)}{\tau_0}\cdot
\|h_a\|_2
\ =\ \mathcal{O}_{\mathbb P}\!\left(\sqrt{\frac{\log d}{d}}\right),
\]
uniformly in $a$. Hence each component $\gamma_{ab}^\star\to0$ in probability for $b\ne a$.

\medskip
For the diagonal blocks we distinguish the two cases but treat them under a unified limit.
\begin{itemize}
    \item \emph{Case $n_a>1$.} Theorem~\ref{appthm:opconv} yields $\alpha_a^\star=0$ and
\[
\beta_a^\star=\frac{A_a}{\,n_aA_a+\tau_a\,}
=\frac{1-\mathcal{O}_{\mathbb{P}}((\log d)/d)}{(n_a+\tau_a)\,[1+\mathcal{O}_{\mathbb{P}}((\log d)/d)]}
=\frac{1}{\,n_a+\tau_a\,}+\mathcal{O}_{\mathbb{P}}\!\left(\frac{\log d}{d}\right),
\]
so $\beta_a^\star \xrightarrow{\mathbb P} 1/(n_a+\tau_a)$ and
$(C^\star)_{aa}=\beta_a^\star J_{V_a}$.

\item \emph{Case $n_a=1$.} In this case $\phi_a$ depends on $(\alpha_a,\beta_a)$ only through $s_a=\alpha_a+\beta_a$ and
\[
\gamma_{a\bullet}^\star=-(s_a-1)(G_{-a}+D_{-a})^{-1}h_a,\qquad
\tilde\phi_a(s_a)=\min_\gamma \phi_a(s_a,\gamma)
=(s_a-1)^2A_a+\tau_a s_a^2.
\]
The first-order condition gives $(s_a-1)A_a+\tau_a s_a=0$, hence
\[
s_a^\star=\frac{A_a}{A_a+\tau_a}
=\frac{1-\mathcal{O}_{\mathbb{P}}((\log d)/d)}{(1+\tau_a)\,[1+\mathcal{O}_{\mathbb{P}}((\log d)/d)]}
=\frac{1}{\,1+\tau_a\,}+\mathcal{O}_{\mathbb{P}}\!\left(\frac{\log d}{d}\right),
\]
so $s_a^\star\xrightarrow{\mathbb P}1/(1+\tau_a)$ and
$(C^\star)_{aa}=s_a^\star J_{V_a}$.
\end{itemize}
Combining the two cases, define the diagonal block coefficient
\[
t_a^\star:=\begin{cases}
\beta_a^\star, & n_a>1,\\
s_a^\star, & n_a=1,
\end{cases}
\]
then in both cases $t_a^\star \xrightarrow{\mathbb P} 1/(n_a+\tau_a)$ and $(C^\star)_{aa}=t_a^\star J_{V_a}$.

\paragraph{Asymptotic block-diagonality.}
Each off-diagonal block satisfies $(C^\star)_{ab}=\gamma_{ab}^\star\,J_{V_a,V_b}$, thus
\[
\|(C^\star)_{ab}\|_{\mathrm{op}}
=\sqrt{n_a n_b}\,|\gamma_{ab}^\star|
\ \le\ \sqrt{n_a n_b}\,\|\gamma_{a\bullet}^\star\|_2
=\mathcal{O}_{\mathbb P}\!\left(\sqrt{\frac{\log d}{d}}\right)\to0.
\]
Therefore $C^\star$ converges in probability to a block-diagonal matrix with intra-class rank-one blocks
$t_a^\star J_{V_a}$, where $t_a^\star\to (n_a+\tau_a)^{-1}$ in probability.

\end{proof}

\subsection{Spectral GNNs Cannot Express the Optimal Convolution}
\label{appsec:limitedex}
\begin{definition}
Let $L \in \mathbb{R}^{n\times n}$ be a real symmetric matrix, e.g.\ a graph Laplacian.  
Denote by 
\[
\mathrm{spec}(L)=\{\lambda_1<\lambda_2<\cdots<\lambda_m\}
\]
the set of distinct eigenvalues of $L$, and let each $\lambda$ have multiplicity 
$m_\lambda = \dim \ker(L-\lambda I)$.  
For each $\lambda\in\mathrm{spec}(L)$, let 
$U_\lambda\in\mathbb{R}^{n\times m_\lambda}$ collect an orthonormal basis of 
$\ker(L-\lambda I)$.  
Define the orthogonal projector onto this eigenspace by
\[
E_\lambda \;:=\; U_\lambda U_\lambda^\top .
\]
\end{definition}

\begin{remark}
Then $\{E_\lambda\}_{\lambda\in\mathrm{spec}(L)}$ satisfies
\[
E_\lambda^\top=E_\lambda,\quad E_\lambda^2=E_\lambda,\quad
E_\lambda E_\mu = 0\ (\lambda\neq\mu),\quad \sum_{\lambda\in\mathrm{spec}(L)} E_\lambda = I_n,
\]
and the spectral resolution
\[
L \;=\; \sum_{\lambda\in\mathrm{spec}(L)} \lambda\, E_\lambda .
\]
Moreover, $E_\lambda$ is independent of the choice of orthonormal basis $U_\lambda$.
\end{remark}

To simplify the discussion, we assume the feature dimension $d\to+\infty$, 
under which the optimal convolution minimizing $\mathcal{L}(C)$ becomes block-diagonal:
\[
(C^\star)_{aa}=\frac{1}{n_a+\tau_a},\qquad (C^\star)_{ab}=0\ (b\neq a).
\]
From the viewpoint of expressiveness, our discussion concerns not a single operator 
but an entire \emph{family of convolution operators} parameterized by learnable coefficients.
A spectral GNN corresponds to the family
\[
\mathcal{S}=\mathrm{span}\{E_\lambda:\lambda\in\mathrm{spec}(L)\}
=\{\,C(g)=\textstyle\sum_{\lambda} g(\lambda)\,E_\lambda:\ g\in\mathbb{R}^{\mathrm{spec}(L)}\,\},
\]
that is, all linear filters expressible as functions of the Laplacian~$L$.
The expressive power of this model family depends on whether, through learning,
it can realize or approximate the desired target convolution $C^\star$.

We show that the optimal convolution $C^\star$ lies strictly outside this spectral subspace 
and cannot even be approximated within it to arbitrary accuracy.
In other words, no choice of spectral coefficients $g(\lambda)$ can reproduce or 
arbitrarily approach $C^\star$.
In analogy with control theory, 
the family of spectral convolutions is \emph{structurally unreachable} from $C^\star$,
indicating that this function class is inherently incapable of reaching 
the optimal convolution during learning,
and thereby revealing a fundamental limitation in its representational capacity. We restate and prove Theorem~\ref{thm:limitedex} as follows.

\begin{theorem}
\label{appthm:limitedex}
Let $G=(V,E)$ be a connected graph with Laplacian eigendecomposition
$L=\sum_{\lambda\in\mathrm{spec}(L)} \lambda\,E_\lambda$ as in the definition above, and
$|V|=n$. Define the class of spectral convolution matrices
\[
\mathcal{C}
\;:=\;
\Bigl\{\, C(c)\;=\;\sum_{\lambda\in\mathrm{spec}(L)} c_\lambda\, E_\lambda
\ :\ c=(c_\lambda)_{\lambda\in\mathrm{spec}(L)}\in\mathbb{R}^m\,\Bigr\},
\]
where $m=|\mathrm{spec}(L)|$. Each $C(c)$ applies a scalar response $c_\lambda$ to the
entire eigenspace of eigenvalue $\lambda$.

Then there exists an integer $K\ge 1$ such that, for any integer $K< k\le n$, there exists a
$k$-partition $V=V_1\cup\cdots\cup V_k$, with labels $y_i=t$ if and only if $v_i\in V_t$, having
the following property: if $C(c)\in\mathcal{C}$ satisfies
\[
C(c)_{ij}=0 \quad \text{whenever } y_i\neq y_j,
\]
then necessarily $C(c)=\alpha I_n$ for some $\alpha\in\mathbb{R}$.
\end{theorem}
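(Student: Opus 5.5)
The plan is to turn the zero pattern of $C(c)$ into a commutation constraint and let connectivity of $G$ force $C(c)$ to be scalar. The first step is to use that every $C(c)\in\mathcal C$ is a polynomial in $L$. Since the eigenvalues $\lambda\in\mathrm{spec}(L)$ are distinct, Lagrange interpolation gives a polynomial $p$ with $p(\lambda)=c_\lambda$, and then $C(c)=p(L)$. Hence $C(c)$ is symmetric and commutes with $L$, so $C(c)L=LC(c)$. This holds for either choice of $L$ ($D-A$ or the normalized one). For both choices, an off-diagonal entry $L_{ij}$ with $i\neq j$ is nonzero if and only if $\{i,j\}\in E$.

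The core case is the finest partition, where every vertex is its own class, so $k=n$. Then the hypothesis $C_{ij}=0$ whenever $y_i\neq y_j$ says exactly that $C:=C(c)$ is diagonal, say $C=\mathrm{diag}(\delta_1,\dots,\delta_n)$. Comparing the $(i,j)$ entries of $CL=LC$ gives $\delta_i L_{ij}=L_{ij}\delta_j$. For every edge $\{i,j\}\in E$ we have $L_{ij}\neq 0$, so $\delta_i=\delta_j$. Because $G$ is connected, any two vertices are joined by a path of edges, so all $\delta_i$ are equal to a common value $\alpha$, and $C=\alpha I_n$. This already proves the theorem with $K:=n-1$ whenever $n\ge 2$: the only admissible $k$ is $n$, and the singleton partition works.

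If time permits, I would try to lower $K$ with a more refined construction. Take a set $S$ of singleton classes and group the remaining vertices $W=V\setminus S$ into few classes. Symmetry of $C$ forces row $i$ and column $i$ of $C$ to vanish off the diagonal for each $i\in S$. Then $(CL)_{ij}=(LC)_{ij}$ with $i,j\in S$ adjacent gives $C_{ii}=C_{jj}$. For $i\in S$ and $j\in W$, the same entry gives
\[
C_{ii}L_{ij}=\sum_{l\in W}L_{il}C_{lj}.
\]
The natural choice is $S$ containing a connected dominating set, with $W$ an independent set split into classes. I would then try to show the block $C_{W,W}$ must also be $C_{ii}I$. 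This would allow $K=n-|W|$ for suitable $W$.

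This refinement is the main obstacle: the equations involving $W$ do not obviously pin down $C_{W,W}$ unless the neighborhoods of $W$ are sufficiently distinguishing. Since the statement only asserts existence of some $K$, I would fall back on the choice $K=n-1$ together with the singleton partition, which is fully rigorous via the diagonal-commutation argument above.
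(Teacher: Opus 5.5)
Your argument is correct for the statement as literally quantified, but it takes a different and much more degenerate route than the paper. You choose $K=n-1$, so only $k=n$ must be handled, and there the partition is forced to be discrete. The theorem then reduces to ``a diagonal matrix commuting with $L$ is scalar.'' You prove this cleanly from three facts:
\begin{itemize}
\item $C(c)L=LC(c)$;
\item the off-diagonal support of $L$ is exactly $E$;
\item $G$ is connected.
\end{itemize}
For $n=1$ the claim is vacuous with $K=1$.

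The paper instead takes $K$ to be the length of a minimal list $i_1,\dots,i_K$ of vertices whose eigenvector supports $S_i=\{\ell:u_\ell(i)\neq 0\}$ jointly cover all indices. Isolating $i$ as a singleton forces $M_ic=0$. Since $M_i[:,S_i]$ has rank $|S_i|-1$ with kernel spanned by $\mathbf 1_{S_i}$ (because $\sum_{\ell\in S_i}u_\ell(i)u_\ell=e_i$), this makes $c$ constant on each $S_{i_t}$. The nowhere-vanishing eigenvector for $\lambda=0$ glues these constants together, giving $\operatorname{rank}(M_I)=n-1$.

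What the paper's route buys is substance. $K$ is often $1$ (Remark~\ref{apprmk:K=1}), and the resulting partition has one large class. That is exactly what Corollary~\ref{cor:dist-positive} needs, since it assumes some class has more than one node. The same holds for the main-text claim that $g(L)$ cannot reproduce the block-diagonal optimum $C^\star$. Your discrete partition cannot serve in either place.

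Your commutation viewpoint actually recovers the paper's stronger result directly, and it removes the obstacle in your last paragraph. Suppose $\{i\}$ is a class. Then $C(c)e_i=C(c)_{ii}e_i$, and applying $E_\lambda$ to both sides gives $c_\lambda E_\lambda e_i=C(c)_{ii}E_\lambda e_i$. Hence $c_\lambda=C(c)_{ii}$ for every $\lambda$ with $E_\lambda e_i\neq 0$. In a connected graph $E_0e_i\neq 0$ for all $i$, so these constants agree across all isolated vertices. Therefore isolating any set of vertices whose spectral supports $\{\lambda:E_\lambda e_i\neq 0\}$ cover $\mathrm{spec}(L)$ forces $C(c)=\alpha I_n$. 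There is no need to control $C_{W,W}$ through neighbourhood structure, because $C(c)$ is determined by $c$ alone.
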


To prove the theorem, we begin with some preparations. 
\begin{definition}[Indexing conventions]
\label{def:index}
Let $M\in\mathbb{R}^{a\times b}$ be a matrix.  
\begin{itemize}
  \item For a row index $i\in[a]$, we denote by $M_{-i}\in\mathbb{R}^{(a-1)\times b}$  
        the matrix obtained by removing the $i$-th row of $M$.
  \item For row and column indices $i\in[a]$ and $j\in[b]$, we denote by 
      $M_{-i,-j}\in\mathbb{R}^{(a-1)\times (b-1)}$  
      the matrix obtained by removing the $i$-th row and the $j$-th column of $M$.
  \item For a column subset $S\subseteq[b]$, we write  
        $M_{[:,S]}\in\mathbb{R}^{a\times |S|}$ for the submatrix consisting of the columns of $M$ indexed by $S$.
  \item When $S=\{\ell\}$ is a singleton, we abbreviate $M_{[:,\{\ell\}]}$ as $M_{:,\ell}$.
\end{itemize}
These notations will be used throughout for brevity.
\end{definition}

Let $L=U\Lambda U^\top$ be as above.
For a vertex $i\in V$, set
\[
D_i \;:=\; \operatorname{diag}\big(u_1(i),u_2(i),\dots,u_n(i)\big)\in\mathbb{R}^{n\times n}.
\]
Define
\[
M_i \;:=\; U_{-i}\, D_i \;\in\; \mathbb{R}^{(n-1)\times n}.
\]
Equivalently, the $\ell$-th column of $M_i$ is
\(
(M_i)_{:,\ell} \;=\; u_\ell(i)\,\big(u_\ell\big)_{-i},
\)
i.e.\ it vanishes if and only if $u_\ell(i)=0$, and otherwise it is a nonzero scalar multiple of $u_\ell$ with its $i$-th entry removed.

\medskip
We will collect a vertical stack of such matrices so that each newly added block
\emph{activates} at least one previously-zero column and together they cover all columns:

\begin{definition}
\label{def:actlist}
Let $I=(i_1,\dots,i_r)$ be an ordered list of vertices, and define the stacked matrix
\[
M_I \;:=\;
\begin{bmatrix}
M_{i_1}\\[2pt]
\vdots\\[2pt]
M_{i_r}
\end{bmatrix}
\in\mathbb{R}^{r(n-1)\times n}.
\]
For each vertex $i\in V$, denote by
\[
S_i \;:=\; \{\ell\in[n]: (M_i)_{:,\ell}\neq 0\}
\]
the set of column indices activated by $M_i$.

We say that $I$ is \emph{column-activating} if it satisfies
\begin{itemize}
    \item Every column of $M_I$ contains at least one nonzero entry, i.e. 
    \(
\bigcup_{k=1}^{r} S_{i_k} = [n].
    \)
    \item Each step $k$ introduces a new activation, i.e.
\(
\exists\,\ell\in S_{i_k}\setminus\bigcup_{t<k}S_{i_t}.
\)
Equivalently, $M_{i_k}$ contributes a nonzero entry in a column that was entirely
zero in the previous partial stack $[M_{i_1};\dots;M_{i_{k-1}}]$.
\end{itemize}

We call such an index list $I$ \emph{minimal}
if it is inclusion-minimal with respect to the activated columns, namely
\[
\bigcup_{i\in J} S_i = \bigcup_{i\in I} S_i
\;\Longrightarrow\;
J = I
\quad\text{for all } J\subseteq I.
\]
\end{definition}

\begin{remark}[Greedy construction]
The greedy rule produces a column-activating list $I$.
Starting from the empty cover $S^{(0)} := \varnothing$, 
we iteratively choose nodes that maximally enlarge the current coverage:
\[
i_k \;:=\; 
\arg\max_{i\in V}\;
\bigl|\,S_i \setminus S^{(k-1)}\,\bigr|,
\qquad
S^{(k)} \;:=\; S^{(k-1)} \cup S_{i_k}.
\]
The process stops once $S^{(k)} = [n]$, meaning each column of $M_I$ is nonzero in some block $M_{i_t}$.
The resulting index list $I = (i_1, i_2, \ldots, i_K)$ is \emph{column-activating}.  

While we do not rely on the optimality of its length, 
each selected block $M_{i_k}$ strictly increases the coverage $S^{(k)}$, 
so the greedy rule yields a minimal column-activating list in the sense of Definition~\ref{def:actlist}.
\end{remark}

\begin{proposition}
\label{prop:rank-delete}
Let $U=[u_1,\dots,u_n]\in\mathbb{R}^{n\times n}$ be an orthogonal matrix $U^\top U=I_n$, 
and let $U_k=[u_1,\dots,u_k]\in\mathbb{R}^{n\times k}$ with $1\le k\le n$.
Fix an index $s\in\{1,\dots,n\}$, and let $e_s\in\mathbb{R}^n$ denote the $s$-th standard basis vector.
Define the row-selection matrix
\[
R \;:=\;
\begin{bmatrix}
I_{s-1} & 0 & 0\\[0.3em]
0 & 0 & I_{n-s}
\end{bmatrix}
\in\mathbb{R}^{(n-1)\times n},
\]
which deletes the $s$-th row of any vector or matrix it multiplies on the left.
Let $\widetilde U_k := R U_k = (U_k)_{-s}$ be the matrix obtained by removing the $s$-th row of $U_k$.
Then
\[
\operatorname{rank}(\widetilde U_k)=
\begin{cases}
k, & \text{if and only if }\quad\displaystyle\sum_{i=1}^k u_i(s)^2<1,\\[0.6em]
k-1, & \text{if and only if }\quad\displaystyle\sum_{i=1}^k u_i(s)^2=1.
\end{cases}
\]
Equivalently,
\[
\operatorname{rank}(\widetilde U_k)=k 
\iff e_s\notin \mathrm{span}(U_k)
\iff \exists\, i>k \text{ such that } u_i(s)\neq 0,
\]
and
\[
\operatorname{rank}(\widetilde U_k)=k-1 
\iff e_s\in \mathrm{span}(U_k)
\iff u_{k+1}(s)=\cdots=u_n(s)=0.
\]
\end{proposition}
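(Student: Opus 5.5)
The approach is to reduce the rank of $\widetilde U_k$ to the dimension of the kernel of $\widetilde U_k$ acting on $\mathbb{R}^k$. By rank--nullity, $\operatorname{rank}(\widetilde U_k)=k-\dim\ker(\widetilde U_k)$, so it suffices to identify this kernel.

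A vector $c\in\mathbb{R}^k$ lies in $\ker(\widetilde U_k)$ exactly when $U_kc$ has all entries zero except possibly the $s$-th. In other words, $U_kc\in\mathrm{span}(e_s)$, so $U_kc=\mu e_s$ for some scalar $\mu$. Since $U_k$ has orthonormal columns, it is injective. Hence the map $c\mapsto\mu$ is an injective linear map from $\ker(\widetilde U_k)$ to $\mathbb{R}$, and $\dim\ker(\widetilde U_k)\le1$. This gives $\operatorname{rank}(\widetilde U_k)\in\{k-1,k\}$.

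The kernel is nonzero exactly when some nonzero $c$ has $U_kc=\mu e_s$ with $\mu\neq0$ (note $\mu=0$ would force $c=0$). Equivalently, $e_s\in\mathrm{span}(U_k)$. This establishes the first pair of equivalences: $\operatorname{rank}=k-1\iff e_s\in\mathrm{span}(U_k)$, and $\operatorname{rank}=k\iff e_s\notin\mathrm{span}(U_k)$.

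Next I translate membership of $e_s$ in $\mathrm{span}(U_k)$ into conditions on the entries. Since $U$ is orthogonal, $e_s=\sum_{i=1}^n u_i(s)\,u_i$. Therefore $e_s\in\mathrm{span}(U_k)$ if and only if the coefficients $u_{k+1}(s),\dots,u_n(s)$ all vanish, by uniqueness of the expansion in the basis $\{u_i\}$. This gives the equivalence with "$\exists\, i>k$ such that $u_i(s)\ne0$" and its negation.

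For the norm criterion, note that the rows of $U$ are also orthonormal, so $\sum_{i=1}^n u_i(s)^2=1$. Alternatively, the squared norm of the orthogonal projection $U_kU_k^\top e_s$ equals $\sum_{i\le k}u_i(s)^2$. The tail $\sum_{i>k}u_i(s)^2=1-\sum_{i\le k}u_i(s)^2$ therefore vanishes iff all tail coefficients vanish. This is exactly the condition $\sum_{i\le k}u_i(s)^2=1$; otherwise the sum is strictly less than $1$.

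Combining these equivalences gives both cases of the displayed dichotomy. No step is a real obstacle. The only care needed is in the kernel argument, to exclude $\mu=0$ via injectivity of $U_k$, so that a nontrivial kernel really forces $e_s$ into the span.
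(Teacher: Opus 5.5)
Your proof is correct, but it takes a different route from the paper's. The paper computes the Gram matrix. Because $R^\top R=I_n-e_se_s^\top$, one has $\widetilde U_k^\top\widetilde U_k=I_k-vv^\top$ with $v=U_k^\top e_s$, whose eigenvalues are $1-\|v\|^2$ and $1$ (with multiplicity $k-1$). The dichotomy and the criterion $\sum_{i\le k}u_i(s)^2<1$ then follow in one line from $\operatorname{rank}(A^\top A)=\operatorname{rank}(A)$.

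You instead argue via rank--nullity. Since $\ker\widetilde U_k=\{c: U_kc\in\ker R\}$ with $\ker R=\mathrm{span}(e_s)$ one-dimensional and $U_k$ injective, the nullity is at most one. This lands you first on the geometric criterion $e_s\in\mathrm{span}(U_k)$. The expansion $e_s=\sum_i u_i(s)\,u_i$ and the row normalization $\sum_{i=1}^n u_i(s)^2=1$ then convert it into the coefficient and norm criteria.

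The paper's computation is more quantitative. It shows the smallest singular value of $\widetilde U_k$ is exactly $\bigl(1-\sum_{i\le k}u_i(s)^2\bigr)^{1/2}$, so it also measures how close the row deletion comes to dropping rank. Yours is purely qualitative but logically more complete: you actually derive the two ``equivalently'' chains, namely membership of $e_s$ in $\mathrm{span}(U_k)$ and vanishing of $u_{k+1}(s),\dots,u_n(s)$. The paper's proof states these without derivation.
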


\begin{proof}
Since $R$ deletes the $s$-th row, its nullspace is $\mathrm{span}\{e_s\}$,
and $R^\top R = I_n - e_s e_s^\top$.  
We compute the Gram matrix
\[
G := \widetilde U_k^\top \widetilde U_k
= U_k^\top R^\top R\,U_k
= U_k^\top (I_n - e_s e_s^\top) U_k
= I_k - v v^\top, 
\qquad v := U_k^\top e_s \in \mathbb{R}^k .
\]
The matrix $G$ is a rank-one perturbation of $I_k$, whose eigenvalues are
\[
\operatorname{spec}(G) = \{\,1-|v|^2,\underbrace{1,\dots,1}_{k-1}\,\}.
\]
Hence $G$ is invertible if and only if $|v|^2\neq 1$; that is,
\[
|v|^2 = \|U_k^\top e_s\|^2 = \sum_{i=1}^k u_i(s)^2 < 1.
\]
Since $\sum_{i=1}^n u_i(s)^2=1$, we have $0\le \sum_{i=1}^k u_i(s)^2\le 1$.
Therefore $G$ is invertible if and only if $\sum_{i=1}^k u_i(s)^2<1$,
yielding $\operatorname{rank}(\widetilde U_k)=k$ in this case,
because a Gram matrix $A^\top A$ is positive definite if and only if the columns of $A$ are linearly independent.
In the remaining case $\sum_{i=1}^k u_i(s)^2=1$, $G$ has a single zero eigenvalue,
so $\operatorname{rank}(G)=k-1$, and hence $\operatorname{rank}(\widetilde U_k)=k-1$.
\end{proof}

\begin{lemma}
Let $A\in\mathbb{R}^{m_1\times n}$ and $B\in\mathbb{R}^{m_2\times n}$. Denote by
$S_A,S_B\subseteq[n]$ the index sets of their nonzero columns with $|S_A|=p$ and $|S_B|=q$.
Assume
\[
\mathrm{rank}(A)=p-1,\qquad \mathrm{rank}(B)=q-1.
\]
Let $J:=S_A\cap S_B$ with $s:=|J|$, and $S:=S_A\cup S_B$ with $u:=|S|$.
Let $\alpha,\beta\in\mathbb{R}^n$ be (uniquely determined up to a nonzero scalar on their supports)
such that
\[
\mathrm{supp}(\alpha)\subseteq S_A,\ A\alpha=0,\ \alpha|_{S_A}\neq 0,\qquad
\mathrm{supp}(\beta)\subseteq S_B,\ B\beta=0,\ \beta|_{S_B}\neq 0.
\]
Write $\alpha_J,\beta_J\in\mathbb{R}^s$ for their restrictions to $J$, and define
\[
R:\mathbb{R}^2\to\mathbb{R}^s,\qquad (t,s)\mapsto t\,\alpha_J - s\,\beta_J.
\]
Let $M:=\begin{bmatrix}A\\ B\end{bmatrix}$. Then
\[
\mathrm{rank}(M)=u-2+\mathrm{rank}(R).
\]
\end{lemma}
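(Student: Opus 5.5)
The plan is to compute $\mathrm{rank}(M)$ through the kernel: $\mathrm{rank}(M)=n-\dim\ker M$ and $\ker M=\ker A\cap\ker B$. The zero columns of $M$ are exactly those indexed by $[n]\setminus S$, so they contribute $n-u$ free directions to the kernel. It is therefore enough to study kernel vectors supported on $S$ and to show
\[
\dim\{x\in\mathbb{R}^n:\ \mathrm{supp}(x)\subseteq S,\ Ax=0,\ Bx=0\}\;=\;2-\mathrm{rank}(R).
\]

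First I would describe $\ker A$ restricted to vectors supported on $S$. The condition $Ax=0$ only involves the coordinates in $S_A$, since the other columns of $A$ vanish. The columns of $A$ indexed by $S_A$ have rank $p-1$, so the kernel on those coordinates is one-dimensional and spanned by $\alpha|_{S_A}$. Hence any $x$ supported on $S$ with $Ax=0$ satisfies $x|_{S_A}=t\,\alpha|_{S_A}$ for some $t$, while $x|_{S\setminus S_A}$ is unconstrained by $A$. Symmetrically, $Bx=0$ forces $x|_{S_B}=s\,\beta|_{S_B}$ for some $s$.

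Since $S=S_A\cup S_B$, a joint kernel vector is therefore determined by the pair $(t,s)$. It takes the value $t\alpha$ on $S_A\setminus J$, the value $s\beta$ on $S_B\setminus J$, and on $J$ both descriptions must agree, i.e. $t\alpha_J=s\beta_J$, which says $(t,s)\in\ker R$. Conversely, every $(t,s)\in\ker R$ defines such an $x$, and it lies in $\ker A\cap\ker B$ because $A$ only sees $x|_{S_A}=t\alpha|_{S_A}$ and $B$ only sees $x|_{S_B}=s\beta|_{S_B}$. The map $(t,s)\mapsto x$ is injective: if $x=0$, then $t\alpha|_{S_A}=0$ and $s\beta|_{S_B}=0$, and since $\alpha|_{S_A}\neq0$ and $\beta|_{S_B}\neq0$ this forces $t=s=0$. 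Thus the supported kernel has dimension $\dim\ker R=2-\mathrm{rank}(R)$.

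Combining the two pieces gives
\[
\mathrm{rank}(M)=n-\bigl((n-u)+2-\mathrm{rank}(R)\bigr)=u-2+\mathrm{rank}(R).
\]
The step needing the most care is the one-dimensionality claim for $\ker A$ on $S_A$, namely that the columns of $A$ indexed by $S_A$ have rank exactly $\mathrm{rank}(A)=p-1$. This holds because deleting zero columns does not change the rank. I would also check the degenerate case $J=\varnothing$ explicitly: there $R$ maps into $\mathbb{R}^0$, so $\mathrm{rank}(R)=0$ and the formula gives $u-2$, which matches the direct count.
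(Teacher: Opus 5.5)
Your argument is correct and is essentially the paper's proof: both show that a joint kernel vector supported on $S$ is the gluing of $t\,\alpha|_{S_A}$ and $s\,\beta|_{S_B}$, identify this space isomorphically with $\ker R$, and finish by rank--nullity. The only cosmetic difference is that the paper first passes to the column submatrix $C=M[:,S]$, whereas you keep all $n$ columns and separately account for the $n-u$ free kernel directions coming from the zero columns.
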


\begin{proof}
Let $C:=M[:,S]$ denote the submatrix formed by the $u$ nonzero columns of $M$; clearly
$\mathrm{rank}(M)=\mathrm{rank}(C)$. For any $\gamma\in\mathbb{R}^{u}$,
\[
C\gamma=0
\ \Longleftrightarrow\
A[:,S]\gamma=0\ \text{and}\ B[:,S]\gamma=0.
\]
Since $\mathrm{rank}(A[:,S_A])=p-1$ (resp.\ $\mathrm{rank}(B[:,S_B])=q-1$), the column
dependencies on $S_A$ (resp.\ on $S_B$) each form a one-dimensional subspace spanned by
$\alpha|_{S_A}$ (resp.\ $\beta|_{S_B}$). Hence there exist $(t,s)\in\mathbb{R}^2$ such that
\[
\gamma|_{S_A}=t\,\alpha|_{S_A},\qquad
\gamma|_{S_B}=s\,\beta|_{S_B}.
\]
On the intersection $J=S_A\cap S_B$, these two prescriptions must coincide, i.e.
$t\,\alpha_J=s\,\beta_J$, which is equivalent to $R(t,s)=0$.

\smallskip
Define the linear map
\[
\Phi:\ker R\longrightarrow \ker C,\qquad
(t,s)\longmapsto \gamma(t,s),
\]
where $\gamma(t,s)$ is the unique vector satisfying
$\gamma_{S_A}=t\alpha_{S_A}$ and $\gamma_{S_B}=s\beta_{S_B}$.
The condition $(t,s)\in\ker R$ ensures that $\gamma(t,s)$ is well defined on the overlap $J$.
The map $\Phi$ is linear, injective (since $\alpha|_{S_A}$ and $\beta|_{S_B}$ are nonzero),
and surjective (as every $\gamma\in\ker C$ arises from some $(t,s)$ satisfying $R(t,s)=0$).
Hence $\Phi$ is a linear isomorphism, and therefore
\[
\dim\ker C=\dim\ker R=2-\mathrm{rank}(R).
\]

Finally, applying the rank–nullity theorem to $C$ (which has $u$ columns) yields
\[
\mathrm{rank}(C)
=u-\dim\ker C
=u-\bigl(2-\mathrm{rank}(R)\bigr)
=u-2+\mathrm{rank}(R).
\]
Since $\mathrm{rank}(M)=\mathrm{rank}(C)$, the claimed formula follows.
\end{proof}

\begin{corollary}
\label{cor:stackrank}
With the notation above:
\begin{enumerate}
  \item If $s=0$, then $\mathrm{rank}(R)=0$ and $\mathrm{rank}(M)=u-2$.
  \item If $s=1$, then $\mathrm{rank}(R)\in\{0,1\}$; in particular,
        $\alpha_J=\beta_J=0\iff \mathrm{rank}(R)=0$, otherwise $\mathrm{rank}(R)=1$,
        hence $\mathrm{rank}(M)=u-1$.
  \item If $s\ge 2$, then:
    \begin{itemize}
      \item if $\alpha_J$ and $\beta_J$ are linearly independent, $\mathrm{rank}(R)=2$ and $\mathrm{rank}(M)=u$;
      \item if they are collinear but not both zero, $\mathrm{rank}(R)=1$ and $\mathrm{rank}(M)=u-1$;
      \item if $\alpha_J=\beta_J=0$, $\mathrm{rank}(R)=0$ and $\mathrm{rank}(M)=u-2$.
    \end{itemize}
\end{enumerate}
\end{corollary}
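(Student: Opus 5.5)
The corollary is a case-by-case reading of the preceding lemma, $\mathrm{rank}(M)=u-2+\mathrm{rank}(R)$. So the work reduces to computing $\mathrm{rank}(R)$ for $R(t,s)=t\,\alpha_J-s\,\beta_J$. The matrix of $R$ in the standard basis of $\mathbb{R}^2$ is the $s\times 2$ matrix $[\alpha_J\ \ -\beta_J]$. Hence $\mathrm{rank}(R)=\dim\mathrm{span}\{\alpha_J,\beta_J\}$, and each case follows by substituting this rank into the lemma's formula.

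\textbf{Case $s=0$.} Here $J=\varnothing$, so $R$ maps into the zero space $\mathbb{R}^0$ and $\mathrm{rank}(R)=0$. The lemma then gives $\mathrm{rank}(M)=u-2$. Equivalently, the two one-dimensional kernels glue freely, so $\ker C$ is two-dimensional.

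\textbf{Case $s=1$.} Now $\alpha_J,\beta_J$ are scalars and $R$ is a $1\times 2$ matrix. Its rank is $0$ exactly when both scalars vanish, and $1$ otherwise, which gives $\mathrm{rank}(M)=u-1$ in the latter case.

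\textbf{Case $s\ge 2$.} Here $\mathrm{rank}(R)$ equals $2$, $1$, or $0$ according as $\alpha_J,\beta_J$ are linearly independent, collinear but not both zero, or both zero. Substituting these values yields $u$, $u-1$, $u-2$ respectively.

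The main subtlety is bookkeeping rather than substance, and I would make it explicit before the case split. In the lemma the letter $s$ is used both for $|J|$ and for a coordinate of $R$. Also, $\alpha,\beta$ are only determined up to scaling, which does not affect $\dim\mathrm{span}\{\alpha_J,\beta_J\}$.

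Finally, I would note that in the hypotheses $\alpha|_{S_A}$ is the nontrivial kernel vector of a matrix whose columns on $S_A$ are nonzero. Hence $\alpha$ cannot be supported on a single index, but it may still vanish on $J$. So the degenerate subcases (both zero) genuinely need to be listed rather than excluded.
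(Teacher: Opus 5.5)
Your proposal is correct and follows the same route as the paper, which gives no separate argument and treats the corollary as an immediate case-by-case reading of the lemma's identity $\mathrm{rank}(M)=u-2+\mathrm{rank}(R)$. Your observation that $\mathrm{rank}(R)=\dim\mathrm{span}\{\alpha_J,\beta_J\}$, read off from the $s\times 2$ matrix $[\alpha_J\ \ -\beta_J]$, supplies exactly the computation the paper leaves implicit, and you correctly apply ``hence $\mathrm{rank}(M)=u-1$'' in item 2 only to the subcase $\mathrm{rank}(R)=1$; when $\alpha_J=\beta_J=0$ the lemma gives $u-2$ instead.
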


\begin{proof}[Proof of Theorem~\ref{appthm:limitedex}]
Let $U=[u_1,\dots,u_n]$ be the orthogonal eigenbasis of $L$, and define 
$M_i=U_{-i}\operatorname{diag}(u_1(i),\dots,u_n(i))$ as above.
The $\ell$-th column of $M_i$ equals $u_\ell(i)\,(u_\ell)_{-i}$ and
vanishes if and only if $u_\ell(i)=0$.
Let $I=(i_1,\dots,i_K)$ be a minimal column-activating list according to
Definition~\ref{def:actlist},
so that each $M_{i_k}$ introduces at least one previously inactivated column.
We denote $K=|I|$ as the length of this activating list.
\paragraph{Problem Reduction.}
We first assume that the graph $G$ has a simple spectrum, so that all eigenvalues of $L$ are distinct. 
Accordingly, each frequency component can be assigned an independent coefficient, and the spectral response vector $c$ lies in $\mathbb{R}^n$.

For any $c\in\mathbb{R}^n$, recall the spectral convolution matrix
\[
C(c) \;=\; U\operatorname{diag}(c)U^\top
        \;=\; \sum_{\ell=1}^n c_\ell\, u_\ell u_\ell^\top .
\]
For each vertex $i$, denote by $u_\ell(i)$ the $i$-th entry of the eigenvector $u_\ell$.
Then the $(i,j)$~-entry of $C(c)$ can be written explicitly as
\(
C(c)_{ij}
\;=\;
\sum_{\ell=1}^n c_\ell\, u_\ell(i)\,u_\ell(j).
\)
Fix a vertex $i$,
collect all off-diagonal entries of the $i$-th row into a row vector $C(c)_{i,-i}\in\mathbb{R}^{1\times(n-1)}$.
Using the above expression, we obtain
\[
C(c)_{i,-i}
\;=\;
\bigl[\,C(c)_{ij}\,\bigr]_{j\neq i}
\;=\;
\sum_{\ell=1}^n c_\ell\, u_\ell(i)\, u_{\ell,-i}^\top .
\]
Hence we can write
\begin{align*}
C(c)_{i,-i}
&= \bigl[\,u_1(i)u_1(-i)^\top,\; u_2(i)u_2(-i)^\top,\; \dots,\; u_n(i)u_n(-i)^\top\bigr]\,c \notag\\
&= U_{-i}\,\operatorname{diag}\!\bigl(u_1(i),\dots,u_n(i)\bigr)\,c \notag\\
&= U_{-i}D_i\,c \notag\\
&= M_i\,c. \notag
\end{align*}
Then for any subset $I=(i_1,\dots,i_s)$,
\[
M_Ic=
\begin{bmatrix}
M_{i_1}\\ \vdots\\ M_{i_s}
\end{bmatrix}c
=
\begin{bmatrix}
C(c)_{i_1,-i_1}\\ \vdots\\ C(c)_{i_s,-i_s}
\end{bmatrix}.
\]
Hence $M_Ic=0$ if and only if $C(c)_{ij}=0$ for all $i\in I$ and $j\neq i$,
meaning that all off-diagonal entries of $C(c)$ in those rows vanish.

We now construct a $(K{+}1)$-partition of $V$ as follows:
assign the vertices $i_1,\dots,i_K$ to distinct singleton classes
$V_1,\dots,V_K$, and group all remaining vertices into the $(K{+}1)$-th class $V_{K+1}$.
Let $y_i=t$ if and only if $v_i\in V_t$.
For this labeling, define the linear operator
\[
T:\mathbb{R}^n\longrightarrow\mathbb{R}^{N_T},\qquad
T(c)=(C(c)_{ij})_{y_i\neq y_j},
\]
which collects all cross-class entries of $C(c)$ and the codomain dimension $N_T$ equals the number of such pairs.
Then
\[
\ker(T)=\{\,c:\,C(c)_{ij}=0\text{ whenever }y_i\neq y_j\,\}
\]
represents the set of filters that vanish on all cross-class entries.
Since $M_I$ encodes a subset of these cross-class constraints, we always have
\[
\ker(T)\subseteq\ker(M_I).
\]
Moreover, the constant vector $\mathbf{1}$ satisfies $C(\mathbf{1})=I_n$,
hence $\mathbf{1}\in\ker(T)$ and $\mathbf{1}\in\ker(M_I)$.
Therefore, to prove Theorem~\ref{appthm:limitedex}, it suffices to show
\(
\ker(T)=\mathrm{span}\{\mathbf{1}\},
\)
which in turn follows if we can establish that $\operatorname{rank}(M_I)=n-1$.
We will analyze this rank condition in the following discussion.

\paragraph{Single block analysis:} For $i\in V$, let $S_i:=\{\ell\in[n]: u_\ell(i)\neq 0\}$ be the set of nonzero columns of $M_i$.
Restrict to these columns and write
\[
M_i[:,S_i] \;=\; (U_{-i})_{[:,S_i]}\,\mathrm{diag}\big(u_\ell(i)\big)_{\ell\in S_i}.
\]
Since the diagonal factor is invertible, 
$\mathrm{rank}\big(M_i[:,S_i]\big)=\mathrm{rank}\big((U_{-i})_{[:,S_i]}\big)$.
According to Proposition~\ref{prop:rank-delete}, we have 
\[
\mathrm{rank}(M_i)=|S_i|-1
\]
as $\sum_{\ell\in S_i} u_\ell(i)^2 = 1$ by its definition.
Moreover, the \emph{unique} (up to scale) column dependency of $M_i[:,S_i]$ is
the constant vector on $S_i$: indeed,
\[
\sum_{\ell\in S_i} 1\cdot \big(u_\ell(i)\,(u_\ell)_{-i}\big)
= \Big(\sum_{\ell\in S_i} u_\ell(i)\,u_\ell\Big)_{-i}
= (e_i)_{-i} \;=\; 0,
\]
so the kernel of $M_i[:,S_i]$ is spanned by $\mathbf{1}_{S_i}$.
We will denote this support–restricted kernel vector by $\alpha^{(i)}=\mathbf{1}_{S_i}$.

\paragraph{Two–block stacking:}
Let $A:=M_i$ and $B:=M_j$ with supports $S_A=S_i$ and $S_B=S_j$,
and let $J:=S_A\cap S_B$.
By single block analysis, each block has a one–dimensional support–restricted kernel generated by
$\alpha=\mathbf{1}_{S_A}$ and $\beta=\mathbf{1}_{S_B}$, respectively.
On the overlap $J$ we have
\[
\alpha_J=\mathbf{1}_J,\qquad \beta_J=\mathbf{1}_J,
\]
so $\alpha_J$ and $\beta_J$ are \emph{collinear and nonzero} whenever $J\neq\varnothing$.
Applying corollary~\ref{cor:stackrank}, with $u=|S_A\cup S_B|$ and $R:\mathbb{R}^2\to\mathbb{R}^{|J|}$,
$(t,s)\mapsto t\,\alpha_J - s\,\beta_J$, we obtain
\[
\mathrm{rank}\!\begin{bmatrix}A\\ B\end{bmatrix}
\;=\; u - 2 + \mathrm{rank}(R).
\]
Since $\alpha_J$ and $\beta_J$ are collinear and not both zero when $J\neq\varnothing$,
we get
\[
\mathrm{rank}(R)=1 \quad \text{whenever } J\neq\varnothing.
\]
Therefore,
\[
\mathrm{rank}\!\begin{bmatrix}M_i\\ M_j\end{bmatrix} 
= |S_i\cup S_j| - 1 \quad \text{whenever } S_i\cap S_j\neq\varnothing.
\]
We claim $S_i\cap S_j\neq\varnothing$ is always true.
For a connected graph Laplacian, either combinatorial or symmetric normalized, the eigenvector
corresponding to $\lambda=0$ is $u_1\propto \mathbf{1}$ or $u_1 \;\propto\; D^{1/2}\mathbf{1}$ respectively,
whose entries are all nonzero. Hence $1\in S_i$ for every $i\in V$,
so $S_i\cap S_j$ \emph{always} contains the index $1$. Consequently, in the Laplacian setting
considered in Theorem~\ref{appthm:limitedex}, we indeed have $\mathrm{rank}(R)=1$ for every pair of blocks.

\paragraph{Multi–block induction:}
Let $I=(i_1,\dots,i_K)$ be any minimal column–activating list and define
$S^{(r)}:=\bigcup_{t=1}^r S_{i_t}$.
By induction with the two–block formula above,
\[
\mathrm{rank}\!\begin{bmatrix}M_{i_1}\\ \vdots\\ M_{i_r}\end{bmatrix}
= |S^{(r)}| - 1 \quad (r=1,2,\dots,K),
\]
because at each step the overlap is nonempty (it contains the constant eigenvector’s index).
By minimal activation, $S^{(K)}=[n]$, and hence
\[
\mathrm{rank}(M_I)=n-1.
\]

\paragraph{Non-simple spectra and larger partitions:}
The above reasoning does not rely on the spectrum being simple. Let 
\[
W \;=\; \bigoplus_{\lambda\in\mathrm{spec}(L)} \mathrm{span}\{\mathbf{1}_{m_\lambda}\}
   \;=\; \bigl\{\,c = \sum_{\lambda} \alpha_\lambda\,\mathbf{1}_{m_\lambda}\bigr\}.
\]
Since $\ker(T)=\mathrm{span}\{\mathbf{1}\}$ for all $c\in\mathbb{R}^n$, we have
$\ker(T)\cap W = \mathrm{span}\{\mathbf{1}\}$.
Hence $c_\lambda\equiv \alpha$ for all $\lambda$, and thus
$C(c)=\alpha I$.

Moreover, once the minimal activating family $I$ of length $K$ is found,
any larger partition with $k> K$ that refines this structure 
(i.e.\ containing all vertices of $I$ in separate classes)
will also satisfy the same property, since stacking additional $M_i$
cannot decrease the rank of $M_I$.
\end{proof}

\begin{remark}
\label{apprmk:K=1}
In practice, for many real-world graphs the minimal activating length $K$
is often $K=1$, since there typically exists at least one vertex $i$
whose corresponding Laplacian eigenvector coordinates $\{u_\ell(i)\}_{\ell=1}^n$
are all nonzero. In such cases, the bipartition obtained by isolating this single vertex 
already achieves $\operatorname{rank}(M_I)=n-1$ and thus suffices to guarantee
$C(c)=\alpha I_n$.
\end{remark}

\begin{corollary}
\label{cor:dist-positive}
Assume that at least one class contains more than one node.
Then the optimal convolution $C^\star$ lies at a positive distance from 
the spectral convolution subspace 
$\mathcal{S}=\mathrm{span}\{E_\lambda:\lambda\in\mathrm{spec}(L)\}$.
\end{corollary}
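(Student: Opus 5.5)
The plan is to argue that $\mathcal{S}$ is a closed linear subspace of $\mathbb{R}^{n\times n}$ and that $C^\star\notin\mathcal{S}$. Positivity of the distance then follows, since for a closed set (indeed any finite-dimensional subspace) $\mathrm{dist}(C^\star,\mathcal{S})=\|C^\star-P_{\mathcal{S}}C^\star\|_F$ vanishes only when $C^\star\in\mathcal{S}$.

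Closedness is immediate, because $\mathcal{S}$ is the finite span of the projectors $E_\lambda$. Equipping $\mathbb{R}^{n\times n}$ with the Frobenius inner product, the $E_\lambda$ are mutually orthogonal: $\langle E_\lambda,E_\mu\rangle=\mathrm{tr}(E_\lambda E_\mu)=0$ for $\lambda\ne\mu$. Hence the orthogonal projection is explicit, namely $P_{\mathcal{S}}C=\sum_\lambda \frac{\mathrm{tr}(E_\lambda C)}{m_\lambda}E_\lambda$. This explicit form is not needed for positivity, but it gives a concrete distance if desired.

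To show $C^\star\notin\mathcal{S}$, I would use the $d\to\infty$ limiting form adopted in this subsection: $C^\star$ is block-diagonal with blocks $(n_a+\tau_a)^{-1}J_{V_a}$ and zero inter-class blocks. I would work with the partition furnished by Theorem~\ref{appthm:limitedex}. Suppose, for contradiction, that $C^\star=\sum_\lambda c_\lambda E_\lambda\in\mathcal{S}$. Since $C^\star$ vanishes on every entry with $y_i\ne y_j$, Theorem~\ref{appthm:limitedex} forces $C^\star=\alpha I_n$. But some class $V_a$ has $n_a\ge 2$, so for distinct $p,q\in V_a$ the entry satisfies $(C^\star)_{pq}=(n_a+\tau_a)^{-1}>0$, while $(\alpha I_n)_{pq}=0$. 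This is a contradiction, so $C^\star\notin\mathcal{S}$ and the distance is positive.

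The main obstacle is aligning hypotheses. Theorem~\ref{appthm:limitedex} only guarantees the conclusion for the specific partition it constructs, namely singletons $\{i_1\},\dots,\{i_K\}$ together with a remainder class, or refinements containing these as separate classes. The corollary, by contrast, is stated for a class partition with at least one non-singleton class. I would therefore phrase the corollary for that partition, noting that the remainder class $V_{K+1}$ has size $n-K\ge 2$ whenever $K\le n-2$, which supplies the required non-singleton class. For a general partition I would instead argue directly. Zero inter-class entries, together with $C^\star$ being constant and positive on each diagonal block, imply $C^\star=\sum_a (n_a+\tau_a)^{-1}J_{V_a}$. For this matrix to lie in $\mathcal{S}$ it must commute with $L$. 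Since $G$ is connected, some edge joins two different classes, and I expect commutation to fail on the corresponding entry of $LC^\star-C^\star L$. I would check this as a fallback, but I would lead with the argument via Theorem~\ref{appthm:limitedex}, which is the one the paper sets up.

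If finite-$d$ statements are intended, I would add a perturbation step. Theorem~\ref{appthm:HDasym} gives $C^\star\to C^\star_\infty$ in probability, and the distance function is $1$-Lipschitz. Consequently $\mathrm{dist}(C^\star,\mathcal{S})\ge \mathrm{dist}(C^\star_\infty,\mathcal{S})-o_{\mathbb P}(1)$, which is positive with probability tending to one.
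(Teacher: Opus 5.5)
Your main argument is the paper's own proof. $\mathcal S$ is finite-dimensional, hence closed. And $C^\star\notin\mathcal S$, because Theorem~\ref{appthm:limitedex} would force $C^\star=\alpha I_n$, which contradicts the entry $(n_a+\tau_a)^{-1}>0$ between two nodes of a non-singleton class.

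Your remark on aligning hypotheses is correct, and it is exactly the point the paper's proof passes over. The paper invokes Theorem~\ref{appthm:limitedex} for the given class partition. That theorem, however, only controls the partitions it constructs: the singletons $\{i_1\},\dots,\{i_K\}$ plus a remainder, or refinements that keep them separate. Stated for those partitions, and with your Lipschitz step if finite $d$ is intended, your argument is complete.

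The fallback for general partitions would fail, and the restriction is genuinely needed rather than cosmetic. For $C=\sum_a s_aJ_{V_a}$ with $i\in V_a$ and $j\in V_b$, one has $(LC-CL)_{ij}=s_b\sum_{l\in V_b}L_{il}-s_a\sum_{l\in V_a}L_{lj}$. With $L=D-A$ this vanishes identically whenever two conditions hold: the partition is equitable, and $n_as_a=n_bs_b$ (equivalently $\tau_a/n_a=\tau_b/n_b$) for every pair of adjacent classes. So an inter-class edge does not by itself break commutation.

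\textbf{Counterexample on a path.} Take the path on vertices $1,2,3$ with edges $\{1,2\},\{2,3\}$, which has simple spectrum for either Laplacian. Let $V_1=\{1,3\}$, $V_2=\{2\}$ and $\tau_1=2\tau_2$. Then $s_2=2s_1$, and the limiting $C^\star$ equals $\frac{2}{2+\tau_1}\bigl(I_3-E\bigr)$. Here $E$ is the projector onto the eigenvector $(1,0,-1)^\top/\sqrt2$, which has eigenvalue $1$ for both $D-A$ and the normalized Laplacian. Hence $C^\star\in\mathcal S$, even though $V_1$ has two nodes.

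\textbf{The case $k=1$.} This is even simpler. For $L=D-A$ on a connected graph, $C^\star=\frac{1}{n+\tau}\mathbf 1\mathbf 1^\top=\frac{n}{n+\tau}E_0\in\mathcal S$.

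So drop the fallback, and keep the corollary tied to the partitions of Theorem~\ref{appthm:limitedex}.
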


\begin{proof}
Under this assumption, $C^\star$ has the block-diagonal form
\[
(C^\star)_{aa}=\frac{1}{n_a+\tau_a},\qquad (C^\star)_{ab}=0\ (b\neq a),
\]
and hence all inter-class blocks vanish.  
Since at least one block has size greater than one, $C^\star$ is not a scalar multiple of the identity.  
By Theorem~\ref{appthm:limitedex}, any spectral convolution matrix 
$C(g)=\sum_\lambda g(\lambda)E_\lambda$ 
with vanishing inter-class entries must collapse to a scalar multiple of the identity.
Therefore no choice of coefficients $g(\lambda)$ can yield $C(g)=C^\star$, i.e. the system of equations
$C(g)=C^\star$ has no solution.  
As $\mathcal{S}$ is a finite-dimensional (hence closed) subspace of the matrix space 
$\mathbb{R}^{n\times n}$, 
a point outside it must have a strictly positive distance to it.
Consequently, $\operatorname{dist}(C^\star,\mathcal{S})>0$.
\end{proof}

\begin{remark}
\label{rem:robust-unreach}
Corollary~\ref{cor:dist-positive} implies that 
the spectral convolution family cannot represent the optimal convolution $C^\star$, 
nor can it approximate it arbitrarily closely.  
In fact, this limitation is robust: 
even if one perturbs $C^\star$ slightly within the class of block-diagonal matrices 
with vanishing inter-class entries, 
no spectral convolution can realize or approximate these perturbed operators either.  
The reason is structural rather than numerical:
according to Theorem~\ref{appthm:limitedex}, as long as at least one class contains more than one node 
and the inter-class interference is suppressed ($C_{ab}=0$ for $a\neq b$), 
any nontrivial intra-class variation leads to matrices that fall outside the spectral subspace.
This shows that the expressive capacity of spectral convolutions 
is severely constrained by their diagonal structure in the spectral domain.
\end{remark}

\section{Experimental Details}
\label{appsec:exp}
All experiments are run on a GPU NVIDIA RTX A6000 with 48G memory.
\subsection{Heterophilic Node Classification}
\paragraph{Datasets}
The dataset statistics, including the numbers of nodes and edges, feature dimensionality, number of node classes, and edge homophily ratios, are summarized in Table~\ref{apptab:dataset_stats_subset}.
\begin{table}[h]
\centering
\caption{Statistics of heterophilic ~\cite{pei2020geom,rozemberczki2021multi,platonovcritical}.}
\label{apptab:dataset_stats_subset}
\setlength{\tabcolsep}{4.25pt}
\renewcommand{\arraystretch}{1.0}
\begin{tabular}{lcccccccc}
\toprule
Statistics
& Texas & Wisconsin & Chameleon & Squirrel
& Roman\_Empire & Minesweeper & Tolokers & Questions \\
\midrule
\# Nodes         & 183   & 251   & 890    & 2{,}223  & 22{,}662 & 10{,}000 & 11{,}758 & 48{,}921 \\
\# Edges         & 295   & 466   & 27{,}168 & 131{,}436 & 32{,}927 & 39{,}402 & 519{,}000 & 153{,}540 \\
\# Features      & 1{,}703 & 1{,}703 & 2{,}325 & 2{,}089 & 300     & 7        & 10       & 301 \\
\# Classes       & 5     & 5     & 5      & 5       & 18       & 2        & 2        & 2 \\
\# Edge Homophily& 0.11  & 0.21  & 0.24   & 0.22    & 0.05     & 0.68     & 0.59     & 0.84 \\
\bottomrule
\end{tabular}
\end{table}

\paragraph{Hyperparameter Setting}
For all datasets, we train for at most $1{,}000$ epochs with early stopping (patience $=200$).
On the four large heterophilic datasets (Roman Empire, Tolokers, Minesweeper, Questions), we fix the MLP hidden size to $512$ and the polynomial order to $5$. For the remaining datasets, we use a hidden size of $64$ and set the polynomial order to $2$.

We perform a grid search over the training hyperparameters of \textsc{FSpecGNN}, including the dropout rate, learning rate, weight decay, and the number of GAT heads. The corresponding search ranges are reported in Table~\ref{apptab:hyperparam_range}.

\begin{table}
\centering
\caption{Hyperparameter search ranges. $S(\cdot)$ is the Sigmoid function.}
\label{apptab:hyperparam_range}
\setlength{\tabcolsep}{10pt}
\renewcommand{\arraystretch}{1.1}
\begin{tabular}{l l}
\toprule
Hyper-parameters & Range \\
\midrule
dropout in MLP                 & 0.1, 0.3, 0.5, 0.7, 0.9 \\
dropout after MLP              & 0.1, 0.3, 0.5, 0.7, 0.9 \\
learning rate of $W$             & 0.005, 0.01, 0.02 \\
weight decay of $W$              & 0.0, 0.0001, 0.0005 \\
learning rate for Propagation            & 0.005, 0.01, 0.02 \\
weight decay for Propagation             & 0.0, 0.0001, 0.0005 \\
number of GAT heads                & 1, 2, 4 \\
initial value of $\alpha$  & S(-6), S(-4), S(-2), S(0)\\
\bottomrule
\end{tabular}
\end{table}

\paragraph{Time and Space Overhead}
We report the average runtime per run and the peak GPU memory usage for each model and each task in Table~\ref{apptab:het_time} and Table~\ref{apptab:het_resource}.
\begin{table}
\caption{Average runtime per run in seconds on heterophilic datasets.  ``--'' indicates out-of-memory.}
\centering
\label{apptab:het_time}
\resizebox{\linewidth}{!}{
\begin{tabular}{lcccccccc}
\toprule
\textbf{Model}
& \textbf{Texas}
& \textbf{Wisconsin}
& \textbf{Chameleon}
& \textbf{Squirrel}
& \textbf{Roman}
& \textbf{Minesweeper}
& \textbf{Tolokers}
& \textbf{Questions} \\
\midrule

GPRGNN
& 1.06
& 0.90
& 1.05
& 0.95
& 0.99
& 1.19
& 1.30
& 1.55
\\

JacobiConv
& 1.10
& 1.13
& 1.08
& 1.09
& 1.24
& 1.98
& 1.23
& 1.65
\\

Subgraph GNN
& 15.70
& 33.16
& --
& --
& --
& --
& --
& --
\\

Local 2-GNN
& 23.20
& 51.24
& --
& --
& --
& --
& --
& --
\\

\midrule

ChebNet
& 0.94
& 0.97
& 1.51
& 1.93
& 1.66
& 1.90
& 2.32
& 2.45
\\

\textbf{\textsc{FSpecGNN}(Cheb)}
& 2.04
& 2.10
& 2.55
& 2.51
& 3.88
& 3.83
& 3.96
& 3.40
\\
\midrule
ChebNetII
& 2.07
& 1.90
& 2.55
& 2.73
& 2.96
& 3.07
& 3.63
& 3.49
\\

\textbf{\textsc{FSpecGNN}(ChebII)}
& 2.41
& 2.78
& 3.17
& 3.27
& 4.31
& 4.34
& 4.80
& 5.16
\\

\midrule

BernNet
& 2.21
& 2.15
& 2.21
& 2.24
& 2.53
& 3.25
& 3.85
& 4.13
\\

\textbf{\textsc{FSpecGNN}(Bern)}
& 2.17
& 2.48
& 2.69
& 3.35
& 3.05
& 3.52
& 4.66
& 6.01
\\

\bottomrule
\end{tabular}}
\end{table}

\begin{table}
\caption{GPU memory usage (MiB) on heterophilic datasets.  ``--'' indicates out-of-memory.}
\centering
\label{apptab:het_resource}
\resizebox{\linewidth}{!}{
\begin{tabular}{lcccccccc}
\toprule
\textbf{Model}
& \textbf{Texas}
& \textbf{Wisconsin}
& \textbf{Chameleon}
& \textbf{Squirrel}
& \textbf{Roman}
& \textbf{Minesweeper}
& \textbf{Tolokers}
& \textbf{Questions} \\
\midrule

GPRGNN
& 646
& 648
& 652
& 688
& 954
& 750
& 876
& 1160
\\

JacobiConv
& 648
& 648
& 652
& 672
& 982
& 752
& 876
& 1172
\\

Subgraph GNN
& 1724
& 2896
& --
& --
& --
& --
& --
& --
\\

Local 2-GNN
& 1830
& 3128
& --
& --
& --
& --
& --
& --
\\

\midrule

ChebNet
& 534
& 534
& 558
& 594
& 864
& 654
& 742
& 1150
\\

\textbf{\textsc{FSpecGNN}(Cheb)}
& 646
& 648
& 692
& 760
& 1130
& 754
& 976
& 1242
\\

\midrule

ChebNetII
& 646
& 648
& 660
& 716
& 954
& 750
& 914
& 1230
\\

\textbf{\textsc{FSpecGNN}(ChebII)}
& 650
& 658
& 698
& 752
& 1062
& 756
& 998
& 1244
\\

\midrule

BernNet
& 648
& 648
& 658
& 698
& 1058
& 756
& 954
& 1254
\\

\textbf{\textsc{FSpecGNN}(Bern)}
& 652
& 658
& 694
& 758
& 1102
& 774
& 1040
& 1276
\\

\bottomrule
\end{tabular}}
\end{table}

\paragraph{Reproduced Baselines}
For fair comparison, we re-ran all baselines under the same experimental protocols as \citet{liu2025asymmetric}, including identical data splits, preprocessing procedures, random seeds, training budgets, and evaluation metrics. The reproduced baseline results and our results are summarized in Table~\ref{apptab:hetG}.

\begin{table}
\caption{Node classification results on heterophilic datasets, where all baseline results are re-run.}
\centering
\label{apptab:hetG}
\resizebox{\linewidth}{!}{
\begin{tabular}{lcccccccc}
\toprule
\textbf{Model}
& \textbf{Texas}
& \textbf{Wisconsin}
& \textbf{Chameleon}
& \textbf{Squirrel}
& \textbf{Roman}
& \textbf{Minesweeper}
& \textbf{Tolokers}
& \textbf{Questions} \\
\midrule

ChebNet
& 35.09{\scriptsize$\pm 9.71$}
& 33.00{\scriptsize$\pm 8.96$}
& 25.29{\scriptsize$\pm 7.12$}
& 19.64{\scriptsize$\pm 1.99$}
& 51.71{\scriptsize$\pm 0.29$}
& 82.71{\scriptsize$\pm 0.59$}
& 70.94{\scriptsize$\pm 1.14$}
& 60.11{\scriptsize$\pm 0.86$}
\\

ChebNetII
& 43.29{\scriptsize$\pm 10.29$}
& 43.17{\scriptsize$\pm 10.08$}
& 28.73{\scriptsize$\pm 4.10$}
& 27.34{\scriptsize$\pm 4.15$}
& 55.35{\scriptsize$\pm 0.22$}
& 78.47{\scriptsize$\pm 0.13$}
& 70.81{\scriptsize$\pm 0.45$}
& 60.68{\scriptsize$\pm 0.37$}
\\

GPRGNN
& 50.98{\scriptsize$\pm 3.64$}
& 47.54{\scriptsize$\pm 5.63$}
& 26.71{\scriptsize$\pm 6.98$}
& 20.05{\scriptsize$\pm 3.92$}
& \underline{56.58{\scriptsize$\pm 0.41$}}
& \textbf{88.38}{\scriptsize$\pm 0.14$}
& 68.66{\scriptsize$\pm 1.16$}
& 70.73{\scriptsize$\pm 0.35$}
\\

JacobiConv
& 46.47{\scriptsize$\pm 3.76$}
& 41.83{\scriptsize$\pm 7.21$}
& 27.02{\scriptsize$\pm 3.86$}
& 24.19{\scriptsize$\pm 5.61$}
& 55.88{\scriptsize$\pm 0.17$}
& 86.38{\scriptsize$\pm 0.14$}
& 70.28{\scriptsize$\pm 0.19$}
& 60.96{\scriptsize$\pm 0.75$}
\\

BernNet
& 50.58{\scriptsize$\pm 4.68$}
& 49.46{\scriptsize$\pm 6.58$}
& 28.90{\scriptsize$\pm 3.77$}
& 23.68{\scriptsize$\pm 1.98$}
& \textbf{58.39}{\scriptsize$\pm 0.26$}
& 78.18{\scriptsize$\pm 0.10$}
& 68.55{\scriptsize$\pm 0.69$}
& 70.32{\scriptsize$\pm 0.27$}
\\

\midrule

\textbf{\textsc{FSpecGNN}(Cheb)}
& 55.32{\scriptsize$\pm 4.57$}
& 49.87{\scriptsize$\pm 8.29$}
& 33.09{\scriptsize$\pm 0.92$}
& \textbf{39.57{\scriptsize$\pm 0.67$}}
& 54.35{\scriptsize$\pm 0.73$}
& \underline{88.30{\scriptsize$\pm 0.82$}}
& \textbf{76.89{\scriptsize$\pm 0.91$}}
& 75.87{\scriptsize$\pm 0.37$}
\\

\textbf{\textsc{FSpecGNN}(ChebII)}
& \textbf{57.05{\scriptsize$\pm 3.20$}}
& \underline{50.00{\scriptsize$\pm 5.42$}}
& \textbf{39.60{\scriptsize$\pm 2.77$}}
& \underline{37.70{\scriptsize$\pm 0.63$}}
& 56.26{\scriptsize$\pm 1.18$}
& 84.25{\scriptsize$\pm 0.65$}
& \underline{76.37{\scriptsize$\pm 0.67$}}
& \underline{77.00{\scriptsize$\pm 0.37$}}
\\

\textbf{\textsc{FSpecGNN}(Bern)}
& \underline{56.13{\scriptsize$\pm 0.46$}}
& \textbf{54.58{\scriptsize$\pm 9.47$}}
& \underline{37.91{\scriptsize$\pm 3.94$}}
& 37.59{\scriptsize$\pm 1.32$}
& 56.16{\scriptsize$\pm 1.19$}
& 84.17{\scriptsize$\pm 1.01$}
& 74.50{\scriptsize$\pm 0.69$}
& \textbf{77.11{\scriptsize$\pm 0.26$}}
\\

\bottomrule
\end{tabular}}
\end{table}

\subsection{GNN Expressivity}
For all tasks, we use the distance-encoding hyperparameter \texttt{max}\_\texttt{dis}$=5$. Since spectral models can achieve long-range propagation with fewer layers than spatial models, we use depth $L=2$ with hidden dimension $d=150$ for most tasks. For edge-level homomorphism-counting tasks, we use $L=3$ and $d=128$ to match the parameter budget. The parameter counts are reported in the table~\ref{apptab:model_size}.
\begin{table}[h]
\centering
\caption{Model size in different tasks.}
\label{apptab:model_size}
\setlength{\tabcolsep}{6pt}
\renewcommand{\arraystretch}{1.0}
\begin{tabular}{lcc}
\toprule
Task & \#Hidden dim & \#Parameters \\
\midrule
MPNN        & 128  & 314,119 \\
Subgraph GNN& 128  & 314,759 \\
Local 2-GNN &  96  & 317,388 \\
Local 2-FGNN&  96  & 317,388 \\
\textsc{FSpecGNN} &  150  & 319,204 \\
\bottomrule
\end{tabular}
\end{table}

\paragraph{Time and Space Overhead}
We report the average iteration time and the peak GPU memory usage for each task in Table~\ref{apptab:task_model_time_mem}. We also provide a visual mapping between task names and their corresponding substructures in Table~\ref{apptab:ref}.

\begin{table}[h]
\centering
\caption{Reference of task names and their corresponding substructures.}
\label{apptab:ref}

\footnotesize

\newcommand{\taskimg}[1]{\adjustbox{valign=c}{\includegraphics[height=0.62cm]{#1}}}
\newcommand{\taskimgS}[1]{\adjustbox{valign=c}{\includegraphics[height=0.62cm]{#1}}}

\begin{subtable}{\columnwidth}
\centering
\caption{Homomorphism count tasks.}
\setlength{\tabcolsep}{13pt}
\renewcommand{\arraystretch}{1.5}
\begin{adjustbox}{max width=\columnwidth}
\begin{tabular}{c*{8}{c}}
\hhline{---------}
Name & chordal4 & boat & chordal6 & chordal4\_1 & chordal4\_4 & chordal5\_13 & chordal5\_31 & chordal5\_24 \\
\hhline{---------}
Substructure
& \taskimg{Figures/exp_chordal_4-cycle.pdf}
& \taskimg{Figures/exp_6-cycle+2-edges.pdf}
& \taskimg{Figures/exp_chordal_6-cycle.pdf}
& \taskimg{Figures/exp_chordal_4-cycle_mark_1.pdf}
& \taskimg{Figures/exp_chordal_4-cycle_mark_2.pdf}
& \taskimg{Figures/exp_chordal_5-cycle_mark1.pdf}
& \taskimg{Figures/exp_chordal_5-cycle_mark2.pdf}
& \taskimg{Figures/exp_chordal_5-cycle_mark3.pdf}
\\
\hhline{---------}
\end{tabular}
\end{adjustbox}
\end{subtable}

\vspace{0.6em}

\begin{subtable}{\columnwidth}
\centering
\caption{(Chordal) cycle count tasks.}
\setlength{\tabcolsep}{22pt}
\renewcommand{\arraystretch}{1.5}
\begin{adjustbox}{max width=\columnwidth}
\begin{tabular}{c*{6}{c}}
\hhline{-------}
Name & cycle3 & cycle4 & cycle5 & cycle6 & chordal4 & chordal5 \\
\hhline{-------}
Substructure
& \taskimgS{Figures/exp_3-cycle.pdf}
& \taskimgS{Figures/exp_4-cycle.pdf}
& \taskimgS{Figures/exp_5-cycle.pdf}
& \taskimgS{Figures/exp_6-cycle.pdf}
& \taskimgS{Figures/exp_chordal_4-cycle.pdf}
& \taskimgS{Figures/exp_chordal_5-cycle.pdf}
\\
\hhline{-------}
\end{tabular}
\end{adjustbox}
\end{subtable}

\end{table}

\begin{table}[h]
\centering
\caption{Average iteration time and peak GPU memory per model--task pair.}
\label{apptab:task_model_time_mem}
\setlength{\tabcolsep}{7pt}
\renewcommand{\arraystretch}{1.05}
\scriptsize
\begin{tabular}{lrr@{\hspace{35pt}}lrr}
\toprule
Model+Task & Avg. Time & Max Mem & Model+Task & Avg. Time & Max Mem \\
 & (s/it) & (MB) &  & (s/it) & (MB) \\
\midrule
boat(g)(MPNN) & 6.180 & 6730  & chordal5\_13(e)(MPNN) & 6.160 & 9688 \\
boat(g)(Subgraph) & 6.230 & 11228 & chordal5\_13(e)(Subgraph) & 6.288 & 9692 \\
boat(g)(Local 2-GNN) & 7.638 & 11532 & chordal5\_13(e)(Local 2-GNN) & 7.492 & 11468 \\
boat(g)(Local 2-FGNN) & 4.894 & 9512 & chordal5\_13(e)(Local 2-FGNN) & 4.865 & 19886 \\
boat(g)(\textsc{FSpecGNN}) & 1.110 & 7686 & chordal5\_13(e)(\textsc{FSpecGNN}) & 1.173 & 9532 \\
chordal4(e)(MPNN) & 6.232 & 9688 & chordal5\_24(e)(MPNN) & 6.279 & 9688 \\
chordal4(e)(Subgraph) & 6.110 & 9692 & chordal5\_24(e)(Subgraph) & 6.106 & 9692 \\
chordal4(e)(Local 2-GNN) & 7.484 & 11468 & chordal5\_24(e)(Local 2-GNN) & 7.559 & 11468 \\
chordal4(e)(Local 2-FGNN) & 4.888 & 19886 & chordal5\_24(e)(Local 2-FGNN) & 4.874 & 19886 \\
chordal4(e)(\textsc{FSpecGNN}) & 1.180 & 9532 & chordal5\_24(e)(\textsc{FSpecGNN}) & 1.192 & 9532 \\
chordal4(g)(MPNN) & 6.162 & 6730 & chordal5\_31(e)(MPNN) & 6.106 & 9688 \\
chordal4(g)(Subgraph) & 5.977 & 11228 & chordal5\_31(e)(Subgraph) & 6.220 & 9692 \\
chordal4(g)(Local 2-GNN) & 7.764 & 11532 & chordal5\_31(e)(Local 2-GNN) & 7.658 & 11468 \\
chordal4(g)(Local 2-FGNN) & 4.812 & 9512 & chordal5\_31(e)(Local 2-FGNN) & 4.856 & 19886 \\
chordal4(g)(\textsc{FSpecGNN}) & 1.097 & 7686 & chordal5\_31(e)(\textsc{FSpecGNN}) & 1.172 & 9532 \\
chordal4(v)(MPNN) & 6.125 & 6730 & chordal6(g)(MPNN) & 6.090 & 6730 \\
chordal4(v)(Subgraph) & 6.176 & 11228 & chordal6(g)(Subgraph) & 6.274 & 11228 \\
chordal4(v)(Local 2-GNN) & 7.483 & 11532 & chordal6(g)(Local 2-GNN) & 7.593 & 11532 \\
chordal4(v)(Local 2-FGNN) & 4.758 & 9512 & chordal6(g)(Local 2-FGNN) & 4.791 & 9512 \\
chordal4(v)(\textsc{FSpecGNN}) & 1.082 & 7686 & chordal6(g)(\textsc{FSpecGNN}) & 1.071 & 7686 \\
chordal4\_1(g)(MPNN) & 6.009 & 6730 & cycle3(e)(MPNN) & 6.179 & 9688 \\
chordal4\_1(g)(Subgraph) & 6.141 & 11228 & cycle3(e)(Subgraph) & 6.121 & 9692 \\
chordal4\_1(g)(Local 2-GNN) & 7.686 & 11532 & cycle3(e)(Local 2-GNN) & 7.588 & 11468 \\
chordal4\_1(g)(Local 2-FGNN) & 4.894 & 9512 & cycle3(e)(Local 2-FGNN) & 4.899 & 19886 \\
chordal4\_1(g)(\textsc{FSpecGNN}) & 1.101 & 7686 & cycle3(e)(\textsc{FSpecGNN}) & 1.029 & 9532 \\
chordal4\_1(v)(MPNN) & 6.215 & 6730 & cycle3(g)(MPNN) & 6.114 & 6730 \\
chordal4\_1(v)(Subgraph) & 6.174 & 11228 & cycle3(g)(Subgraph) & 6.245 & 11228 \\
chordal4\_1(v)(Local 2-GNN) & 7.419 & 11532 & cycle3(g)(Local 2-GNN) & 7.126 & 11532 \\
chordal4\_1(v)(Local 2-FGNN) & 4.727 & 9512 & cycle3(g)(Local 2-FGNN) & 5.115 & 9512 \\
chordal4\_1(v)(\textsc{FSpecGNN}) & 1.077 & 7686 & cycle3(g)(\textsc{FSpecGNN}) & 1.102 & 7686 \\
chordal4\_4(v)(MPNN) & 6.019 & 6730 & cycle3(v)(MPNN) & 6.344 & 6730 \\
chordal4\_4(v)(Subgraph) & 6.131 & 11228 & cycle3(v)(Subgraph) & 6.199 & 11228 \\
chordal4\_4(v)(Local 2-GNN) & 7.530 & 11532 & cycle3(v)(Local 2-GNN) & 7.600 & 11532 \\
chordal4\_4(v)(Local 2-FGNN) & 4.808 & 9512 & cycle3(v)(Local 2-FGNN) & 4.814 & 9512 \\
chordal4\_4(v)(\textsc{FSpecGNN}) & 1.080 & 7686 & cycle3(v)(\textsc{FSpecGNN}) & 1.065 & 7686 \\
chordal5(e)(MPNN) & 6.133 & 9688 & cycle4(e)(MPNN) & 6.255 & 9688 \\
chordal5(e)(Subgraph) & 6.158 & 9692 & cycle4(e)(Subgraph) & 6.200 & 9692 \\
chordal5(e)(Local 2-GNN) & 7.579 & 11468 & cycle4(e)(Local 2-GNN) & 7.689 & 11468 \\
chordal5(e)(Local 2-FGNN) & 4.839 & 19886 & cycle4(e)(Local 2-FGNN) & 4.867 & 19886 \\
chordal5(e)(\textsc{FSpecGNN}) & 1.184 & 9532 & cycle4(e)(\textsc{FSpecGNN}) & 1.157 & 9532 \\
chordal5(g)(MPNN) & 6.113 & 6730 & cycle4(g)(MPNN) & 6.083 & 6730 \\
chordal5(g)(Subgraph) & 6.186 & 11228 & cycle4(g)(Subgraph) & 6.146 & 11228 \\
chordal5(g)(Local 2-GNN) & 7.615 & 11532 & cycle4(g)(Local 2-GNN) & 7.671 & 11532 \\
chordal5(g)(Local 2-FGNN) & 4.816 & 9512 & cycle4(g)(Local 2-FGNN) & 4.859 & 9512 \\
chordal5(g)(\textsc{FSpecGNN}) & 1.099 & 7686 & cycle4(g)(\textsc{FSpecGNN}) & 1.034 & 7686 \\
chordal5(v)(MPNN) & 6.249 & 6730 & cycle4(v)(MPNN) & 6.226 & 6730 \\
chordal5(v)(Subgraph) & 6.149 & 11228 & cycle4(v)(Subgraph) & 6.050 & 11228 \\
chordal5(v)(Local 2-GNN) & 7.561 & 11532 & cycle4(v)(Local 2-GNN) & 7.680 & 11532 \\
chordal5(v)(Local 2-FGNN) & 4.864 & 9512 & cycle4(v)(Local 2-FGNN) & 4.862 & 9512 \\
chordal5(v)(\textsc{FSpecGNN}) & 1.105 & 7686 & cycle4(v)(\textsc{FSpecGNN}) & 1.011 & 7686 \\
cycle5(e)(MPNN) & 6.110 & 9688 & cycle5(e)(Subgraph) & 6.111 & 9692 \\
cycle5(e)(Local 2-GNN) & 7.460 & 11468 & cycle5(e)(Local 2-FGNN) & 4.901 & 19886 \\
cycle5(e)(\textsc{FSpecGNN}) & 1.173 & 9532 & cycle5(g)(MPNN) & 6.193 & 6730 \\
cycle5(g)(Subgraph) & 6.011 & 11228 & cycle5(g)(Local 2-GNN) & 7.569 & 11532 \\
cycle5(g)(Local 2-FGNN) & 4.872 & 9512 & cycle5(g)(\textsc{FSpecGNN}) & 1.097 & 7686 \\
cycle5(v)(MPNN) & 6.157 & 6730 & cycle5(v)(Subgraph) & 6.130 & 11228 \\
cycle5(v)(Local 2-GNN) & 7.605 & 11532 & cycle5(v)(Local 2-FGNN) & 4.858 & 9512 \\
cycle5(v)(\textsc{FSpecGNN}) & 1.101 & 7686 & cycle6(e)(MPNN) & 6.106 & 9688 \\
cycle6(e)(Subgraph) & 6.086 & 9692 & cycle6(e)(Local 2-GNN) & 7.565 & 11468 \\
cycle6(e)(Local 2-FGNN) & 4.825 & 19886 & cycle6(e)(\textsc{FSpecGNN}) & 1.198 & 9532 \\
cycle6(g)(MPNN) & 6.210 & 6730 & cycle6(g)(Subgraph) & 6.162 & 11228 \\
cycle6(g)(Local 2-GNN) & 7.622 & 11532 & cycle6(g)(Local 2-FGNN) & 4.916 & 9512 \\
cycle6(g)(\textsc{FSpecGNN}) & 1.085 & 7686 & cycle6(v)(MPNN) & 6.164 & 6730 \\
cycle6(v)(Subgraph) & 5.991 & 11228 & cycle6(v)(Local 2-GNN) & 7.548 & 11532 \\
cycle6(v)(Local 2-FGNN) & 4.776 & 9512 & cycle6(v)(\textsc{FSpecGNN}) & 1.032 & 7686 \\
\bottomrule
\end{tabular}
\end{table}

%%%%%%%%%%%%%%%%%%%%%%%%%%%%%%%%%%%%%%%%%%%%%%%%%%%%%%%%%%%%%%%%%%%%%%%%%%%%%%%
%%%%%%%%%%%%%%%%%%%%%%%%%%%%%%%%%%%%%%%%%%%%%%%%%%%%%%%%%%%%%%%%%%%%%%%%%%%%%%%

%%%%%%%%%%%%%%%%%%%%%%%%%%%%%%%%%%%%%%%%%%%%%%%%%%%%%%%%%%%%%%%%%%%%%%%%%%%%%%%
%%%%%%%%%%%%%%%%%%%%%%%%%%%%%%%%%%%%%%%%%%%%%%%%%%%%%%%%%%%%%%%%%%%%%%%%%%%%%%%

\end{document}